\newtheorem{theorem}{Theorem}[section]
\newtheorem{lemma}[theorem]{Lemma}
\newtheorem{definition}[theorem]{Definition}
\newtheorem{fact}[theorem]{Fact}
\newtheorem{remark}[theorem]{Remark}
\newtheorem{claim}[theorem]{Claim}
\newcommand{\wh}{\widehat}
\newcommand{\wt}{\widetilde}
\newcommand{\eps}{\epsilon}
\newcommand{\R}{\mathbb{R}}
\newcommand{\F}{\mathbb{F}}
\renewcommand{\i}{\mathbf{i}}
\renewcommand{\varepsilon}{\epsilon}
\renewcommand{\tilde}{\wt}
\renewcommand{\hat}{\wh}
\renewcommand{\bar}{\overline}
\renewcommand{\eps}{\epsilon}
\newcommand{\TV}{\mathsf{TV}}
\newcommand{\trun}{\mathsf{trun}}
\newcommand{\bad}{\mathsf{bad}}
\newcommand{\KL}{\mathsf{KL}}
\newcommand{\Img}{\mathsf{Img}}
\newcommand{\lrg}{\mathsf{large}}
\newcommand{\D}{\mathcal{D}}
\newcommand{\mH}{\mathcal{H}}
\newcommand{\X}{\mathcal{X}}
\newcommand{\A}{\mathcal{A}}
\DeclareMathOperator*{\E}{{\mathbb{E}}}
\algnewcommand\algorithmicforeach{\textbf{for each}}
\newcommand*{\RN}[1]{\expandafter\@slowromancap\romannumeral #1@}
\title{Memory Bounds for Continual Learning}
\author{  
Xi Chen \\ Columbia University \\ \texttt{xichen@cs.columbia.edu}
\and Christos Papadimitriou\\ Columbia University\\  \texttt{christos@columbia.edu}
\and Binghui Peng \\ Columbia University \\ \texttt{bp2601@columbia.edu}
}
\begin{document}
\maketitle

\begin{abstract}
Continual learning, or lifelong learning, is a formidable current challenge to machine learning. It requires the learner to solve a sequence of $k$ different learning tasks, one after the other, while 
retaining its aptitude for earlier tasks; the continual learner should scale better than the obvious solution of developing and maintaining a separate learner for each of the $k$ tasks.  We embark on a complexity-theoretic study of continual learning in the PAC framework. We make novel uses of communication complexity to establish that any continual learner, even an improper one, needs memory that grows linearly with $k$, strongly suggesting that the problem is intractable.  When logarithmically many passes over the learning tasks are allowed, we provide an algorithm based on multiplicative weights update whose memory requirement scales well; we also establish that improper learning is necessary for such performance. We conjecture that these results may lead to new promising approaches to continual learning.
\end{abstract}

\clearpage
\newpage

\section{Introduction}
\label{sec:intro}
Machine learning has made dizzying empirical advances over the last decade through the powerful methodology of deep neural networks --- learning devices consisting of compositions of parametrized piecewise linear functions trained by gradient descent --- achieving results near or beyond human performance at a plethora of sophisticated learning tasks in a very broad number of domains. There are several remaining challenges for the field, however, and in order to address these theoretical work seems to be needed.  These challenges include the methodology's reliance on massive labeled data, the brittleness of the resulting learning engines to adversarial attacks, and the absence of robust explainability and the ensuing risks for applying the technique to classification tasks involving humans. This paper is about another important ability that deep neural networks appear to lack when compared to animal learners: {\em They do not seem to be able to learn continually}.

Brains interact with the environment through sensory and motor organs, and survive by learning the environment's ever changing structure and dynamics, constraints and rules, risks and opportunities. This learning activity continues throughout the animal's life: it is {\em continual and life-long.}  We learn new languages, new math and new skills and tricks, we visit new cities and we adapt to new situations, we keep correcting past mistakes and misapprehensions throughout our lives.  

It has long been observed that neural networks cannot do this \cite{mcclelland1995there}.  In fact, when a neural network is tasked with learning a sequence of data distributions, the phenomenon of {\em catastrophic interference} or {\em catastrophic forgetting} is likely to occur: soon, the performance of the device on past tasks will degrade and ultimately collapse.  This phenomenon seems to be quite robust and ubiquitous, see \cite{goodfellow2013empirical,parisi2019continual}.  This inability is important, because many applications of machine learning involve autonomous agents and robots, domains in which one needs to continually learn and adapt \cite{parisi2019continual}.

There is a torrent of current empirical work in continual/life-long learning. There is a multitude of approaches, which can be roughly categorized this way:  First, {\em regularization} of the loss function  is often used to discourage changes in learned parameters in the face of new tasks.  Second, the learning architecture (the parameter space of the device) can be designed so that it  unfolds {\em dynamically} with each new task, making new parameters available to the task and freezing the values of some already learned parameters.  Finally, there is the approach of {\em replay,} influenced by the {\em complementary learning systems} theory of biological learning \cite{mcclelland1995there}, in which previously used data are selected and revisited periodically.  Even though none of these families of approaches seems to be particularly successful currently, our positive results seem to suggest that this latter replay approach may be the most promising.

Theoretical work on PAC learning over the past four decades \cite{shalev2014understanding} has shed light on the power and limitations of learning algorithms.  The powerful tool of the VC-dimension \cite{vapnik2015uniform} captures nicely the training data requirements of the learning task, while it also serves as a useful proxy for the number of parameters required of any neural network for solving a particular PAC learning task.
The PAC theory's predictions of computational intractability, on the other hand, have been upended by the avalanche of deep nets, and the surprising aptness of gradient descent to approximate exquisitely a certain broad class of non-convex minimization problems.
{\em This paper is a complexity analysis of continual learning in the context of PAC learning.}

How can one model continual learning in the PAC framework?  Assume a sequence of $k$ learning tasks, that is, a sequence of distributions over labeled data from  corresponding hypothesis classes. Our goal is to learn these $k$ tasks sequentially, starting from the first and proceeding, and end up with a learning device that performs well on all.  That is, the device with high probability tests well in all tasks --- this can be modeled by testing at the end on a randomly chosen task from those already learned.  

Obviously, this problem can be solved by training $k$ different devices.  The interesting question is, can it be done with fewer resources, by reusing experience from past learning to achieve the next task?  {\em Is there quantity discount in continual learning?}  Our results strongly suggest, in a qualified way, a negative answer.  Our main lower bound result is for {\em improper} learning --- the more powerful kind, where the learner is allowed distributions outside the hypothesis class --- this is crucial, since deep neural networks can be considered improper. 

Our main result bounds from below the amount of {\em memory} needed for continual learning.  Memory has been identified at a recent continual learning workshop \cite{workshop2021} as one of the main bottlenecks in the practice of continual learning. If we know that there is no ``quantity discount'' in memory, this suggests that a new parameter space is needed to accommodate new learning tasks --- in effect, one essentially needs a new device for each task to avoid catastrophic forgetting.

On the positive side, 
we show that, if we are allowed to make multiple passes over the sequence of tasks, an algorithm based on the multiplicative weight update competition of several learners (inspired by the algorithm in \cite{balcan2012distributed}) requires significantly less memory to successfully learn the distributions.  At a first glance, multiple pass algorithms may seem of limited value to life-long learning --- after all, you only live once ... But in application domains such as robotics or autonomous driving, the revisiting of similar environments may be routine.  Also, the positive result does suggest that the replay strategy in continual learning explained above may achieve some less exacting goal than learning all distributions --- for example, passing the $k+1$ distributions test with some high probability.  We develop this idea further in the Discussion section.

Finally, we show an {\em exponential} separation result establishing the power of improper learning algorithms in the multi-pass case: We show that any {\em proper} learning algorithm requires at least polynomial number of passes to match the performance of the best improper learner.

\subsection{Our results}

Let $\mH$ be a hypothesis class with VC dimension $d$. Let $\X$ be its data universe and let $b=\log_2 |\X|$ be its description size (so every hypothesis $h\in \mH$ maps $\X$ to $\{0,1\}$ and every data point takes $b$ bits to store).
We are interested in 
  continual learners that make one pass
(or multiple passes) on a sequence of $k$ learning tasks
$\D_1,\ldots,\D_k$, each of which is a data distribution (i.e., supported on $\X\times \{0,1\}$).
When making a pass on $\D_i$,
  the learner can draw as many samples as it wants from $\D_i$ and uses them to update its memory.
We consider the realizable setting where there exists a hypothesis $h\in \mH$ that is consistent with all $\D_i$'s. 
The goal of a $(\eps,\delta)$-continual learner is to recover a function $h:\X\rightarrow \{0,1\}$ that,
with probability at least $1-\delta$,
  has no more than $\eps$ error
  with respect to every distribution $\D_i$.
We say a learner is \emph{proper} if the function $h$ is always a hypothesis in $\mH$, and we will consider both proper and improper learners.
We are interested in minimizing the memory of continual learners.
Formal definitions of the model can be found in Section \ref{sec:pre}.


Our main result is a lower bound, 
showing that $\Omega(kdb/\eps)$-bit  memory is necessary for any general improper continual learning algorithm.
In other words, there is no gain in learning resources from the sequential process: 

\begin{restatable}[]{theorem}{onePass}
\label{thm:lower-one}
There exists a sufficiently large constant $C_0$ such that the following holds. 
For~any $\eps \in (0, 0.01]$ and any positive integers $k,d$ and $b$ with $b\geq C_0\log (kd/\eps)$, there is a hypothesis class $\mH$ of VC dimension $2d$  over a  universe $\X$ of description size $b$
such that any $(\eps, 0.01)$-continual learner for $\mH$ over a sequence of $k+1$ tasks 
requires 
$\Omega(kdb/\eps)$ bits of memory.
\end{restatable}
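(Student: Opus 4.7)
The plan is to prove the memory lower bound by reducing continual learning to a multi-party one-way communication problem and applying an information-theoretic direct-sum argument.

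First, I would exhibit a hypothesis class $\mH$ of VC dimension $2d$ over a universe $\X$ of size $2^b$, together with a distribution $\mu$ over realizable $(k{+}1)$-task sequences. Each of the first $k$ tasks $\D_i$ is an i.i.d.\ draw parameterized by a random description $T_i$, and the $(k{+}1)$-st task is a probe whose samples encode a uniformly random query index $j^{*} \in [k]$, so that the continual learner's final output is forced to classify a uniformly random previously-seen task $\D_{j^{*}}$ correctly. Concretely, $T_i$ would encode a uniformly random subset $S_i \subseteq \X$ of size $\Theta(d/\eps)$ together with labels consistent with a single hidden hypothesis $h \in \mH$; this gives the target entropy $H(T_i) \geq \log \binom{2^b}{d/\eps} = \Theta(db/\eps)$, using the assumption $b \geq C_0 \log(kd/\eps)$.

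Next, I would model any $(\eps,0.01)$-continual learner with memory $m$ as a one-way $(k{+}1)$-party protocol of message size $m$: party $i$ takes as many samples from $\D_i$ as desired, updates the shared memory $M_{i-1} \to M_i$, and hands it off. Since the $T_i$'s are independent under $\mu$, the chain rule and the inequality $I(M_k; T_i \mid T_{<i}) \geq I(M_k; T_i)$ (valid because $T_i \perp T_{<i}$) yield
\[
m \ \geq\ H(M_k) \ \geq\ I(M_k;\, T_1, \ldots, T_k) \ \geq\ \sum_{i=1}^{k} I(M_k;\, T_i).
\]
I would then prove a single-task lemma: conditioned on the probe revealing $j^{*} = i$, the learner's output function (a deterministic function of $M_k$ and the samples from $\D_{k+1}$) must agree with $h$ on a $(1-\eps)$-fraction of the random subset $S_i$, and this forces $I(M_k; T_i) = \Omega(db/\eps)$. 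Summing over $i \in [k]$ gives $m = \Omega(kdb/\eps)$.

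The main obstacle is the construction of $\mH$ together with the single-task lemma. Since VC dimension $2d$ caps $|\mH| \leq 2^{O(db)}$ by Sauer--Shelah, the label content of any single task is only $O(db)$ bits; the extra factor $1/\eps$ must therefore arise from the entropy of the random support $S_i$. The delicate issue is that an improper learner is in principle free to output a compact predictor such as a single $h \in \mH$, which would succeed on every support without encoding any $S_i$. The construction must therefore be arranged so that a learner with $o(kdb/\eps)$ memory cannot, under the sequential processing, \emph{identify} such a compact $h$: that is, while each task's samples reveal $h|_{S_i}$, the memory bottleneck together with the lack of advance knowledge of the probe index $j^{*}$ forces the learner to retain per-task information that in aggregate has entropy $\Omega(kdb/\eps)$. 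Establishing this rigorously against adaptive, randomized, improper learners --- and ensuring the direct-sum step survives the introduction of the probe task $\D_{k+1}$ --- is the crux of the argument.
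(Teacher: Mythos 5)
Your high-level plan --- reduce to a one-way communication problem, then apply a chain-rule direct sum over the $k$ tasks --- is in the same spirit as the paper's, and you correctly identify the crux: an improper learner is in principle free to remember only a compact consistent $h\in\mH$, which costs only $O(db)$ bits, so the per-task information claim $I(M_k;T_i)=\Omega(db/\eps)$ is exactly what must be established. But the proposal leaves this single-task lemma entirely unproven, and with the construction you sketch (``a uniformly random subset $S_i\subseteq\X$ of size $\Theta(d/\eps)$ with labels from a hidden $h$'') the lemma is in fact \emph{false}. The paper points this out explicitly as the key obstruction in the high-dimensional regime: if Alice's hard input is a random point (or set) in a large universe, the parties can publicly hash the universe into $O(1)$ buckets and Alice reveals only the bucket containing her element, which lets a downstream tester succeed while conveying $o(\log|\X|)$ bits. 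In other words, there is no entropy argument that forces the memory to record $\Omega(db/\eps)$ bits per task unless the hypothesis class is engineered to defeat such compression. The paper's fix is to use \emph{line functions} over $\F_p^2$ as the atomic combinatorial object, and to prove a genuinely new asymmetric three-party lower bound: for the single-cell problem, either Alice sends $\Omega(\log p)$ bits to Bob, or Bob must send $\Omega(\sqrt p)$ bits to Charlie. Your proposal has no analogue of this dichotomy --- it tries to extract the full $\Omega(kdb/\eps)$ from the Alice side alone, and that is precisely the gap.

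There are two further structural issues. First, your $(k{+}1)$-party sequential model does not cleanly capture what the paper calls the ``third party'' idea: the point is that Charlie, who tests the output, sees only the message from Bob (the learner's final memory), and this is what forces the \emph{representation} of the improper output to be small. Your ``probe'' task plays a similar role informally, but your information-theoretic inequalities only refer to $M_k$ vs.\ the $T_i$'s and never invoke a constraint on the channel from the learner to the tester; the paper needs both constraints (and the Jain et al.\ compression lemma to move between the direct-sum picture and the single-cell picture). Second, your direct-sum step $I(M_k;T_1,\ldots,T_k)\ge\sum_i I(M_k;T_i)$ requires the $T_i$ to be independent, but you define the $T_i$ to share ``labels consistent with a single hidden hypothesis $h$''; if $h$ is itself random and correlates the tasks, the superadditivity you invoke no longer holds as stated. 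The paper sidesteps this by making all of Alice's labels trivially $1$ (so her tasks are i.i.d.\ random lines) and injecting the hypothesis-determining information only via the last task $\D_{k+1}$, which is held by a separate party.
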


Next we show that allowing a continual learner to make multiple passes can significantly reduce the amount of memory needed.
More formally, we give a 
{\em boosting} algorithm that 
  uses 
$(k/\eps)^{O(1/c)}db$ bis of memory only by making $c$ passes over the sequence of $k$ tasks. Note that, after a logarithmic number of passes, the memory resource almost matches the requirement of a single task.

\begin{restatable}[]{theorem}{multiPassAlgo}
\label{thm:upper}
Let $\eps\in (0,1/10]$ and $k, d, b, c$ be four positive integers.
Let $\mH$ be any hypothesis class of VC dimension $d$ over a data universe $\X$ of description size $b$.
There is a $c$-pass $(\eps, 0.01)$-continual learner for $\mH$ over
   sequences of $k$ tasks 
that has memory requirement and sample complexity
$$
\left(\frac{k}{\eps}\right)^{2/c}\cdot db\cdot \text{\em poly}\big(c,\log k/\eps\big)\quad\text{and}\quad
\left(\frac{k}{\eps}\right)^{4/c}\cdot \frac{k+d}{\eps}\cdot \text{\em poly}\big(c,\log kd/\eps \big),
$$
respectively.
Whenever $c \geq 4 \log (k/\eps)$,
the memory requirement becomes $db\cdot \text{\em polylog}(kd/\eps)$ and the sample complexity becomes $\tilde{O}((k+d)/\eps)$, both are optimal up to a polylogarithmic factor. 
\end{restatable}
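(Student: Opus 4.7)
The plan is to adapt the multiplicative-weights boosting scheme of Balcan, Blum, Fine, and Mansour to the continual setting. The meta-algorithm maintains a weight vector $\lambda \in \Delta_k$ over the $k$ tasks (initially uniform) and iterates: (i) collect a training batch distributed as the mixture $\sum_i \lambda_i \D_i$ by drawing $\lambda_i n$ samples from each $\D_i$; (ii) PAC-learn in the realizable setting a base hypothesis $h$ consistent with this batch, with $n = \widetilde{O}(d/\beta)$ samples yielding mixture error at most $\beta$; (iii) estimate $\mathrm{err}_i(h)$ on each $\D_i$ using a fresh held-out batch; and (iv) apply the update $\lambda_i \leftarrow \lambda_i \cdot \exp(\eta \cdot \widehat{\mathrm{err}}_i(h))$. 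A standard MW regret analysis shows that after $T$ rounds the majority vote $\mathrm{maj}(h_1,\ldots,h_T)$ has error at most $\eps$ on every $\D_i$, provided $T \cdot \beta^2 = \widetilde{\Omega}(\log k)$ and the per-task estimates are accurate to within $\eps^2/\log k$.

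To fit this scheme into $c$ passes, the plan is to pack many boosting rounds into each pass by using the memory budget $M = (k/\eps)^{2/c}\, db \cdot \mathrm{polylog}$ to store up to $(k/\eps)^{2/c}$ base hypotheses together with their running error statistics. Within a single pass, as the algorithm visits $\D_i$ in order it simultaneously (a) draws training samples from $\D_i$ with proportions dictated by the current $\lambda$, for the base learners of this pass; (b) draws test samples to estimate the errors on $\D_i$ of hypotheses constructed in the previous pass; and (c) accumulates the per-task counters needed to update $\lambda$ at the end of the pass. Across $c$ passes this yields $T = \Theta\big(c\,(k/\eps)^{2/c}\big)$ boosting rounds in total; choosing $\beta = (k/\eps)^{-\Theta(1/c)}$ makes the base learners strong enough that this $T$ meets the MW requirement above.

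The main obstacle is preserving the correctness of the MW updates under the pass-by-pass constraint, because the update corresponding to round $t$ requires $\widehat{\mathrm{err}}_i(h_t)$ for every $i$, and this cannot be completed until the next pass has visited every task. I would resolve this by a one-pass-delayed schedule: MW updates for the hypotheses built in pass $r$ are applied only at the start of pass $r+1$, using the statistics gathered during pass $r$. This loses only a constant factor in the regret bound. The accompanying uniform-convergence statement --- that every $\widehat{\mathrm{err}}_i(h_t)$ is within $\eps^2/\log k$ of the true error --- follows from a standard VC argument on $\mH$ together with a union bound over the $O(kT)$ task-hypothesis pairs, giving per-round sample cost $\widetilde{O}((k+d)/\eps)$. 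Multiplying by $T$ and an extra $(k/\eps)^{2/c}$ factor from importance-reweighting to the mixture yields the claimed $(k/\eps)^{4/c}(k+d)/\eps \cdot \mathrm{poly}(c,\log(kd/\eps))$ sample complexity.

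Finally, whenever $c \geq 4\log(k/\eps)$ we have $(k/\eps)^{2/c} \leq \sqrt{2}$, so the memory collapses to $db \cdot \mathrm{polylog}(kd/\eps)$ and the sample complexity to $\widetilde{O}((k+d)/\eps)$. Both are optimal up to polylogarithmic factors: storing a single hypothesis already requires $\Omega(d\log|\X|) = \Omega(db)$ bits, and realizable PAC learning of a single task requires $\Omega(d/\eps)$ samples, with an additional $\Omega(k)$ samples needed to certify accuracy on $k$ distinct tasks.
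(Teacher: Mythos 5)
The central step in your plan is wrong: you run multiplicative weights over the \emph{tasks} (a weight vector $\lambda \in \Delta_k$), learn a base hypothesis with error $\beta$ on the $\lambda$-mixture $\sum_i \lambda_i \D_i$, and then take a majority vote. That scheme cannot amplify accuracy below the base error $\beta$. To see this concretely, take $k=1$: the mixture is always $\D_1$, the per-task update is vacuous, and an adversarial ERM oracle may return the \emph{same} hypothesis that mislabels a fixed $\beta$-fraction of $\D_1$ in every round, so the majority vote also has error $\beta$. Since you set $\beta = (k/\eps)^{-\Theta(1/c)}$, which is $\gg \eps$ (indeed $\Theta(1)$ in the regime $c \geq 4\log(k/\eps)$ that gives the optimal bound), the output fails the $\eps$-accuracy requirement. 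This is precisely the ``treat each distribution as a black box'' approach of \cite{blum2017collaborative,chen2018tight,nguyen2018improved} that the paper notes is intrinsically stuck at $\tilde{O}(db/\eps + k)$ memory. Relatedly, your claimed requirement $T\beta^2 = \tilde\Omega(\log k)$ is the condition for weak learners with \emph{edge} $\beta$ (error $1/2 - \beta$), not error $\beta$, and it has no $\log(1/\eps)$ dependence, so it is not the right criterion for any of these variants.

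The paper's algorithm instead runs MWU over \emph{data points}: it implicitly maintains, for each $x$ in the support of $\D_i$, a weight $\exp(\eta \sum_{\tau < t}\mathbf{1}[h_\tau(x) \neq y])$, so that the base learner in round $t$ is trained on a genuinely reweighted (harder) distribution and is forced to correct earlier mistakes. The potential $\Phi_t = \frac{1}{k}\sum_i \sum_x \mu_{\D_i}(x)\exp(\eta \sum_{\tau<t}\mathbf{1}[h_\tau(x)\neq y])$ drops by a factor $\leq 2(1-\alpha)$ per round, which is why $c$ base learners of accuracy $\alpha = \Theta((k/\eps)^{-2/c})$ yield an $\eps$-accurate majority vote. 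This also makes the crucial technical problem the one your proposal never addresses: how to draw samples from the per-point reweighted distribution $\D_i$ without materializing a large empirical set. The paper's solution is the combination of rejection sampling against the implicitly maintained weights with an iterative top-$O(\eps/c)$-quantile \emph{truncation} of the reweighted distribution (the \textsc{EstimateQuantile} / \textsc{TruncatedRejectionSampling} subroutines and the hierarchical sets $\X_{i,t,\tau}$). The truncation keeps both the rejection overhead and the weight-estimation variance bounded by $\mathrm{poly}(1/\eps)$ rather than $k^{O(1)}/\eps^{O(1)}$, and its cumulative truncation loss is only $O(\eps)$ per task. Without these ingredients your importance-reweighting argument has no control on the rejection overhead, so even if the correctness issue above were fixed, the claimed $(k/\eps)^{4/c}(k+d)/\eps$ sample complexity would not follow.
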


Our continual learner in
  Theorem \ref{thm:upper} is \emph{improper}, and our last result shows that this is necessary:
We prove that any proper $c$-pass continual learner needs (roughly) $kdb/(c^3\eps)$ memory. 
This gives an exponential separation between proper and improper continual learning in terms of $c$.

\begin{restatable}[]{theorem}{multiPassLower}
\label{thm:multi-pass-lower}
There exists a sufficiently large 
  constant $C_0$ such that the following holds.
For any $\eps\in (0,1/4]$ and any positive integers $c,k,d$ and $b$ such that $d\ge C_0 c$ and $b\ge C_0\log (kd/(c\eps))$, 
there is~a hypothesis class $\mH$ of VC dimension $d$ defined over a  universe $\X$ of description size $b$
such that any $c$-pass $(\eps, 0.01)$-continual learner for $\mH$ over $2k$ tasks 
requires 
$(1/c^2)\cdot \tilde{\Omega}(kdb/(c \eps))$ memory.
\end{restatable}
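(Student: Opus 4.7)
The plan is to establish Theorem \ref{thm:multi-pass-lower} via a reduction from a multi-round communication complexity problem, invoking round-hierarchy lower bounds for pointer chasing and amplifying via a direct-sum argument across tasks. The $c^{-3}$ loss relative to the single-pass bound of Theorem \ref{thm:lower-one} corresponds exactly to the round-vs-one-way gap for pointer chasing plus the standard conversion from total communication to per-pass memory.

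First, I would build a hypothesis class $\mH$ of VC dimension $d$ with a \emph{code-like} structure: a family $\{h_s : s \in S\}$ of codeword hypotheses with pairwise Hamming distance substantially larger than $2\eps$ on the supports of the adversarial task distributions. The key property is that any $h \in \mH$ with error at most $\eps$ on every $\D_i$ must equal a unique hidden target $h^* \in \mH$. This is where properness is crucially used: an improper learner can hedge with mixtures or averages, but a proper learner is forced into exact identification of $h^*$, which we arrange to carry $\tilde\Omega(db)$ bits of target information per sub-instance of the embedded communication problem. Matching the VC dimension $d$ and description size $b$ to this codeword structure uses the $b\ge C_0\log(kd/(c\eps))$ and $d\ge C_0 c$ conditions in the theorem statement.

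Second, I would simulate a $c$-pass proper continual learner as a $c$-round two-party (or constant-party) communication protocol. Partitioning the $2k$ tasks into blocks that alternate between the parties, each pass over the sequence becomes one round of communication, with the learner's memory state at the block boundary serving as the transmitted message. To amplify the lower bound, I would encode $\Theta(k/c)$ independent copies of a pointer-chasing instance (of depth $c+1$ and universe size $\poly(d,b,1/\eps)$) into the task sequence; recovering $h^*$ up to error $\eps$ then requires solving every parallel copy.

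Third, I would combine the classical round-hierarchy lower bound for pointer chasing (whose $c$-round randomized complexity is $\tilde\Omega(n/c)$ for instance size $n=\Theta(db/\eps)$) with a direct-sum argument across the $k/c$ parallel copies, yielding a total communication lower bound of $\tilde\Omega(kdb/(c^2\eps))$. Since a $c$-pass learner with memory $s$ sends only $O(cs)$ bits in the simulation, this gives $s=\tilde\Omega(kdb/(c^3\eps))=(1/c^2)\cdot\tilde\Omega(kdb/(c\eps))$, as claimed.

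The main obstacle will be the simultaneous construction of $\mH$ with all of the desired properties: VC dimension exactly $d$, description size $b$, a code-like separation robust to $\eps$-approximation, and alignment with the coordinate structure of the pointer-chasing sub-instances so that the reduction is both efficient (no super-polynomial blow-up in $|\X|$) and sound (the $\eps$-approximate proper output really does decode the hidden pointers of every sub-instance). A secondary subtlety is the direct-sum step under the continual-learning simulation: the learner maintains a single memory state shared across all parallel sub-instances, so amortization arguments must be ruled out using information complexity, forcing each sub-instance to contribute its $\Omega(db/(c^3\eps))$ share to the memory independently.
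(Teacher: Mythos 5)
Your proposal shares two genuinely important ideas with the paper: reducing from pointer chasing (using the round-hierarchy gap), and using an error-correcting-code structure so that an $\eps$-approximate \emph{proper} hypothesis can be exactly decoded. The paper indeed uses Reed--Solomon codes for exactly the reason you articulate, and indeed converts $c$ passes with $m$-bit memory into $O(cm)$ total communication. However, the reduction strategy you propose is substantially different and has a real gap.

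The paper does \emph{not} use a direct-sum argument over $\Theta(k/c)$ parallel copies. Instead it embeds a \emph{single} pointer-chasing instance of size $n=kdb$, where the apparent scaling by $1/\eps$ is absorbed into $k$ itself (the proof sets $\alpha = \lfloor 1/(4\eps)\rfloor$ and $k = k'\alpha$, then pads each task distribution over $\alpha$ consecutive blocks so that $d$ wrong points in any block already exceed $\eps$ error). The universe $[n]$ naturally factors as $[k]\times[d]\times[b]$, the hypothesis class is built around $2c+1$ ``special blocks'' each carrying $d$ Reed--Solomon-encoded tuples, and the whole instance hangs together as one big chain of pointers traversed across Alice's $Y$-blocks and Bob's $Z$-blocks. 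This is why the pointer chasing lower bound $\Omega(n/c - c\log n)$ alone directly gives the claimed bound after the parameter substitutions; no amplification step is needed.

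In contrast, your route requires a direct-sum theorem for multi-round pointer chasing when all copies share one memory/communication channel. You flag this as a ``secondary subtlety,'' but it is actually the central missing step: total communication $O(cs)$ is a global budget, and to force each of your $k/c$ copies to contribute $\tilde\Omega(db/(c\eps))$ to it you need a nontrivial information-complexity argument against amortization across parallel pointer-chasing instances inside a single round-limited protocol. That is technically substantial and is not carried out here. A second, more structural gap is that you never describe how to pack $\Theta(k/c)$ independent pointer-chasing instances, each over a universe of size $\poly(d,b,1/\eps)$, into a hypothesis class whose VC dimension stays $\Theta(d)$ \emph{and} whose proper output lets Bob decode all copies simultaneously from an $\eps$-approximate hypothesis; the paper's construction with $2c+1$ special blocks per hypothesis is carefully tailored to a single chain and does not obviously generalize to many independent chains without inflating the VC dimension. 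If you want to pursue your route, the direct-sum lemma and the explicit parallel embedding both need to be supplied; alternatively, switching to the paper's single-big-instance embedding eliminates both obstacles at once.
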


\subsection{Technique overview}

\subsubsection{Lower bounds for improper learning}
We start by explaining the main ideas in our lower bound proof   of Theorem \ref{thm:lower-one}.

\paragraph{The third party.} \hspace{-0.2cm}Communication complexity is a commonly used approach for establishing lower bounds in learning theory.
Some of recent examples include \cite{dagan2019space} and \cite{kane2019communication}, both of~which make reductions from two-party communication problems.  
A natural formulation of our continual learning setting as a two-party communication problem   is to split the $k$ tasks $\D_1,\ldots,\D_k$ as inputs of Alice and Bob, and the goal is to lowerbound the one-way communication complexity for Bob to output a function that has small loss for every $\D_i$.
However, this two-party approach is doomed to fail since Alice can just send a hypothesis function $h\in \mH$ that is consistent with her tasks (by Sauer-Shelah, each function in $\mH$ can be described using roughly $db$ bits).
After receiving $h$, Bob can output a function $h'$ such that for any $x\in \X$: 
(1) If $x$ is in the support of any $\D_i$ held by Bob, $h'(x)$ is set according to $\D_i$;
(2) Otherwise, $h'(x)=h(x)$.
It is easy to verify that $h'$ achieves $0$ loss for all $\D_i$ (given that we are in the realizable setting).

The reason why the two-party approach fails is that it fails to capture the following key challenge behind improper learning with limited memory: The function returned by the learner at the end, while can be any function, must have a concise representation since it is determined by the learner's limited memory.
\emph{Our main conceptual contribution in 
  circumventing this difficulty is the introduction of a third party: Charlie as a tester.} 
While $\D_1,\ldots,\D_k$ are split between Alice and Bob as before,  
  Charlie receives a data point drawn from a certain mixture of distributions of Alice and Bob and needs to return its correct label after receiving the message from Bob. 
This three-party model captures the challenge described above since the number of bits needed to represent the function (in the message from Bob to Charlie) is counted in the communication complexity. 
We believe that the introduction of a third party in the communication problem
  may have further applications in understanding communication\hspace{0.03cm}/\hspace{0.03cm}lower bounds for improper learning problems in the future. 


\paragraph{A warm-up: Low-dimensional data.} 
\hspace{-0.2cm}We will consider $\eps$ as a small positive constant throughout the overview for convenience. 
We start with a warm-up for the case when the data universe is low-dimensional, i.e., when
  $b\coloneqq \log_2 \X=O(\log (kd ))$,
  and we aim for a memory lower bound of $\Omega(kd)$.


The basic building block is the following one-way three-party communication problem:
\begin{flushleft}\begin{itemize}
    \item Let $B=kd$. Alice holds a single element $a\in [B]$ with label $1$;
    \item Bob holds $B/2$ elements from $[B]\setminus \{a\}$ with label $0$; 
    \item Charlie receives an element that is either $a$ with probability $1/2$ 
      or drawn uniformly from Bob's elements with probability $1/2$,
      and needs to output its label.
\end{itemize}\end{flushleft}
It is not difficult to show that either Alice sends $\Omega(1)$ bits 
  of information to Bob, or Bob needs to send $\Omega(B)$ bits of information
  to Charlie, in order for Charlie to succeed with say probability $2/3$. 

Next we construct a hypothesis class
  $\mH$ and use it to define a communication problem 
  that (1) can be reduced to the continual learning of $\mH$ and (2)
  can be viewed as an indexing version of the direct sum of 
  $kd$ copies of the building block problem described above.
The data universe $\X$ of $\mH$ is $[k]\times [d]\times [B]$ (so $b=O(\log (kd))$ given that $B=kd$).
Each hypothesis $h_{i,A}:\X\rightarrow \{0,1\}$ in $\mH$ is specified by an $i\in [k]$ and $A=(a_1,\ldots,a_d)\in [B]^d$ as follows:
$$
h_{i,A}(x)=\begin{cases} 1 & \text{if $x_1\ne i$}\\
1 & \text{if $x_1=i$ and $x_3=a_{x_2}$}\\
0 & \text{otherwise (i.e., $x_1=i$ and $x_3\ne a_{x_2}$)}
\end{cases}
$$
It is easy to show that $\mH$ has VC dimension $O(d)$. In the three-party communication problem:
\begin{flushleft}\begin{itemize}
\item Alice holds $(a_{i,j}: i\in [k],j\in [d])$, where each $a_{i,j}$ is drawn from $[B]$ independently and uniformly at random,
  and she views her input as a sequence of $k$ data distributions $\D_1,\ldots,\D_k$, where $\D_i$ is uniform over $(a_{i,j}: j\in [d])$
  and all labels are $1$.
\item Bob holds $i^*$ drawn uniformly random from $[k]$ and 
  $(A_j: j\in [d])$, where each $A_j$ is a size-$(B/2)$ subset 
  drawn from $[B]\setminus \{a_{i^*,j}\}$ independently and uniformly at random. Bob views his input as a data distribution $\D_{k+1}$ that is uniform
  over $\cup_j A_j$ and all labels are $0$. (Note that $\D_1,\ldots,\D_{k+1}$ are realizable in $\mH$.)
\item Charlie holds $i^*,j^*$ and $r^*$, where $j^*$ is drawn uniformly from $[d]$ and $r^*$ is $a_{i^*,j^*}$ with probability $1/2$ and 
  is drawn uniformly from $A_{i^*,j^*}$ with probability $1/2$,
  and Charlie needs to return the label of $(i^*,j^*,r^*)$.
\end{itemize}
\end{flushleft}

On the one hand, there is an intuitive reduction from this three-party 
  communication problem to continual learning of $\mH$:
  Alice and Bob simulates a continual learner for $\mH$ over
  their $k+1$ data distributions $\D_1,\ldots,\D_{k+1}$ and Bob 
  sends the memory of the learner at the end to Charlie. 
Charlie recovers a function $h:\X\rightarrow \{0,1\}$ from
  Bob's message and uses it to return the label of $(i^*,j^*,r^*)$.
As a result, communication complexity of the three-party communication problem is a lower bound for memory requirement of the continual learning of $\mH$ (of VC dimension $O(d)$) over $k+1$ tasks.  

On the other hand, to see (informally) why $\Omega(kd)$ is a natural lower bound for the communication complexity, we note that Alice has no idea about $i^*$ and $j^*$ so the amount of information she sends about $a_{i^*,j^*}$ is expected to be $o(1)$ 
  if her message to Bob is of length $o(kd)$ (this can be made formal using 
  information-theoretic direct-sum-type arguments).
When this happens, Bob must send $\Omega(B)$ $=\Omega(kd)$ bits 
  (even if Bob knows $j^*$) in order for Charlie to have a good success probability.


\begin{figure}[!t]
    \centering
    \begin{subfigure}{0.49\textwidth}
        \includegraphics[width=\textwidth]{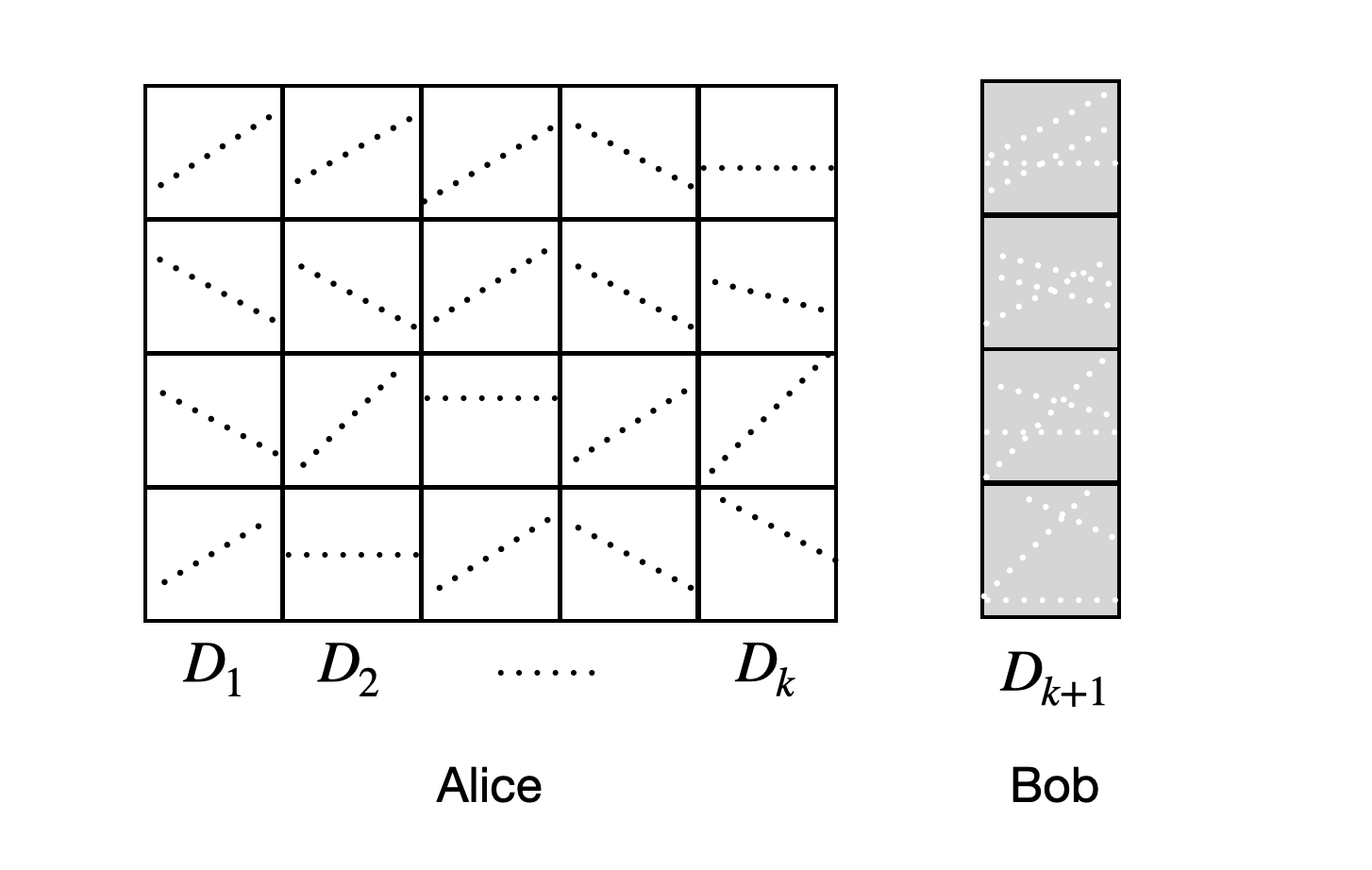}
    \end{subfigure}
    \begin{subfigure}{0.49\textwidth}
        \includegraphics[width=\textwidth]{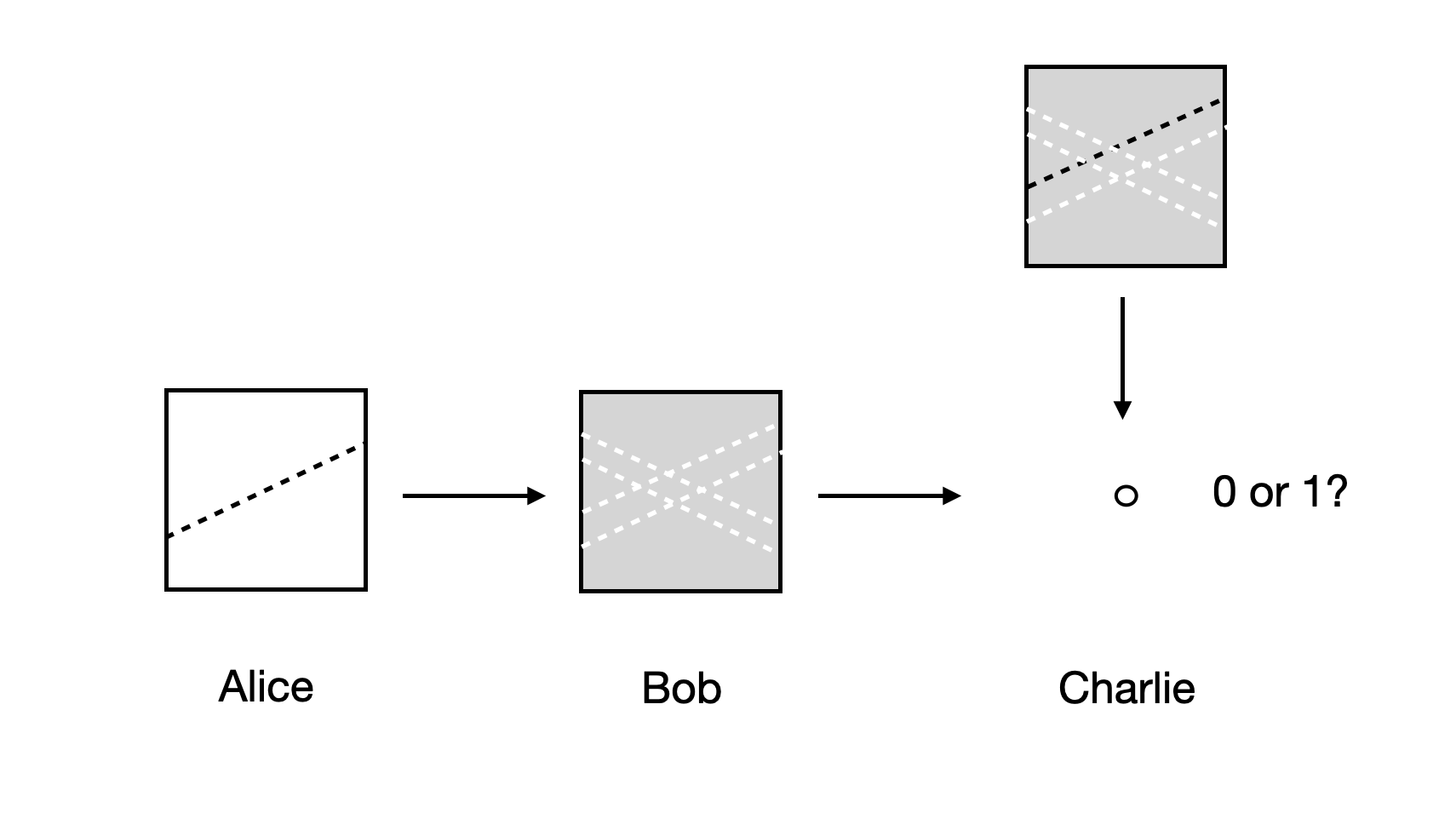}
    \end{subfigure}
    \caption{Hard instances of improper learning. A black dot indicate label-$1$ data, white background/dot means missing data, gray background means label $0$ data. The left figure illustrates the hard instance of continual learning. The right figure presents the basic building block of the asymmetric communication problem.}
    \label{fig:one-pass}
\end{figure}

\paragraph{Handling high-dimensional data.}
\hspace{-0.2cm}The key technical challenge behind the proof of Theorem \ref{thm:lower-one} comes from the high-dimensional data case, i.e., when $b\coloneqq \log_2 |\X| \gg \log (kd)$, in which case~our target lower bound is $\Omega(kdb)$ instead of $\Omega(kd)$. 
A natural approach is to change the parameter $B$ in the 
  building block communication problem of the low-dimension case
  to (roughly) $2^b$ and show that it has a communication lower bound of $\Omega(b)$.
However, this is \emph{not true} as one can randomly hash Alice's input element $a\in [B]$. For example, the players can use public randomness to partition 
  $[B]$ into $100$ random sets $B_1,\ldots,B_{100}$ of equal size and Alice sends $O(1)$ bits to Bob to reveal the $B_i$ that contains $a$.
Bob can forward $i$ to Charlie and Charlie returns $1$ if his element lies in $B_i$ and $0$ otherwise.
Given that Bob's set most likely has small overlap with $B_i$, 
  this simple $O(1)$-bit protocol succeeds with high probability,
  and it holds no matter how large $B$ is.
The above issue does not come up in the low-dimensional case
  since we only need an $\Omega(1)$ lower bound for Alice's message there.
There is indeed a deep connection with the so-called {\em representational dimension} of a hypothesis class (see \cite{beimel2019characterizing, feldman2015sample} for the exact definition).
For example, for every hypothesis class $\mH$, under uniform data distribution, there always exists an $\eps$-net of $\mH$ with size $O(\eps^{-d})$ and it is independent of the representation size $b$ of its data universe \cite{haussler1995sphere}.

We overcome this technical challenge by imposing combinatorial structures on the hypothesis class through the {\em line functions.} 
A line function $f_{a,b}$ is defined over a finite field $\F_{p}^{2}$ and  is specified by $a, b \in \F_{p}$; it labels $(x, y) \in \F_{p}^{2}$ $1$ iff it lies on the line $ax + y \equiv b \pmod p$.
The representational dimension of line functions is known to be $\Omega(\log p)$ \cite{aaronson2004limitations, feldman2015sample} which matches 
  the representation size $2\log_2 p$.
More precisely, one needs to reveal $\Omega(\log p)$ bits about $(a,b)$ in order for a second party to distinguish whether a point is drawn uniformly from the line or uniformly from $\F_p^2$.
 

However, a lower bound on the representational dimension is far from enough for our purpose. We need a lower bound for the following three-party one-way communication problem as our building block (which plays a similar role as the building block problem in the low-dimensional case):
\begin{flushleft}\begin{itemize}
    \item Alice holds a line over $\F_{p}^{2}$. Bob holds a set of points of $\F_{p}^2$ in which points of exactly $0.2p$ lines are excluded, one of which is the line of Alice.
    \item Charlie receives a query point in $\F_p^2$ that is either (1) drawn uniformly from the line of Alice with probability $1/2$, or (2) drawn from Bob's set uniformly with probability $1/2$, and the goal is to tell which case it is.
%
\end{itemize}\end{flushleft}
We prove that in order for Charlie to succeed with high probability, either Alice needs to send  a message of length $\Omega(\log p)$ to Bob, or Bob needs to send a message of length $\Omega(p^{0.5})$ to  Charlie. 
We leave the technical detail of this lower bound to Section \ref{sec:lower-one-general}, and outline the key intuition here:
(i) The advantage of the line function is that Charlie needs $\Omega(\log p)$ bits of information about Alice's line to succeed with high probability; (ii)
If Alice's message has length $o(\log p)$, then many of Bob's $\Theta(p)$ candidate lines are equally likely to be Alice's line, in which case Bob must
  send 
a large amount of information to Charlie to satisfy (i).

The rest of the proof is similar to the low-dimensional case. 
We introduce a hypothesis class $\mH$ and a three-party communication problem
  that (1) can be reduced to the continual learning of $\mH$ and (2) can be viewed as an indexing version of the direct sum of $kd$ copies of the building block problem.
See Figure \ref{fig:one-pass} for an illustration for hard instances of the communication problem.
Our lower bound is then obtained by reducing to 
  that of the building block problem via standard compression and 
  direct-sum arguments.

\subsubsection{The multi-pass algorithm}
When a continual learner can make $c$ passes over the sequence of tasks, we show that its memory requirement can be significantly reduced to (roughly) $(k/\eps)^{O(1/c)}db$. In this overview, we focus on the case when $c= \Theta(\log(k/\eps))$.

Our algorithm builds on the idea of multiplicative weights update (MWU) \cite{arora2012multiplicative} or boosting \cite{schapire1990strength}, which has been used before in similar but different contexts.
\begin{flushleft}\begin{itemize}
    \item \cite{blum2017collaborative, chen2018tight,nguyen2018improved} focus on the sample complexity of multi-task learning. 
    By viewing each distribution $\D_{i}$ ($i \in [k]$) as an expert, one can run MWU over $O(\log k)$ rounds and return the majority vote. 
Implementing their algorithm as a $O(\log k)$-pass continual learner 
reduces the memory to $\tilde{O}(db/\eps + k)$, with the optimal sample complexity of $\tilde{O}((d+k)/\eps)$. However, the linear dependence on $\eps$ in the memory upper bound is inevitable, as they treat each distribution as a black box.
    \item \cite{balcan2012distributed} focuses on the communication aspect of PAC learning over a uniform mixture of tasks and gives an approach that achieves a logarithmic dependency on $\eps$. 
    More precisely, their protocol first draws $O(d/\eps)$ samples from the uniform mixture distribution $\D$ and maintains a weighted distribution over the empirical samples set (initially set to be uniform). 
    Each round, it only transmits $O(d)$ samples and obtains a weak learner with $O(1)$ error. By updating the weighted distribution via MWU and taking a majority vote after $O(\log (1/\eps))$ rounds, the communication complexity is $\smash{\tilde{O}(db)}$.
\end{itemize}
\end{flushleft}

\paragraph{MWU with rejection sampling.} 
Our approach resembles that of \cite{balcan2012distributed}. 
The algorithm optimizes the loss on the uniform mixture  $\D=({1}/{k})\sum_{i=1}^{k}\D_{i}$ of $\D_1,\ldots,\D_k$ and obtains an $({\eps}/{k})$-accurate classifier of $\D$, which is a sufficient condition for the continual learner to succeed.
However, to execute the MWU  algorithm, the continual learner cannot draw an empirical training set as it requires $\Omega(kdb/\eps)$ memory already. 
Instead, we run the boosting algorithm over the entire distribution $\D$ and {\em implicitly} maintain the importance weight of every data point.
In particular, (1) we explicitly compute the importance weight of each distribution $\D_i$; (2) we implicitly maintain the updated weight of every data point in $\D_i$.
For (1), we can simply estimate the empirical weights via finite samples and a multiplicative approximation suffices. 
For (2), we cannot explicitly write down the updated weights of every single data point, but it suffices to sample from the updated distribution. In particular, we can run rejection sampling on the original distribution $\D_i$, and due to the MWU rule, the updated weights are within a multiplicative factor of $\tilde{O}(k^2/\eps^2)$.
Combining with a standard analysis of MWU, this yields an algorithm of memory $\tilde{O}(db + k)$ and   sample complexity  $\tilde{O}(k^2d/\eps^2)$.

\paragraph{Optimal sample complexity via truncated rejection sampling.}
The major disadvantage of the previous approach comes from the increase of sample complexity. At a first glance, it seems impossible to use $o(kd/\eps)$ samples as it is the minimum requirement to achieve $({\eps}/{k})$-error over the uniform mixture distribution $\D$.
Technically, the overhead comes from both estimation of the weights and the rejection sampling step, as the weight could blow up by a factor of $O(k^2/\eps^2)$.
The key observation is that one can iteratively truncate the top $O(\eps/c)$-quantile of the update distribution at each round. 
This brings a total loss of $O(\eps)$ for each distribution $\D_i$ after $c$-rounds (hence breaking the aforementioned lower bound) and this is affordable.
Meanwhile, the truncated data distribution is within a multiplicative factor of $O(1/\eps)$ from the original data distribution $\D_{i}$ due to a Markov bound.
Hence, both sampling and estimation get easier and introduce $O(1/\eps)$ overheads only.
Combining with a robust analysis of MWU and a streaming implementation, overall, our algorithm requires $O(\log k/\eps)$ passes, $\tilde{O}(db)$ memory and sample complexity $\tilde{O}((d+k)/\eps)$.

\subsubsection{Lower bounds for multi-pass proper learning.}
Finally, we outline the proof of Theorem \ref{thm:multi-pass-lower} which gives a memory lower bound for proper learners that make $c$ passes on $k$ tasks. 
We reduce from the classical {\em pointer chasing} problem. 
In the~two-party pointer chasing problem, Alice and Bob hold two maps $f_A, f_{B}: [n] \rightarrow [n]$, respectively.~Let $g: [n]\rightarrow [n]$ be $g(i) := f_{B}(f_{A}(i))$, and $g^{(1)}(i) = g(i)$, $g^{(\tau)}(i) := g(g^{(\tau-1)}(i))$.
The goal is for Bob to compute $g^{(c + 1)}(1)\pmod 2$.
When Alice speaks first, if $ 2c+1 $ rounds of communication are allowed, $O(c\log n)$ total communication is sufficient; 
on the other hand, when only $2c$ rounds are allowed, there is a lower bound of $\Omega(n/c - c\log n)$ \cite{nisan1991rounds, klauck2000quantum, yehudayoff2020pointer}.

The major technical challenge lies in the construction of 
  our hypothesis class which allows us to embed the pointer chasing problem in its multi-pass continual learning.


\begin{figure}[!t]
    \centering
    \includegraphics[width=\textwidth]{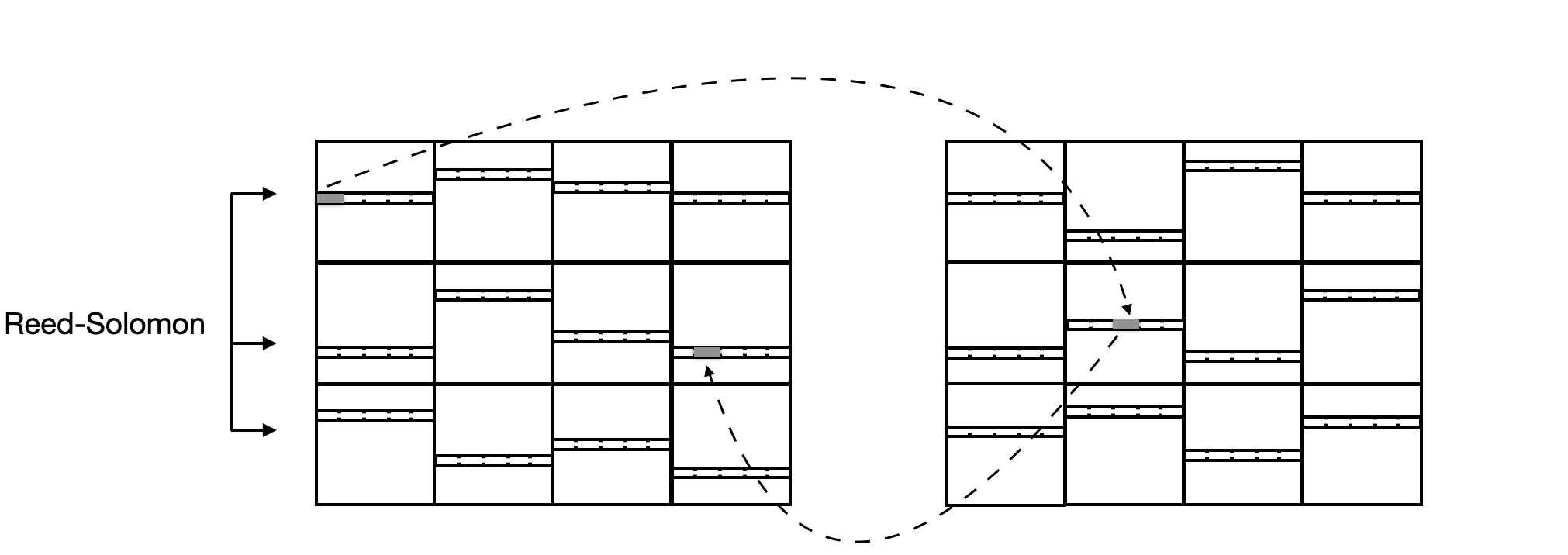}
    \caption{Hard instance of proper learner.}
    \label{fig:multi-pass}
\end{figure}

\paragraph{First attempt.} \hspace{-0.2cm}Again we ignore $\eps$ and assume it to be a small positive constant. We also assume $b\gg \log(kd)$ for simplicity. Given $c,k,d$ and $b$, our goal is to construct a hypothesis class $\mH$ with VC dimension $O(cd)$ and representation size $\tilde{O}(b)$ such that its continual learning over $2k$ tasks requires $\Omega(kdb/c^2)$ memory.
Let $n=kdb$. We start by describing the data universe $\X$ of $\mH$ and show that  input pairs $(f_A,f_B)$ with $f_A,f_B:[n]\rightarrow [n]$ of the pointer chasing problem can be viewed equivalently as a special class of Boolean functions over $\X$.

To this end, we partition $\X$ into $Y\cup Z$,
$Y$ (or $Z$) is partitioned into $k$ blocks $Y_1,\ldots,Y_k$ (or $Z_1,\ldots,Z_k$), each block $Y_i$ is  further partitioned into $d$ cells $Y_{i,1},\ldots,Y_{i,d}$ (or $Z_{i,1},\ldots,Z_{i,d}$),  
  and each cell $Y_{i,j}$ (or $Z_{i,j}$) is a disjoint copy of   $[(kdb)^b]$ (which we will view as a tuple from $[n]^b$ given that $n=kdb$).
Note that $\smash{\log_2 |\X|=O(b\log (kdb))=\tilde{O}(b)}$ as promised. 
Next we view $f_A$ as $kd$ data points in $\X$, one from each cell $Y_{i,j}$:
  $(f_A(1),\ldots,f_A(b))$ identifies an data point in $Y_{1,1}$ (viewing $Y_{1,1}$ as a copy of $[n]^b$) and in general
  $(f_A((i-1)db+(j-1)b+1),\ldots, f_A((i-1)db+(j-1)b+b))$ identifies an data point in $Y_{i,j}$.
Similarly we view $f_B$ as $kd$ data points, one from each cell $Z_{i,j}$.
Then $(f_A,f_B)$ corresponds to the function $h$ that is $1$ over these $2kd$ data points and $0$ elsewhere.

Let $\mH$ denote the class of all such Boolean functions $h$ over $\X$ (for now). 
Then ideally we would like to reduce the pointer chasing problem about $(f_A,f_B)$ to the $c$-pass continual learning of $\mH$.
To this end, we have Alice and Bob follow the following steps:
\begin{flushleft}
\begin{itemize}
    \item Given $f_A$, Alice prepares the following $k$ data distributions $\D_1,\ldots,\D_k$: each $\D_i$ is uniform over the $d$ data points she identifies in $Y_i$ (one from each $Y_{i,j}$) using $f_A$ and all are labelled $1$;
    \item Given $f_B$, Bob prepares the following $k$ data distributions $\D_{k+1},\ldots,\D_{2k}$:
    each $D_{k+i}$ is uniform over the $d$ data points he identifies in $Z_i$ using $f_B$ and all are labelled $1$;
    \item Alice and Bob run a $c$-pass proper continual learner for $\mH$ over these $2k$ data distributions using $2c-1$ rounds of communication; When the continual learner terminates, Bob uses the function $h\in \mH$ it returns (since the learner is proper) to decodes $f_A$ and then solve the pointer chasing problem.
\end{itemize}
\end{flushleft}

There are two issues with this approach.
First, the idea of having Bob recover $f_A$ at the end feels too good to be true. One would see the caveat when  analyzing the VC dimension of $\mH$ defined above,
  which turns out to be $O(k d)$ instead of the $O(cd)$ we aim for.
To resolve this issue, we refine the definition of the hypothesis class $\mH$ by requiring each $h\in \mH$ to 
  have $c+1$ special blocks $Y_{i}$, each of which contains $d$ data points labelled by $1$ and everything else labeled by $0$ as before; all data points in other non-special blocks must have label $1$, and 
  the same condition holds for $Z$.
As expected, these $2c+2$ special blocks are exactly those involved in the pointer chasing process. 
On the one hand, this refinement makes sure that the VC dimension of the new $\mH$ stays at $O(cd)$ instead of $O(kd)$;
  on the other hand, if Bob recovers $h$, he can obtain label-$1$ data points from those $c+1$ relevant blocks of $Y$ which are sufficient for him to follow the pointer chasing process.

\paragraph{Incorporating error correction codes.}
\hspace{-0.2cm}
The second issue is that the function $h\in \mH$ that 
  Bob receives from the continual learner only has the weak guarantee that its loss with respect to each $\D_i$ is small.
In particular, for any relevant block $Y_i$ of $Y$, this means that Bob is only guaranteed to recover a large portion of the $d$ label-$1$ data points in it (or equivalently, a large portion of $db$ many entries of $f_A$ in which one of them is the next pointer for Bob to follow). However, this is not good enough since the wrong data points may contain the entry of $f_A$ that Bob should follow. 


Our key idea to circumvent this issue is to incorporate {\em error correction codes} (ECC) into the construction.
To this end, the new data universe $\X$ will be similarly partitioned into blocks $Y_i$ and $Z_i$ but now each block $Y_i$ has $2d$ cells $Y_{i,1},\ldots,Y_{i,2d}$ instead, each cell being a disjoint copy of $[(kdb)^b]$. 
Taking the first block $Y_1$ as an example, Alice will still identify $d$ elements in $[(kdb)^b]$:
$$
\big(f_A(1),\ldots,f_A(b)\big),\ldots,\big(f_A((d-1)b+1),\ldots,f_A((d-1)b)\big)
$$
using $f_A$ but now she will
 apply the Reed-Solomon code to obtain  a $2d$-tuple
$(z_1,\ldots,z_{2d})$ with $z_j$ $\in [(kdb)^b]$ (for now imagine that $[(kdb)^b]$ is a finite field).
The function will have $2d$ data points labelled $1$ in $Y_1$: $z_i$ in each $Y_{1,i}$.
Modifying the hypothesis class $\mH$ this way allows Bob to fully decode the $c+1$ relevant blocks in $Y$ even if he only receives a hypothesis $h\in \mH$ from the continual learner that has small loss on every $\D_i$. 
We believe that the use of ECC may find broader applications in helping understand the complexity of learning problems in the future.


\subsection{Additional related work}
\paragraph{Continual/Lifelong learning.}
The study of continual/lifelong learning dates back to \cite{thrun1995lifelong} with the central goal of alleviating {\em catastrophic forgetting} \cite{ mccloskey1989catastrophic,mcclelland1995there}. 
There is a surge of interest from both the machine learning and neuroscience communities in recent years.
Most empirical approaches fall into three categories: regularization, replay, and dynamic architecture. 
The {\em regularization} approach penalizes the movement of hypothesis function across tasks.
The elastic weight consolidation (EWC) \cite{kirkpatrick2017overcoming} adds weighted $\ell_2$ regularization to penalize the movement of neural network parameters.
The orthogonal gradient descent (OGD)  \cite{farajtabar2020orthogonal,chaudhry2020continual} performs gradient descent orthogonal to previous directions.
The {\em replay based} approach stores training samples from old tasks and rehearses during later task.
The experience replay method \cite{rolnick2019experience} stores old experience for future use.
Instead of explicitly storing data, \cite{shin2017continual} train a simulator of past data distributions using generative adversarial network (GAN). 
\cite{van2020brain} propose a brain-inspired variant of replay method. 
The {\em dynamic architecture} approach gradually adds new components to the neural network.
The progressive network approach \cite{rusu2016progressive} allocates new subnetworks for each new task while freezing parts of the existing component.
We note that almost all current continual learning approaches suffer from a memory problem due to explicitly or implicitly retaining information from past experience.
In \cite{lopez2017gradient, chaudhry2018efficient} the authors observe that allowing multiple passes over the sequence of tasks helps improve the accuracy. We refer the interested reader to the comprehensive survey of \cite{parisi2019continual}.

In contrast to the vast experimental literature, the theoretical investigation continual/lifelong learning is limited.
Almost all existing theoretical analysis \cite{ruvolo2013ella, pentina2016lifelong,balcan2015efficient, cao2021provable,peng2022con} fall into the {\em dynamic architecture} approach: They require known task identity at inference time and maintain task specific linear classifiers.
\cite{balcan2015efficient, cao2021provable} provide sample complexity guarantee for linear representation function.
The work of \cite{knoblauch2020optimal} is closer to ours, as they aim to quantify the memory requirement of continual learning. 
However, they only show a lower bound of $\Omega(db)$ for proper learning.

\paragraph{Communication complexity of learning.}
The study of the communication aspect of learning was initiated by the work of \cite{balcan2012distributed}; they apply the idea of boosting and prove that $O(db\log(1/\eps))$ bits of communication is sufficient for learning a uniform mixture of multiple tasks, in sharp contrast with the sample complexity lower bound of $\Omega(d/\eps)$ . \cite{kane2019communication} study the two party communication problem of learning and prove a sample complexity lower bound. Their work is significantly different from ours as they consider an infinite data domain and data point transmission takes only one unit of communication. This makes their communication model {\em non-uniform} and they focus on sample complexity instead. In contrast, our work uses a finite data domain and takes into account the data description size. 
Communication complexity has also been examined for convex set disjointness \cite{braverman2021near} and optimization problems \cite{vempala2020communication}.

In addition to these works, \cite{dagan2019space} study space lower bound for linear predictor under streaming setting and prove a lower bound of $\Omega(d^2)$ for a proper learner. Other related work includes the time-space trade off \cite{raz2018fast, garg2018extractor, sharan2019memory} and memorization \cite{feldman2020does,brown2021memorization}.

\section{Preliminaries}
\label{sec:pre}

\paragraph{PAC learning.} \hspace{-0.2cm}We formulate the problem of continual learning through the PAC learning framework.
Let $\X$ be the data universe. 
A hypothesis (or a function) is a mapping $h : \X \rightarrow \{0,1\}$, and a hypothesis class $\mH$ contains a collection of functions.
A sample is a pair from $\X\times \{0,1\}$ and 
  a data distribution $\D$ is a distribution of samples, i.e., a distribution over  $\X \times \{0,1\}$.
Given a set $S$ of $n$ samples $\{(x_{i}, y_{i})\}_{i\in [n]} $, 
the {\em empirical loss} of a function $h$ with respect to $S$ is defined as
\begin{align*}
\ell_{S}(h) = \frac{1}{|S|} \sum_{(x, y) \in S} \mathbf{1}[h(x) \neq y].
\end{align*}
Given a data distribution $\D$,
the {\em loss} of $h$ with respect to $\D$ is  is defined as 
\[
\ell_{\D}(h) =  \Pr_{(x, y)\sim \D}[h(x) \neq y].
\]

\begin{definition}[PAC learning \cite{valiant1984theory}]
Given a hypothesis class $\mH$ over $\X$, an $(\eps, \delta)$-PAC learner $\mathcal{A}$ with sample complexity $n$ is a learning algorithm that, given any data distribution $\D$ over $\X\times \{0,1\}$, draws $n$ samples $S\sim \D^n$ and outputs a function $h:\X\rightarrow \{0,1\}$ such that 
\begin{align*}
\Pr_{S \sim \D^{n}, \, h = \mathcal{A}(S)}\Big[\ell_{\D}(h) \geq \min_{h^{*} \in \mH} \ell_{\D}(h^*) + \eps\Big] \leq \delta.
\end{align*}
In the {\em realizable} setting, we assume that there exists a hypothesis $h^* \in \mH$ that satisfies $\ell_{\D}(h^*) = 0$. Hence an $(\eps, \delta)$-PAC learner satisfies that
\begin{align*}
\Pr_{S \sim \D^{n}, \, h = A(S)}\big[\ell_{\D}(h) \geq \eps\big] \leq \delta.
\end{align*}
We say a learner is \emph{proper} if it always outputs a hypothesis from the class $\mH$; 
an \emph{improper} learner is allowed to output an arbitrary mapping from $\mathcal{X}$ to $\{0, 1\}$.
\end{definition}



The VC dimension measures the complexity of a function class and captures its learnability. 
\begin{definition}[VC dimension \cite{vapnik2015uniform}]
Given a set of data $S = \{x_1, \ldots, x_d\} \subseteq \X$, we say the set $S$ is shattered by function class $\mH$ if $|\mH(S)| = 2^{d}$, where $\mH(S) := \{(h(x_1), \ldots, h(x_d)) : h \in \mH\}\subseteq \{0,1\}^{d}$. The \emph{VC dimension} of $\mH$ is defined as the largest cardinality of sets shattered by $\mH$.
\end{definition}

\begin{lemma}[Sauer–Shelah Lemma \cite{sauer1972density,shelah1972combinatorial}]
Let $\mH$ be a hypothesis class with VC dimension $d$, then for any $S\subseteq \X$ with $|S| = n$, $|\mH(S)| \leq \sum_{i=0}^{d}\binom{n}{i}$. In particular,  $|\mH(S)| \leq (en/d)^{d}$ if $n\geq d$.
\end{lemma}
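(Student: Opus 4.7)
The plan is to prove the combinatorial bound $|\mH(S)| \le \sum_{i=0}^{d}\binom{n}{i}$ by double induction on $n$ and $d$ (equivalently on $n+d$), and then derive the closed-form bound $(en/d)^d$ by a short algebraic estimate. Without loss of generality I would identify $\mH$ with its projection $\mH(S)$, so from now on $\mH$ can be treated as a set of binary vectors indexed by $S$, with $|S|=n$.

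For the base cases, if $d=0$ then $\mH$ shatters no single point, so any two hypotheses in $\mH(S)$ must agree on every coordinate, giving $|\mH(S)| \le 1 = \binom{n}{0}$; and if $n=0$ the bound is trivial. For the inductive step with $n,d \ge 1$, fix an arbitrary $x \in S$ and define the two subfamilies of $\{0,1\}^{S\setminus\{x\}}$:
\begin{align*}
\mathcal{F}_0 &:= \mH(S \setminus \{x\}), \\
\mathcal{F}_1 &:= \left\{\sigma \in \mathcal{F}_0 \,:\, \text{both extensions } (\sigma, 0) \text{ and } (\sigma, 1) \text{ lie in } \mH(S)\right\}.
\end{align*}
The key counting identity is $|\mH(S)| = |\mathcal{F}_0| + |\mathcal{F}_1|$, since every $\sigma \in \mathcal{F}_0 \setminus \mathcal{F}_1$ lifts to exactly one element of $\mH(S)$ while every $\sigma \in \mathcal{F}_1$ lifts to exactly two.

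Now the crux of the argument: $\mathcal{F}_0$ has VC dimension at most $d$ on the universe $S \setminus \{x\}$ of size $n-1$, and $\mathcal{F}_1$ has VC dimension at most $d-1$ on the same universe. The second claim is the step I expect to require the most care: if $T \subseteq S \setminus \{x\}$ were shattered by $\mathcal{F}_1$, then by definition of $\mathcal{F}_1$ each labeling of $T$ could be extended to $x$ by either bit while remaining in $\mH$, so $T \cup \{x\}$ would be shattered by $\mH$, forcing $|T|+1 \le d$. Applying the inductive hypothesis to both families and summing gives
\begin{align*}
|\mH(S)| \;\le\; \sum_{i=0}^{d} \binom{n-1}{i} + \sum_{i=0}^{d-1} \binom{n-1}{i} \;=\; \sum_{i=0}^{d} \left[\binom{n-1}{i} + \binom{n-1}{i-1}\right] \;=\; \sum_{i=0}^{d}\binom{n}{i},
\end{align*}
where the last equality is Pascal's identity (with the convention $\binom{n-1}{-1}=0$).

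For the ``in particular'' statement, assume $n \ge d \ge 1$ so $d/n \le 1$. I would multiply and divide by $(n/d)^d$ and then extend the sum to all of $[n]$:
\begin{align*}
\sum_{i=0}^{d}\binom{n}{i} \;\le\; \left(\frac{n}{d}\right)^d \sum_{i=0}^{d}\binom{n}{i}\left(\frac{d}{n}\right)^{i} \;\le\; \left(\frac{n}{d}\right)^d \sum_{i=0}^{n}\binom{n}{i}\left(\frac{d}{n}\right)^{i} \;=\; \left(\frac{n}{d}\right)^d\!\left(1+\frac{d}{n}\right)^n \!\le\; \left(\frac{en}{d}\right)^d,
\end{align*}
using the binomial theorem and the inequality $(1+x)^{1/x}\le e$. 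The whole argument is clean; the only place one can slip is in the VC-dimension accounting for $\mathcal{F}_1$, which is why I would spell out that step explicitly rather than leave it to the reader.
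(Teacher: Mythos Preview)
Your argument is the standard and correct proof of the Sauer--Shelah lemma: the double induction via the ``deletion/twin'' decomposition $|\mH(S)| = |\mathcal{F}_0| + |\mathcal{F}_1|$ together with Pascal's identity, followed by the usual binomial-theorem estimate for the closed form. The step you flag as the delicate one --- bounding the VC dimension of $\mathcal{F}_1$ by $d-1$ --- is handled correctly.

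There is nothing to compare against on the paper's side: the paper simply states the lemma with references to \cite{sauer1972density,shelah1972combinatorial} and uses it as a black box, without supplying any proof of its own. So your proposal is not ``the same approach'' or ``a different approach'' relative to the paper; it is a complete proof where the paper gives none. If anything, you could shorten your write-up to match the paper's treatment (i.e., cite and move on), but as a self-contained argument what you have is fine.
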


\begin{lemma}[Learning with ERM \cite{vapnik2015uniform}]
\label{lem:learning-erm}
Let $\mH$ be a hypothesis class with VC dimension $d$.~Given any data distribution $\D$ over $\mathcal{X}\times \{0,1\}$ that is realizable in $\mH$, the ERM learner $\mathcal{A}(S):= \{h \in \mH: $ $\ell_{S}(h) = 0\}$ is an $(\eps, \delta)$-PAC learner with sample complexity $$n = O\left(\frac{d + \log(1/\delta)}{\eps}\right).$$ 
\end{lemma}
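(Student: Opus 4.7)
The plan is to establish a uniform convergence statement: with probability at least $1-\delta$ over $S \sim \D^n$, every hypothesis $h \in \mH$ that is consistent with $S$ (i.e., $\ell_S(h) = 0$) satisfies $\ell_\D(h) \leq \eps$. In the realizable setting the target $h^* \in \mH$ has $\ell_S(h^*) = 0$, so the ERM set is nonempty and any element it returns inherits the uniform convergence guarantee. Thus the lemma reduces to bounding the probability that some ``bad'' hypothesis $h \in B := \{h \in \mH : \ell_\D(h) > \eps\}$ is consistent with $S$.

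The first step is the classical symmetrization (double sampling) trick. Draw a ghost sample $S' \sim \D^n$ independent of $S$. For any fixed $h \in B$, a Chernoff bound gives $\Pr[\ell_{S'}(h) \geq \eps/2] \geq 1/2$ once $n \geq c/\eps$ for a small absolute constant $c$. Pairing this with the event that $h$ is consistent on $S$ yields
$$ \Pr\bigl[\exists h \in B : \ell_S(h) = 0\bigr] \;\leq\; 2\,\Pr\bigl[\exists h \in B : \ell_S(h) = 0 \text{ and } \ell_{S'}(h) \geq \eps/2 \bigr]. $$
Next, I would condition on the pooled multiset $T = S \cup S'$ of size $2n$ and argue over a uniformly random split of $T$ into two halves of size $n$ (which has the same law as the original draw of $(S,S')$). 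By Sauer--Shelah, only $|\mH(T)| \leq (2en/d)^d$ distinct labelings of $T$ by $\mH$ matter. For any pattern that errs on at least $\eps n/2$ points of $T$, the probability under the random split that all errors land in $S'$ is at most $2^{-\eps n/2}$. A union bound gives
$$ \Pr\bigl[\exists h \in B : \ell_S(h) = 0\bigr] \;\leq\; 2 \cdot (2en/d)^d \cdot 2^{-\eps n/2}. $$
Setting the right-hand side below $\delta$ and solving yields $n = O\bigl((d\log(1/\eps) + \log(1/\delta))/\eps\bigr)$, which already implies the weaker form of the lemma.

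The main obstacle is shaving off the spurious $\log(1/\eps)$ factor to reach the tight bound $O((d + \log(1/\delta))/\eps)$ stated. I would handle this by a shell (chaining) refinement: partition $B$ into the layers $B_k = \{h \in \mH : 2^{-(k+1)} < \ell_\D(h) \leq 2^{-k}\}$ for $k = 0, 1, \ldots, \log(1/\eps)$, and apply a sharpened version of the symmetrization step to each layer separately with a ghost sample whose size scales like $2^k n$. For layer $B_k$, hypotheses have true error about $2^{-k}$, so on a pooled set of size $\Theta(2^k n)$ they behave like an $\eps'$-net problem with success probability boosted enough that the union bound over Sauer--Shelah patterns costs only $d$ (not $d \log(1/\eps)$) per layer; summing the resulting geometric series over the $O(\log(1/\eps))$ layers collapses into the claimed $d/\eps$ term. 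An alternative route, if one wants to avoid chaining, is to invoke Hanneke's one-inclusion-graph voting reduction, which converts any $\eps$-learner into one with sample complexity $O((d+\log(1/\delta))/\eps)$; applied to ERM, this directly yields the stated bound.
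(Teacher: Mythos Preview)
The paper does not prove this lemma; it is stated in the preliminaries with a citation and no proof is supplied. So there is no paper argument to compare against.

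On the substance of your attempt: the symmetrization and Sauer--Shelah union bound you outline are correct and standard, and they do yield the bound $O\bigl((d\log(1/\eps)+\log(1/\delta))/\eps\bigr)$ as you note. Your two proposals for removing the extra $\log(1/\eps)$ factor are where the argument becomes shaky. The shell/chaining sketch with ghost samples of geometrically growing size is too vague to verify, and it is not clear that such a layering actually eliminates the log factor for ERM. More seriously, the appeal to Hanneke's one-inclusion-graph result does not prove the bound for ERM: Hanneke's optimal learner is a specific majority-vote construction over subsamples, not a black-box reduction that upgrades an arbitrary consistent learner, so invoking it establishes the stated complexity for a different algorithm, not for the ERM rule $\mathcal{A}(S)$. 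In fact, it is known that ERM can require the extra $\log(1/\eps)$ factor for some VC classes, so the lemma as literally stated is slightly informal. This does not affect the paper's results, since the lemma is used only as a black box inside bounds that already carry polylogarithmic slack.
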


\paragraph{Continual learning.} \hspace{-0.2cm}We next define the continual learning problem.

\begin{definition}[Continual learning]
Let $\mH$ be a hypothesis class with VC dimension $d$ over a data universe $\mathcal{X}$ of description size $b \coloneqq  \lceil\log_{2}|\X|\rceil$.
In the continual learning problem over $\mH$, a learner has sequential access to $k$ tasks, specified by $k$ data distribution $\D_1, \ldots, \D_k$ over $\X\times \{0,1\}$. We will focus on the \emph{realizable} setting: there exists a hypothesis $h\in \mH$ such that $\ell_{\D_i}(h) = 0$ for all $i \in [k]$.

A continual learner $\A$ with $m$ bits of memory is initiated with a memory state $M_0\in \{0,1\}^m$~and the learning process is divided into $k$ stages. 
During the $i$-th stage, $\A$ can draw as many samples as it wants from  $\D_i$ and use them to update the memory state from $M_{i-1}$ to $M_{i}\in \{0,1\}^m$:
\begin{align*}
S_i \sim \D_i^{*}\quad\text{and}\quad M_{i } = \A(M_{i-1}, S_i).
\end{align*}
At the end, $\A$ recovers a function
  $h:\X\rightarrow \{0,1\}$ from $M_k$.
We say $\A$ is an $(\eps,\delta)$-continual learner for $\mH$ over $k$ tasks if, with probability at least $1-\delta$,  the function $h$ satisfies 
  $\ell_{\D_i}(h) \geq \eps$ for all $i\in [k]$.
Similar to the standard PAC model,
  a learner is proper if it always outputs a hypothesis $h\in\mH$.
We are interested in understanding the memory requirement $m$ as a function of $k,d,b$ and $\eps$ for both proper and improper learners.


A $c$-pass continual learner is defined similarly, except that it can now make $c$ 
  sequential passes on $\D_1,\ldots,\D_k$.
We are interested in understanding $m$ as a function of $c,k,d,b$ and $\eps$.
\end{definition}


\begin{remark}[Sample complexity]
Our lower bounds for continual learning presented in Theorem \ref{thm:lower-one} and \ref{thm:multi-pass-lower} are for memory requirements (which make them stronger); our algorithm in Theorem \ref{thm:upper}, on the other hand, has sample complexity comparable with its memory requirement.
\end{remark}



\begin{remark}[Representation issue and bit complexity]
Our memory bounds are measured in terms of bit complexity. 
We assume a finite data universe $\X$ in which each data point can be written as $b = \lceil \log_2 |\X| \rceil$ bits. 
Due to the Sauer–Shelah Lemma, the total number of hypotheses in $\mH$ is  $2^{O(db)}$ and thus, writing down a hypothesis in $\mH$ requires at most $O(db)$ bits. 
\end{remark}

\paragraph{Notation} We write $[k] = \{1,2, \ldots, k\}$ and $[a:b] = \{a, a+1, \ldots, b\}$. We write $(i,j)\prec (i',j')$ if (1) $i<i'$ or (2) $i=i'$ and $j<j'$. For any $x, y, z\in \R$, $x = y \pm z$ means $x \in [y -z, y+z]$. The natural logarithm of $x$ is generally written as $\log x$, and the base $2$ logarithm is written as $\log_2 x$.
For any two random variables $A, B$, we use $\|A - B\|_{\TV}$ to denote the total variation distance, $\KL(A \parallel B)$ to denote the KL divergence between $A, B$ and $I(A; B)$ to denote the mutual information. The entropy function is written as $H(A)$. 
For any set $S$, we write $a\sim S$ if the element $a$ is drawn uniformly at random from $S$.

\paragraph{Prerequisites from information theory.} We provide some basic facts in information theory. The proof of Fact \ref{fact:mutual-general} is provided in the Appendix \ref{sec:app-pre} for completeness.



\begin{fact}
\label{fact:mutual-independent}
If $A_1, \ldots, A_{n}$ are totally independent, then $$\sum_{i\in [n]}I(A_i; B) \leq I(A_1, \ldots, A_n; B).$$
\end{fact}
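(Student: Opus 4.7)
The plan is to apply the chain rule for mutual information and then use the two structural properties that (i) independence of the $A_i$'s eliminates conditioning in the unconditional entropy terms and (ii) additional conditioning can only decrease entropy. Concretely, I would start by writing
\[
I(A_1, \ldots, A_n; B) = \sum_{i=1}^{n} I(A_i; B \mid A_1, \ldots, A_{i-1})
\]
using the chain rule, and then expand each summand as $H(A_i \mid A_1, \ldots, A_{i-1}) - H(A_i \mid B, A_1, \ldots, A_{i-1})$.

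Next, I would exploit total independence to replace $H(A_i \mid A_1,\ldots,A_{i-1})$ by $H(A_i)$, and apply the standard ``conditioning reduces entropy'' inequality to bound $H(A_i \mid B, A_1,\ldots,A_{i-1}) \leq H(A_i \mid B)$. Combining these two facts term by term yields $I(A_i; B \mid A_1, \ldots, A_{i-1}) \geq H(A_i) - H(A_i \mid B) = I(A_i; B)$, and summing over $i$ completes the proof.

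There is essentially no obstacle here; the only thing to be careful about is to use total independence (not merely pairwise) so that $H(A_i \mid A_1,\ldots,A_{i-1}) = H(A_i)$ holds for every $i$. The argument is entirely a standard exercise in manipulating entropies via the chain rule, and no additional machinery from the rest of the paper is needed.
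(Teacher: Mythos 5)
Your proof is correct, and it uses essentially the same ingredients (chain rule for mutual information, independence to drop conditioning, and conditioning reduces entropy) that the paper uses to prove the more general Fact~\ref{fact:mutual-general}, of which this statement is the special case where $\sum_i H(A_i) = H(A_1,\ldots,A_n)$. The paper states Fact~\ref{fact:mutual-independent} without proof, but your derivation is the standard one and is sound.
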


\begin{fact}
\label{fact:mutual-general}
For any random variables $A_1, \ldots, A_n, B$, we have  $$\sum_{i\i [n]}I(A_i; B) \leq I(A_1, \ldots, A_n; B) + \sum_{i\in [n]}H(A_i) - H(A_1, \ldots, A_n).$$
\end{fact}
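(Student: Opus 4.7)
The plan is to unpack the two sides of the asserted inequality using the identity $I(X;Y) = H(X) - H(X\mid Y)$ and reduce everything to the subadditivity of conditional entropy.

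First I would rewrite the left-hand side as
$$\sum_{i\in[n]} I(A_i;B) \;=\; \sum_{i\in[n]} H(A_i) \;-\; \sum_{i\in[n]} H(A_i\mid B),$$
and the right-hand side using $I(A_1,\ldots,A_n;B) = H(A_1,\ldots,A_n) - H(A_1,\ldots,A_n\mid B)$ as
$$H(A_1,\ldots,A_n) - H(A_1,\ldots,A_n\mid B) + \sum_{i\in[n]} H(A_i) - H(A_1,\ldots,A_n) \;=\; \sum_{i\in[n]} H(A_i) \;-\; H(A_1,\ldots,A_n\mid B).$$
Subtracting $\sum_i H(A_i)$ from both sides and multiplying by $-1$, the claim is equivalent to
$$H(A_1,\ldots,A_n\mid B) \;\le\; \sum_{i\in[n]} H(A_i\mid B),$$
i.e.\ the subadditivity of conditional entropy.

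Next I would establish this reduced inequality via the chain rule for conditional entropy, writing
$$H(A_1,\ldots,A_n\mid B) \;=\; \sum_{i\in[n]} H\bigl(A_i \,\big|\, A_1,\ldots,A_{i-1},B\bigr),$$
and then applying the standard ``conditioning reduces entropy'' fact term by term, i.e.\ $H(A_i\mid A_1,\ldots,A_{i-1},B) \le H(A_i\mid B)$. Summing over $i$ yields the desired bound.

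There is no real obstacle here: the statement is a one-line reformulation of subadditivity of (conditional) entropy, and the only thing to get right is the bookkeeping that translates the mutual-information inequality into the conditional-entropy inequality. Note that when the $A_i$ are independent, $H(A_1,\ldots,A_n)=\sum_i H(A_i)$ so the correction term vanishes and Fact~\ref{fact:mutual-general} recovers Fact~\ref{fact:mutual-independent}, as expected.
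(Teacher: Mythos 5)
Your proof is correct and uses essentially the same ingredients as the paper's. You cleanly reorganize the bookkeeping to reduce the claim to subadditivity of conditional entropy, $H(A_1,\ldots,A_n\mid B)\le\sum_i H(A_i\mid B)$, and then prove that via the chain rule and ``conditioning reduces entropy''; the paper instead telescopes directly, but the single inequality it invokes, $H(A_i\mid B)\ge H(A_i\mid A_1,\ldots,A_{i-1},B)$, is exactly the termwise step in your argument, so the two proofs are the same in substance.
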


\begin{fact}
\label{fact:mutual-independent1}
If $A_1, \ldots, A_n$ are totally independent when conditioning on $C$, then $$\sum_{i\in [n]}I(A_i; B | C) \leq I(A_1, \ldots, A_n; B|C).$$ 
\end{fact}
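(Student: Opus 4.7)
The plan is to prove the inequality by combining the chain rule for mutual information with the standard fact that conditioning on additional variables cannot increase entropy, leveraging the conditional independence hypothesis to control one side of the inequality exactly.

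First I would expand the right-hand side via the chain rule of conditional mutual information:
\begin{align*}
I(A_1,\ldots,A_n;B\mid C)=\sum_{i=1}^{n}I(A_i;B\mid C,A_1,\ldots,A_{i-1}).
\end{align*}
So it suffices to show that for each index $i$,
\begin{align*}
I(A_i;B\mid C)\le I(A_i;B\mid C,A_1,\ldots,A_{i-1}).
\end{align*}
This is the heart of the argument and is exactly where conditional independence enters.

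Next I would write each of these mutual information terms as a difference of conditional entropies. Specifically,
\begin{align*}
I(A_i;B\mid C)&=H(A_i\mid C)-H(A_i\mid B,C),\\
I(A_i;B\mid C,A_1,\ldots,A_{i-1})&=H(A_i\mid C,A_1,\ldots,A_{i-1})-H(A_i\mid B,C,A_1,\ldots,A_{i-1}).
\end{align*}
By the hypothesis that $A_1,\ldots,A_n$ are totally independent conditioned on $C$, the first entropy on the right-hand side simplifies: $H(A_i\mid C,A_1,\ldots,A_{i-1})=H(A_i\mid C)$. For the subtracted term, since conditioning never increases entropy we have $H(A_i\mid B,C,A_1,\ldots,A_{i-1})\le H(A_i\mid B,C)$. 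Combining these two observations yields the per-$i$ inequality above, and summing over $i$ completes the proof.

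I do not anticipate a real obstacle here; the only subtlety is to make sure the conditional independence is used on the unconditioned-on-$B$ side (where it holds and gives equality) rather than on the conditioned-on-$B$ side (where it generally fails). As a sanity check, one can also derive the statement directly by applying Fact~\ref{fact:mutual-general} pointwise to the conditional joint distribution of $(A_1,\ldots,A_n,B)$ given $C=c$ and then taking expectation over $c$: the correction term $\sum_i H(A_i\mid C=c)-H(A_1,\ldots,A_n\mid C=c)$ vanishes by conditional independence, recovering the same bound.
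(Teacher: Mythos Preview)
Your proof is correct and is the standard argument. The paper itself states this as a basic fact without proof (only Fact~\ref{fact:mutual-general} is proved in the appendix), so there is no paper proof to compare against; your chain-rule-plus-conditioning argument, as well as your alternative derivation via Fact~\ref{fact:mutual-general} applied to the conditional distribution given $C$, are both valid ways to justify it.
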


\begin{fact}
\label{fact:mutual-condition}
For random variables $A, B, C, D$, if $B, D$ are independent when conditioned on $C$, then 
$$I(A; B| C) \leq I(A; B| C, D).$$
\end{fact}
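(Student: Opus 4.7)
The plan is to derive the inequality from a single application of the chain rule for mutual information, using the conditional independence hypothesis to kill one of the resulting terms. The statement is purely information-theoretic, so no combinatorics or probabilistic constructions are involved; the only task is to arrange the chain-rule identity so that the gap between the two sides of the claimed inequality is manifestly a nonnegative mutual information.

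Concretely, I would expand $I(A,D;B\mid C)$ in two ways by the chain rule. On the one hand,
\begin{equation*}
I(A,D;B\mid C) = I(D;B\mid C) + I(A;B\mid C,D),
\end{equation*}
and on the other hand,
\begin{equation*}
I(A,D;B\mid C) = I(A;B\mid C) + I(D;B\mid A,C).
\end{equation*}
Equating the two right-hand sides gives
\begin{equation*}
I(A;B\mid C) + I(D;B\mid A,C) = I(D;B\mid C) + I(A;B\mid C,D).
\end{equation*}

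The hypothesis that $B$ and $D$ are independent conditional on $C$ is exactly the statement $I(D;B\mid C)=0$, which eliminates one term on the right. The remaining rearrangement reads $I(A;B\mid C,D) = I(A;B\mid C) + I(D;B\mid A,C)$, and since conditional mutual information is always nonnegative, the term $I(D;B\mid A,C)$ can only push the right-hand side up, yielding the desired inequality $I(A;B\mid C)\le I(A;B\mid C,D)$.

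There is no real obstacle here beyond writing down the correct chain-rule identity, since the proof is two lines once the decomposition is in place. The only subtlety worth flagging is that conditional independence is needed between $B$ and $D$ given $C$ alone (not further conditioned on $A$), because it is $I(D;B\mid C)$ (and not $I(D;B\mid A,C)$) that must vanish; in fact the extra term $I(D;B\mid A,C)$ is precisely the slack in the inequality, and it can be strictly positive when $A$ creates a dependency between $B$ and $D$ that did not exist under $C$ alone.
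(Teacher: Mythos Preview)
Your proof is correct. The paper states this as a basic fact without proof, so there is no argument in the paper to compare against; your chain-rule decomposition of $I(A,D;B\mid C)$ is the standard one-line justification and nothing more is needed.
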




We also need the following compression lemma from \cite{jain2012direct}.
\begin{lemma}[Lemma 3.4 in \cite{jain2012direct}, taking $t=1$]
\label{lem:compression}
Let $X, Y, M, R$ be random variables such that  
\begin{align*}
I(X; M | Y, R) \leq c \quad \text{and} \quad I(Y; M | X, R) = 0.
\end{align*}
There exists a one-way public-coin protocol between Alice and Bob, with inputs $X$ and $Y$~respectively, such that Alice sends a single message of size at most 
\[\frac{c + 5}{\eps'} + O\left(\log \frac{1}{\eps'}\right) 
\] 
to Bob and at the end of the protocol, Alice and Bob both hold a random variable $M'$ such that $$\big\|(X, Y, M, R) - (X, Y, M', R)\big\|_{\TV} \leq 6\eps'.$$
\end{lemma}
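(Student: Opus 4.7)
The statement is a public-coin message-compression lemma, and the natural approach is a rejection-sampling protocol combined with a substate-theorem argument (in the style of Harsha--Jain--McAllester--Radhakrishnan and Jain--Radhakrishnan--Sen).

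\emph{Step 1 (Conditional independence and average KL bound).} The hypothesis $I(Y; M \mid X, R) = 0$ is equivalent to the Markov chain $Y - (X, R) - M$, i.e.\ $P(M \mid X, Y, R) = P(M \mid X, R)$. Hence Alice (who holds $X, R$ and private randomness) can already simulate $M$ from the correct conditional law locally; the whole problem reduces to describing the sample to Bob cheaply. Write $P^A := P(M \mid X, R)$ and $P^B := P(M \mid Y, R)$. The chain rule together with the Markov condition gives
$$\E_{X, Y, R}\bigl[\KL(P^A \,\|\, P^B)\bigr] \;=\; I(X; M \mid Y, R) \;\leq\; c,$$
so by Markov's inequality, with probability at least $1-\eps'$ over $(X, Y, R)$ the event $\mathcal E := \{\KL(P^A \,\|\, P^B) \leq c/\eps'\}$ holds.

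\emph{Step 2 (Rejection sampling with public coins).} Interpret the public randomness as an i.i.d.\ sequence $(M_i, U_i)_{i \geq 1}$, where each $M_i$ is drawn from a reference distribution $\mathrm{ref}$ computable from $R$ alone and each $U_i$ is uniform on $[0, 1]$. Fix a threshold $L := 2^{(c+5)/\eps'}$. Alice performs rejection sampling against her target $P^A$: she accepts the first index $i^\ast$ with $U_{i^\ast} \cdot L \cdot \mathrm{ref}(M_{i^\ast}) \leq P^A(M_{i^\ast})$, aborting (sending a default symbol) if no index is accepted in the first $L/\eps'$ trials. The message size is $\lceil \log L \rceil + O(\log(1/\eps')) \leq (c+5)/\eps' + O(\log(1/\eps'))$ bits, matching the claimed bound. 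Bob receives $i^\ast$ and outputs $M' := M_{i^\ast}$.

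\emph{Step 3 (TV distance analysis).} Three error sources are union-bounded. (i) The event $\mathcal E^c$ contributes at most $\eps'$. (ii) Under $\mathcal E$, the substate theorem yields a set $T(X, Y, R) \subseteq \supp(P^B)$ with $P^A(T) \geq 1 - \eps'$ on which $P^A(m) \leq L \cdot P^B(m)$, so the accept rule produces a sample distributed exactly as $P^A$ restricted to $T$, contributing at most $\eps'$ to the TV error. (iii) The probability that no index is accepted within the budget is at most $\eps'$ by a standard tail bound on the geometric acceptance time. Combining these gives $\|(X, Y, M, R) - (X, Y, M', R)\|_{\TV} \leq 6 \eps'$.

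\emph{Main obstacle.} The delicate point is Step 2: Alice cannot use $P^B$ as her reference because she does not know $Y$, and a naive choice like the marginal $P(M \mid R)$ would make the expected index $2^{I(X; M \mid R)/\eps'}$ rather than $2^{c/\eps'}$. The substate-style rejection sampling of Jain--Radhakrishnan--Sen bypasses this by arranging the acceptance thresholds so that the expected trial count is governed by $\KL(P^A \,\|\, P^B)$ rather than by a marginal KL; the conversion from an average KL bound (Step 1) to a worst-case per-instance bound costs exactly the $1/\eps'$ factor appearing in the statement. Once this protocol is in hand, the remaining bookkeeping in Step 3 is routine.
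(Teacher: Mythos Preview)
The paper does not prove this lemma; it is quoted verbatim as Lemma~3.4 of \cite{jain2012direct} and used as a black box in Section~\ref{sec:directsum}, so there is no in-paper proof to compare against. I therefore assess your sketch on its own merits.

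Step~1 is correct and standard. The genuine gap is the mismatch between the protocol in Step~2 and the analysis in Step~3(ii). Your acceptance rule is $U_i \cdot L \cdot \mathrm{ref}(M_i)\le P^A(M_i)$ with $\mathrm{ref}$ computable from $R$ alone; for the abort bound in Step~3(iii) to hold after $L/\eps'$ trials you need $P^A(m)\le L\cdot\mathrm{ref}(m)$ on a set of $P^A$-mass at least $1-\eps'$. But the substate theorem you invoke in Step~3(ii) only delivers $P^A(m)\le L\cdot P^B(m)$ on such a set, and $P^B$ depends on $Y$, which Alice does not see. These are different pointwise bounds and one does not imply the other. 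Indeed, as you yourself note in the ``Main obstacle'' paragraph, with $\mathrm{ref}=P(M\mid R)$ the needed threshold is governed by $I(X;M\mid R)=I(Y;M\mid R)+I(X;M\mid Y,R)$, which can be arbitrarily larger than $c$. So the protocol you actually wrote down does not meet the stated message-length bound, and the sentence ``Jain--Radhakrishnan--Sen bypasses this by arranging the acceptance thresholds\ldots'' is a pointer to where the fix lives, not the fix itself.

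The mechanism that closes this gap (in \cite{jain2012direct} and the Braverman--Rao line) is two-sided: both parties generate $(M_i,U_i)$ from a common reference; Alice selects $i^\ast$ by rejection against $P^A$, while Bob independently forms his own candidate set using thresholds based on $P^B$. Alice then transmits only enough bits (a hash or the rank of $i^\ast$ within a window) for Bob to pick out $i^\ast$ among \emph{his} candidates. It is the number of Bob-candidates below $i^\ast$ that is controlled by $\KL(P^A\Vert P^B)$, and this is where the $(c+5)/\eps'$ bound enters. Once you replace the single-index transmission in Step~2 by this two-sided identification step, the rest of your outline goes through.
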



\def\calH{\mH}
\def\calX{\mathcal{X}}
\def\calD{\mathcal{D}}

\section{Lower bounds for continual learning}
\label{sec:lower-one-general}
We restate our memory lower bounds for continual learning.

\onePass*

Our plan for the proof of Theorem \ref{thm:lower-one} is as follows.
In Section \ref{sec:commproblem} we first introduce a three-party one-way (distributional) communication complexity problem
  parameterized by two positive integers $n,d$
  and a prime number $p$, where
  we use $f_{n,d,p}$ to denote the Boolean-valued function that the three parties would like to evaluate and 
  $\mu_{n,d,p}$ to denote the (joint) distribution of their inputs.
Given $k,d,b$ and $\eps$ that satisfy conditions
  of Theorem \ref{thm:lower-one}, we construct a hypothesis class $\mH=\mH_{k,d,b,\eps}$ 
  with VC dimension $2d+1$ and description size $b$ in Section \ref{sec:hypothesisclass}, and show via a 
  simple reduction that the memory requirement of any $(\eps,0.01)$-continual learner for  $\mH$ over $k+1$ tasks is at least
  the communication complexity of $f_{n,d,p}$ over $\mu_{n,d,p}$ with (roughly) 
  $n\approx k/\eps$ and $p\approx 2^{b/2}$.
Finally we prove that the communication   complexity of latter is at least $\Omega(\min(\sqrt{p},nd\log_2 p))$
  in the rest of the section, from which  Theorem \ref{thm:lower-one} follows directly.

\subsection{Communication problem}\label{sec:commproblem}

Let $n,d$ be two positive integers and $p$ be a prime. 
We will consider $p$ to be sufficiently large
  (given that we will set $p$ to be roughly
  $2^{b/2}$ in the reduction and that $C_0$ can be chosen to be a~sufficiently large constant in Theorem \ref{thm:lower-one}).
Let 
$t\coloneqq\lfloor 0.2p\rfloor$.
The communication problem we are interested in is one-way, distributional, and 
  has three players Alice, Bob and Charlie.
We start by describing the probability distribution $\mu_{n,d,p}$
  over inputs $X,Y$ and $Z$ of Alice, Bob and Charlie, respectively.

We follow the following procedure
  to draw $(X,Y,Z)\sim \mu_{n,d,p}$:
\begin{flushleft}\begin{enumerate}
    \item For each $i\in [n]$ and $j\in [d]$, we draw
      $a_{i,j}=(a_{i,j,1},a_{i,j,2})$ from $[p]\times [p]$ 
      independently and uniformly at random. Each $a_{i,j}$ defines
        a \emph{line} in $[p]\times [p]$, denoted by $L(a_{i,j})\subset [p]\times [p]$, which contains all points $(r_1,r_2)\in [p]\times [p]$ such that $a_{i,j,1} r_1+r_2\equiv a_{i,j,2} \pmod p$.
    \item Next for each $i\in [n]$ and $j\in [d]$, we draw a size-$t$ subset $A_{i,j}$ of $[p]\times [p]$ independently and uniformly at random but conditioning on $a_{i,j}\in A_{i,j}$.
      We write $L(A_{i,j})\subset [p]\times [p]$ to denote the union of these $t$ lines: $L(A_{i,j})\coloneqq \cup_{a\in A_{i,j}} L(a)$.
    \item Finally we draw $i^*\in [n]$, $j^*\in [d]$ and 
      a secret bit $s^*\in \{0,1\}$ independently and uniformly\\ at random.
    If $s^*=0$, we draw a point $r^*=(r^*_1,r^*_2)$ from 
      $[p]\times [p]\setminus L(A_{i^*,j^*})$ uniformly at random;
      if $s^*=1$, we draw $r^*$ from $L(a_{i^*,j^*})$ uniformly at random.
      Note that possible choices of $r^*$ are disjoint in these two cases given that $a_{i^*,j^*}\in A_{i^*,j^*}$.
    \item The input $X$ of Alice is $(a_{i,j}:i\in [n],j\in [d])$.
\\
The input $Y$ of Bob consists of  $(A_{i,j}:i\in [n],j\in [d])$ and $i^*$.\\
The input $Z$ of Charlie consists of $i^*,j^*$ and $r^*$.\\
The goal is for Charlie to output a bit that matches $s^*$. 
\end{enumerate}\end{flushleft}
Equivalently we can define a Boolean-valued function $f_{n,d,p}(X,Y,Z)$ that is set to be $0$ by default if $(X,Y,Z)$ is not in the support of $\mu_{n,d,p}$ and is set to be $s^*$ if $(X,Y,Z)$ is in the support of $\mu_{n,d,p}$ (note that $s^*$ can be uniquely determined by $(X,Y,Z)$ given that $a_{i^*,j^*}\in A_{i^*,j^*}$).
Thus the problem we are interested in is the distributional communication complexity about $f_{n,d,p}$ over $\mu_{n,d,p}$.

An $(m_1,m_2)$-bit one-way deterministic communication
  protocol is a triple $(\Pi_1,\Pi_2,$ $\Pi_3)$ of functions, where
  (1) $\Pi_1(X)\in \{0,1\}^{m_1}$ is the message from Alice to Bob,
  (2) $\Pi_2(Y,\Pi_1)\in \{0,1\}^{m_2}$ is the message from 
  Bob to Charlie, and (3)
  $\Pi_3(Z,\Pi_2)\in \{0,1\}$ is Charlie's output. 

Our main technical result in this section is the 
  following lower bound.

\begin{theorem}
\label{lem:lower-one-general}
Let $n,d$ be two positive integers and $p$ be a sufficiently large prime. 
Any $(m_1,m_2)$-bit protocol
  that fails in computing $f_{n,d,p}$
  over $\mu_{n,d,p}$ with probability at most $1/40$ must have
$$m_1=\Omega(nd\log_2 p)\quad \text{or}
\quad m_2\ge \sqrt{p}.$$
\end{theorem}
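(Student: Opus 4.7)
The plan is to reduce the multi-copy problem to the single-copy case $n=d=1$ via a standard direct-sum/compression argument, and then to establish the single-copy lower bound using information theory together with the geometry of lines in $\F_p^2$.

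For the direct-sum step, since the $a_{i,j}$ are mutually independent and $(i^*,j^*)$ is drawn uniformly and independently of everything else, Fact~\ref{fact:mutual-independent} yields
\begin{align*}
\sum_{(i,j)\in[n]\times[d]} I(a_{i,j};M_1) \;\leq\; I(\{a_{i,j}\};M_1) \;\leq\; H(M_1)\;\leq\; m_1,
\end{align*}
so $\E_{i^*,j^*}\!\big[I(a_{i^*,j^*};M_1\mid i^*,j^*)\big] \leq m_1/(nd)$. Assuming $m_1 \leq \alpha\,nd\log_2 p$ for a small constant $\alpha > 0$, Markov's inequality isolates a constant fraction of indices for which this information is $O(\alpha\log p)$. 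Conditioning on such a typical index and invoking the compression lemma (Lemma~\ref{lem:compression}) with an appropriate $\eps'$, I would obtain a single-copy protocol in which Alice sends $O(\alpha\log p)$ bits and Bob sends $m_2$ bits, preserving constant distinguishing advantage. Choosing $\alpha$ small, the theorem reduces to proving: any single-copy protocol in which Alice sends $c\log_2 p$ bits (for a small constant $c$) and Charlie succeeds with probability $\geq 19/20$ must have $m_2 \geq \sqrt{p}$.

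The structural key in the single-copy setting is that $s^*=1$ if and only if $r^* \in L(A)$, since under $s^*=1$ we have $r^*\in L(a) \subseteq L(A)$, while under $s^*=0$ we have $r^*\notin L(A)$ by construction. Hence the problem reduces to a two-party set-membership instance: Bob holds a (nearly uniform) random size-$t$ subset $A$ of lines in $[p]^2$, Charlie holds $r^*$, and Charlie must decide $\mathbf{1}[r^*\in L(A)]$. For each of the $2^{m_2}$ messages $m$, let $T_m = \{r:\Pi_3(r,m)=1\}$ denote Charlie's accept region. The per-$m$ distinguishing advantage can be written as a weighted difference between $\E_{A\mid m}[|T_m\cap L(A)|]/(0.2 p^2)$ and $\E_{A\mid m}[|T_m\setminus L(A)|]/(0.8 p^2)$; standard manipulation shows that this is large only when $T_m$ closely approximates $L(A)$ for most $A$ in the posterior. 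The geometric input --- that distinct lines in $\F_p^2$ meet in at most one point, so independent random $A, A'$ satisfy $|L(A)\,\triangle\, L(A')| = \Theta(p^2)$ --- implies that each $T_m$ can approximate $L(A)$ for only a small ``cluster'' of nearby $A$'s, and a counting/entropy argument over the $2^{m_2}$ messages then yields $m_2 \geq \sqrt{p}$.

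The main obstacle will be the single-copy bound itself, specifically the quantitative argument about how concentrated the posterior on $A$ given $m$ must be for Charlie to have constant advantage, and how many such ``concentrated clusters'' can coexist under the $2^{m_2}$-valued message. The subtlety is the interplay between Alice's $O(\log p)$-bit message and Bob's $m_2$-bit message, which can jointly shape the posterior in nontrivial ways; my plan is to handle this by symmetrization over public randomness to reduce to a nearly uniform posterior on $A$, and then to exploit the line-intersection bound $|L(a)\cap L(a')|\leq 1$ (for $a\neq a'$) to constrain cluster sizes and extract the $\sqrt{p}$ threshold. Once the single-copy bound is in place, the direct-sum reduction of the first paragraph immediately yields $m_1 = \Omega(nd\log_2 p)$ whenever $m_2 < \sqrt{p}$, completing the proof of Theorem~\ref{lem:lower-one-general}.
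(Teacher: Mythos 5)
The direct-sum/compression half of your argument is essentially the paper's: both bound $\sum_{i,j} I(a_{i,j};\Pi_1) \le m_1$ via independence, isolate a typical cell with information $O(m_1/(nd))$, and invoke Lemma~\ref{lem:compression} to get a single-cell protocol with an $o(\log p)$-bit first message. That part is fine.

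The gap is in your single-cell lower bound, and it is both technical and conceptual. Conceptually, your statement that Charlie's constant advantage forces the accept region $T_m$ to ``closely approximate $L(A)$'' is not the right target. To succeed, $T_m$ only needs to (i) contain most of the \emph{single} line $L(a^*)$ and (ii) avoid most of $[p]^2\setminus L(A^*)$. A region $T_m\approx L(a^*)$, of size only about $p$, satisfies both and looks nothing like $L(A^*)$, which has size $\Theta(p^2)$. So the hardness is about identifying the hidden line $a^*$, not reconstructing $L(A^*)$, and your cluster/counting argument (``$T_m$ can approximate $L(A)$ for only a small cluster of $A$'s'') is aimed at the wrong object. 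Relatedly, your expression for the advantage uses $\E_{A\mid m}[|T_m\cap L(A)|]/(0.2p^2)$; the positive case draws $r^*$ uniformly from $L(a^*)$, so the correct quantity is $\E_{a\mid m}[|T_m\cap L(a^*)|]/p$, which again points to $a^*$, not $A$, as the relevant hidden variable.

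Where the $\sqrt p$ threshold actually comes from in the paper is a two-layer conditioning argument that your sketch does not recover. Fix Alice's short message $\hat\Pi_1$; then $a^*$ is (roughly) uniform on $\Img(\hat\Pi_1)$, a set of size about $p^{2-\gamma}$, and $S^*:=A^*\cap\Img(\hat\Pi_1)$ has size $\Theta(p^{1-\gamma})$ with high probability. Bob knows $A^*$ but not which element of $S^*$ is $a^*$; conditioned on $\hat\Pi_1$ and $|S^*|$, the hidden $a^*$ is uniform among the $|S^*|$ nearly-exchangeable elements of $S^*$. Since $\Pi_2$ is a function of $A^*$ (and $\hat\Pi_1$) of length $m_2$, Fact~\ref{fact:mutual-general} gives $I(a^*;\Pi_2\mid\hat\Pi_1,|S^*|)\lesssim m_2/|S^*|=O(\sqrt p/p^{1-\gamma})=o(1)$ when $\gamma<1/2$. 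Feeding this into the entropy-counting step (the paper's Lemma~\ref{lem:step3}, which bounds success probability by $19/20 + \tfrac{1}{20}\cdot\tfrac{I(a^*;\Pi_2)+1}{\log_2 p}$ via the set $S$ of lines mostly covered by $T_m$) closes the argument. Your ``symmetrization over public randomness'' and ``counting concentrated clusters'' plan does not supply this $|S^*|$-way exchangeability, which is precisely where the $m_2/\sqrt p$ scaling is extracted; as written, the heart of the lower bound is still missing.
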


We delay the proof of the theorem to Section \ref{prooflowerbound}.

\subsection{Hypothesis class}\label{sec:hypothesisclass}

Let $k,d,b$ and $\eps$ be parameters that satisfy conditions of Theorem \ref{thm:lower-one}. 
Let $$n\coloneqq k\cdot \left\lfloor \frac{1}{100\eps} \right\rfloor
=\Theta\left(\frac{k}{\eps}\right)$$ using $\eps\le 0.01$, and
let $p$ be the largest prime with $$p\le \frac{2^{b/2}}{\sqrt{nd}}\quad \text{so that}
  \quad p=\Theta\left(\frac{2^{b/2}}{\sqrt{nd}}\right) $$
 satisfies 
  $\log_2 p = \Theta(b)$
given that $b\ge C_0 \log (kd/\eps)$
  for some sufficiently large $C_0$ in Theorem \ref{thm:lower-one}.

We describe the hypothesis class $\calH=\calH_{k,d,b,\eps}$ below. 
We start with the data universe $\calX$ of $\calH$.

\paragraph{Data universe.} Let $\mathcal{X} = [n]\times [d] \times ([p]\times [p])$.
The description size of $\X$ is no more than $b$ by our choice of $p$.
For any data point $x\in \X$, we write $x = (x_1, x_2, x_3)$ where $x_1 \in [n],$ $x_{2}\in [d]$ and $x_{3} = (x_{3, 1}, x_{3,2}) \in [p]\times [p]$.
For any $i \in [n], j \in [d]$, we refer to $\X_i = \{x\in \X: x_1 = i\}$
  as the $i$-th \emph{block} and  $\X_{i, j} = \{x\in \X: x_1 = i, x_2 = j\}$
  as the $j$-th \emph{cell} of the $i$-th block. 

\paragraph{Hypothesis class.} Each function $h$ in the hypothesis class $\mH$ is specified by an  index $i \in [n]$  and a tuple $(a_1, \ldots, a_d)\in ([p]\times [p])^{d}$ such that $h : \X \rightarrow \{0, 1\}$ is defined as
\begin{align*}
    h (x) =
    \begin{cases} 
    1 & \text{if}\ x_1 \neq i\\
    1 & \text{if}\ x_1 = i\ \text{and}\ x_3\in L(a_{x_2})\\
    0 & \text{if}\ x_1 = i\ \text{and}\ x_3\notin L(a_{x_2})
    \end{cases}
\end{align*}
In short, $h \in \mH$ has label $1$ everywhere except in the $i$-th block $\X_{i}$, where each cell $\X_{i, j}$ ($j  \in [d]$) is all $0$ except for points on the line $L(a_j)$.

A quick observation is that $\mH$ has VC dimension at most $2d$.
\begin{lemma} 
\label{lem:bounded-vc-general}
The VC dimension of $\mH$ is at most $2d$.
\end{lemma}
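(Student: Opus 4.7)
The plan is to exploit the ``one active block'' structure of $\mathcal{H}$: outside its own block $\mathcal{X}_i$, every hypothesis is identically $1$, so the only nontrivial behavior sits inside a single block, where the hypothesis acts as $d$ independent non-vertical line classifiers, one per cell. This should reduce the VC-dimension computation to summing per-cell VC dimensions of the line class, plus a small correction from a single ``escape-hatch'' constant-$1$ labeling.

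First I would show that any shattered set $S$ must lie entirely in a single block. If $S$ had points in two distinct blocks $\mathcal{X}_{i_1}$ and $\mathcal{X}_{i_2}$, then every hypothesis $h\in \mathcal{H}$ (whose ``own block'' is some $i$) assigns label $1$ to the foreign block in $\{i_1,i_2\}\setminus\{i\}$, so no $h$ can realize the all-zero labeling on $S$. Having reduced to $S\subseteq \mathcal{X}_i$, let $S_j = S\cap \mathcal{X}_{i,j}$ and $s_j = |S_j|$. Every realized labeling of $S$ comes either from (a) an index-$i$ hypothesis $(a_1,\ldots,a_d)$, which decomposes cell-by-cell into line labelings on each $S_j$ and thus contributes at most $\prod_j L_j$ labelings, where $L_j$ counts the labelings of $S_j$ realized by non-vertical lines in $\mathbb{F}_p^2$; or (b) any hypothesis of index $i'\neq i$, yielding the single all-one labeling on $S$. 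Hence $|\mathcal{H}|_S| \le 1 + \prod_j L_j$.

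The core combinatorial input is that the class of non-vertical lines in $\mathbb{F}_p^2$ has VC dimension $2$: two points with distinct $x$-coordinates can be shattered by $\{ax+y=b\}$, but no three points can, since either collinearity on a non-vertical line forbids the ``two-ones-one-zero'' labelings, or non-collinearity forbids the all-one labeling. Sauer--Shelah then gives $L_j \le 1+s_j+\binom{s_j}{2}$, and a direct check shows $L_j/2^{s_j}\le 7/8$ whenever $s_j\ge 3$. Shattering requires $\prod_j L_j + 1 \ge 2^{|S|}$; if any cell has $s_j\ge 3$, the product is at most $(7/8)\cdot 2^{|S|}$, which is strictly less than $2^{|S|}-1$ as soon as $|S|\ge 4$. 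Therefore every $s_j\le 2$, and $|S| = \sum_j s_j \le 2d$.

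The hardest part will be the bookkeeping in this last counting step, because the ``$+1$'' coming from the constant-$1$ escape hatch makes the inequality $\prod_j L_j + 1 \ge 2^{|S|}$ delicate; it is tight at exactly one point (a single cell with three non-collinear points), and I will need to confirm that the multiplicative slack $L_j/2^{s_j}$ decreases fast enough as $s_j$ grows past $2$ that even a single oversized cell pushes the product below the shattering threshold for every nontrivial $|S|$. A uniform bound of $7/8$ across all $s_j\ge 3$ then suffices to close the argument.
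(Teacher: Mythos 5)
Your proposal takes a genuinely different route from the paper's. The paper argues directly: given any $2d+1$ points in one block, pigeonhole forces three points into a single cell, and then a short case analysis (collinear triple $\Rightarrow$ cannot realize $(1,1,0)$; non-collinear triple plus a fourth point $\Rightarrow$ cannot realize $(1,1,1,0)$, since the only way to label a non-collinear triple all-$1$ is via an index $i'\neq i$, which forces the fourth point to $1$) finishes the job. You instead bound $|\mathcal{H}|_S|$ by $1+\prod_j L_j$, use the fact that non-vertical lines in $\mathbb{F}_p^2$ have VC dimension $2$ together with Sauer--Shelah to bound each $L_j$, and close with a multiplicative gap argument: a cell of size $\ge 3$ loses a factor $7/8$ against $2^{s_j}$, which already makes $\prod_j L_j + 1 < 2^{|S|}$. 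This is cleaner and makes the origin of the constant $2d$ more transparent (two points per cell, $d$ cells); it also avoids the slightly hand-wavy ``WLOG'' step in case (2) of the paper's proof. Conversely, the paper's case analysis is shorter and gives explicit un-realizable labelings.

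One thing to tighten: your counting shows that a shattered $S$ with some $s_j\ge 3$ must have $|S|\le 3$, and a shattered $S$ with all $s_j\le 2$ has $|S|\le 2d$; so what you actually obtain is $|S|\le\max(3,2d)$, not ``every $s_j\le 2$'' outright. For $d\ge 2$ this gives the claimed $2d$ bound (since $2d+1\ge 5\ge 4$, the $7/8$ argument kicks in). But for $d=1$ the threshold $|S|\ge 4$ is never reached, and your argument (like the paper's, which needs a fourth distinct point $x^{(i_4)}$ that does not exist when $2d+1=3$) does not rule out a shattered triple. In fact, when $d=1$, $n\ge 2$, and $p$ is large, three non-collinear points with distinct first coordinates inside a single cell \emph{are} shattered: the seven non-all-$1$ labelings come from index-$i$ line hypotheses, and the all-$1$ labeling from any index $i'\neq i$. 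So the lemma as stated is vacuously or marginally off at $d=1$; this is a pre-existing issue in the paper's proof, not a defect of your approach, but you should state explicitly that you are assuming $d\ge 2$ (or, equivalently, $|S|\ge 5$) when concluding $s_j\le 2$ for every cell.
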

\begin{proof}
For any subset $S = \{x^{(1)}, \ldots, x^{(2d+1)}\}$ of size $2d+1$, we prove $\mH$ can not shatter $S$.
First, if there exists two indices $i_1, i_2 \in [2d+1]$ such that $x^{(i_1)}_1 \neq x^{(i_2)}_1$, then $\mathcal{H}$ can not shatter $(0, 0)$ on $(x^{(i_1)}, x^{(i_2)})$. 
On the other hand, suppose $x^{(1)}_1 = \ldots = x^{(2d+1)}_1 = 1$ for simplicity, then there must exists $i_1, i_2, i_3 \in [2d+1]$ such that $x^{(i_1)}, x^{(i_2)},x^{(i_3)}$ satisfy $x^{(i_1)}_2 = x^{(i_2)}_2 = x^{(i_3)}_2$, that is, there are three data points in the same cell. 
(1) If $(x^{(i_1)}_{3,1}, x^{(i_1)}_{3,2}), (x^{(i_2)}_{3,1}, x^{(i_2)}_{3,2}), (x^{(i_3)}_{3,1}, x^{(i_3)}_{3,2})$ are on a same line of $([p]\times [p])$, then $\mH$ can not shatter $(1,1, 0)$ on $(x^{(i_1)}, x^{(i_2)}, x^{(i_3)})$.
(2) If they are not on the same line, then pick any index $i_4\in [2d+1]$, we can assume $x^{(i_4)}$ is not on the same cell and same line with any two of $x^{(i_1)}, x^{(i_2)}, x^{(i_3)}$ WLOG. Then $\mH$ can not shatter $(1,1,1, 0)$ on $(x^{(i_1)}, x^{(i_2)}, x^{(i_3)}, x^{(i_4)})$. 
\end{proof}
Via a simple reduction, we show below that communication complexity lower bounds for $f_{n,d,p}$ over $\mu_{n,d,p}$ can be used to obtain memory lower bounds for the continual learning problem.
\begin{lemma}[Reduction]
\label{lem:reduction}
If there is an $(\eps,\delta)$-continual learner with an $m$-bit memory for  $\mH_{k,d,b,\eps}$ over $k+1$ tasks,
  then there is an $(m,m)$-bit protocol that fails in computing $f_{n,d,p}$ over $\mu_{n,d,p}$ with probability at most $0.01+\delta$. 
\end{lemma}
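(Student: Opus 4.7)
The plan is to let Alice and Bob jointly simulate the continual learner on distributions derived from their inputs, forward the learner's memory on to Charlie, and have Charlie answer the communication problem by evaluating the returned hypothesis at his query point. I will partition Alice's $n$ blocks into $k$ consecutive groups $G_1,\ldots,G_k$ of size $\lfloor 1/(100\eps)\rfloor$ each (possible since $n=k\lfloor 1/(100\eps)\rfloor$). For each $t\in[k]$ Alice defines $\D_t$ as the distribution that samples $i\sim G_t$, $j\sim[d]$, and $r\sim L(a_{i,j})$ uniformly at random, outputting the label-$1$ example $((i,j,r),1)$. From his input Bob defines $\D_{k+1}$ to sample $j\sim[d]$ and then $r\sim([p]\times[p])\setminus L(A_{i^*,j})$ uniformly, outputting the label-$0$ example $((i^*,j,r),0)$. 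Writing $t^*$ for the unique $t$ with $i^*\in G_{t}$, the hypothesis $h_{i^*,(a_{i^*,1},\ldots,a_{i^*,d})}\in\mH$ witnesses realizability: it labels every block other than $i^*$ with $1$, matching Alice; Alice's samples in block $i^*$ lie on $L(a_{i^*,j})$ and are correctly labeled $1$; Bob's samples lie outside $L(A_{i^*,j})\supseteq L(a_{i^*,j})$ and are correctly labeled $0$.

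The protocol then proceeds as follows. Alice uses her private randomness (she may draw unboundedly many samples from each $\D_t$) to simulate the $(\eps,\delta)$-continual learner on $\D_1,\ldots,\D_k$, reaching a memory state $M_k\in\{0,1\}^m$, which she sends to Bob. Bob continues the simulation on $\D_{k+1}$ to produce $M_{k+1}\in\{0,1\}^m$ and forwards it to Charlie. Charlie decodes the hypothesis $h:\X\to\{0,1\}$ returned by the learner and outputs $h(i^*,j^*,r^*)$. Since each of the two messages is $m$ bits, this is an $(m,m)$-bit one-way protocol of the desired form.

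For the error analysis, with probability at least $1-\delta$ the learner returns $h$ with $\ell_{\D_t}(h)\le\eps$ for every $t\in[k+1]$; I condition on this event and split on the secret bit $s^*$. For $s^*=0$ the test point $(i^*,j^*,r^*)$ has exactly the law of a $\D_{k+1}$-sample, so $\Pr[h(i^*,j^*,r^*)=1\mid s^*=0]\le\ell_{\D_{k+1}}(h)\le\eps$. For $s^*=1$ the test point is distributed as a $\D_{t^*}$-sample conditioned on $i=i^*$; since $\Pr_{(i,j,r)\sim\D_{t^*}}[i=i^*]=1/|G_{t^*}|$, the bound $\ell_{\D_{t^*}}(h)\le\eps$ gives $\Pr[h(i^*,j^*,r^*)=0\mid s^*=1]\le|G_{t^*}|\cdot\eps\le 1/100$. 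Averaging over the uniform bit $s^*$ and adding the $\delta$ event that the learner itself fails, the protocol's error is at most $(\eps+1/100)/2+\delta\le 0.01+\delta$ using $\eps\le 0.01$. The only substantive calculation is this last conditional-loss step, which is the sole reason for inflating the number of blocks to $n=k\lfloor 1/(100\eps)\rfloor$: inflating by this factor dilutes each block to mass $\Theta(\eps)$ inside its enclosing Alice-task, turning the $\eps$-loss guarantee into a constant-error guarantee on the block-$i^*$ slice that Charlie actually queries.
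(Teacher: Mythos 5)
Your proof is correct and follows essentially the same route as the paper: partition Alice's $n$ blocks into $k$ groups of size $\Theta(1/\eps)$, build $\D_1,\ldots,\D_k$ and $\D_{k+1}$ exactly as you describe, simulate the learner, forward memory states, and convert per-distribution $\eps$-loss to a constant conditional error on the queried block by the $|G_{t^*}|\cdot\eps$ dilution factor. (If anything your version is slightly cleaner: the paper switches to $\alpha=\lceil 1/(100\eps)\rceil$ in the proof and then invokes $\alpha\le 1/(100\eps)$, which is a small inconsistency with its own definition $n=k\lfloor 1/(100\eps)\rfloor$; your consistent use of $\lfloor\cdot\rfloor$ avoids that.)
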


\begin{proof}
Fix any pair of inputs $X$ and $Y$ of Alice and Bob, respectively,
  where $X$ contains $(a_{i,j}: i\in [n], j \in [d])$ and $Y$ consists of $i^*$ and $(A_{i,j}: i \in [n], j \in [d])$.
Conditioning on $X$ and $Y$, Charlie's input $Z$ contains $i^*,j^*$ and $r^*$,
  where $j^*$ is uniform over $[d]$ and $r^*$ with probability $1/2$
  is uniform over $L(a_{i^*,j^*})$ and with probability $1/2$ is 
  uniform over $[p]\times [p]\setminus L(A_{i^*,j^*})$.
We describe below a communication protocol such that
  Charlie fails with probability no more than $0.01+\delta$.

Let $\X$ be the data universe of $\mH_{k,d,b,\eps}$.
Given their inputs $X$ and $Y$, respectively, Alice uses $X$ to 
a  construct a sequence of $k$ 
  data distributions $\calD_1,\ldots,\calD_k$ and Bob uses
  $Y$ to construct one data distribution $\calD_{k+1}$ over $\X\times \{0,1\}$.
We first describe these distributions and then show that 
  they are consistent with a function $h\in\mH_{k,d,b,\eps}$:
\begin{flushleft}\begin{enumerate}
\item  Let $\alpha\coloneqq \lceil 1/(100\eps)\rceil$.
For each $i\in [k]$, $\calD_{i}$ is a uniform distribution over
  data points $(x_1,x_2,x_3)$, where $x_1=(i-1)\alpha+1,\ldots,i\alpha$,
  $x_2\in [d]$ and $x_3\in L(a_{x_1,x_2})$ (so there are $\alpha d p$ points in 
  total in the support of each $\calD_i$, each with probability $1/(\alpha d p)$).
All points in $\calD_i$ are labelled $1$.
Alice can construct all these $k$ distributions using her input $X$ only.

\item Bob constructs a data distribution $\calD_{k+1}$ that is 
  uniform over $(x_1,x_2,x_3)$, where $x_1=i^*$, $x_2\in [d]$ and $x_3\in ([p]\times [p])\setminus L(A_{i^*,x_2})$,
  and all points are labelled $0$.
Given that Bob has $i^*$ and all of $A_{i^*,j}$ in $Y$, Bob can construct
  $\calD_{k+1}$ using his input $Y$ only.

\item It is clear that these data distributions are consistent with a function $h\in \calH_{k,d,b,\eps}$ that is specified by $i^*$ and $(a_{i^*,1},\ldots,a_{i^*,d})$.
\end{enumerate}\end{flushleft}

The communication protocol has Alice and Bob simulate the $(\eps,\delta)$-continual
  learner for $\calH_{k,d,b,\eps}$ where Alice passes the
  memory of the learner after going through $\calD_1,\ldots,\calD_k$ to Bob
  and Bob finally passes the memory to Charlie after going through $\calD_{k+1}$.
So both messages of Alice and Bob have $m$-bits and with 
  probability at least $1-\delta$, Charlie has in hand
  a function $h^*:\X\rightarrow \{0,1\}$ such that 
  $\ell_{\calD}(h^*)\le \eps$ for all $\calD$ in $\calD_1,\ldots,\calD_{k+1}$.
We finish the proof by showing that when this event happens, Charlie
  fails with probability no more than $0.01$ (over randomness of $Z$
  conditioning on $X$ and $Y$).
  
First, let $i'$ be the integer such that $(i'-1)\alpha+1\le i^*\le i' \alpha$. 
Then the number of data points $x=(x_1,x_2,x_3)$ with
  $x_1=i^*$, $x_2\in [d]$, $x_3\in L(a_{x_1,x_2})$ and $h^*(x)\ne h(x)$
  is no more than $\eps \cdot \alpha dp$ using 
$\ell_{\calD_{i'}}(h^*)\le \eps$.
Thus, Charlie fails because of one of these points with probability at most
$$
\frac{1}{2}\cdot \frac{\eps\alpha dp}{dp}\le \frac{0.01}{2}
$$
using $\alpha\le 1/(100\eps).$
The other error $0.01/2$ comes from that $\ell_{\calD_{k+1}}(h^*)\le \eps\le 0.01$.
\end{proof}

We are now ready to prove Theorem \ref{thm:lower-one} (assuming Theorem \ref{lem:lower-one-general} for now):

\begin{proof}[Proof of Theorem \ref{thm:lower-one}]
Given an $(\eps,0.01)$-continual learner with an $m$-bit memory for  $\mH_{k,d,b,\eps}$ over $k+1$ tasks, by Lemma \ref{lem:reduction}, there is an $(m,m)$-bit protocol that fails in computing $f_{n,d,p}$ over $\mu_{n,d,p}$ with probability at most $0.02 < \frac{1}{40}$. Hence by Theorem \ref{lem:lower-one-general}, we conclude $m \geq \Omega(\min\{ nd\log_2 p, \sqrt{p}\}) = \Omega(kdb/\eps)$. 
\end{proof}

\subsection{Communication lower bound}\label{prooflowerbound}

We prove the communication lower bound in Theorem \ref{lem:lower-one-general}. 
The proof proceeds as follows. First we prove a lower bound for the
  communication problem over one single cell in Section \ref{sec:singlecell}.
Then we follow a standard direct sum argument to extend it to 
  a lower bound for $f_{n,d,p}$ over $\mu_{n,d,p}$ in Section \ref{sec:directsum}.

\subsubsection{Communication problem over one cell}\label{sec:singlecell}

Let $p$ be a sufficiently large prime.
We first focus on the following three-party one-way distributional  communication problem over one single cell $[p]\times [p]$. 
The inputs $(a^*,A^*,r^*)$ of the three players Alice, Bob and Charlie are drawn from a distribution $\mu_p$ as follows: (1) Alice's input $a^*$ is drawn from $[p]\times [p]$ uniformly at random, which defines a line $L(a^*)$ of $p$ points;
(2) Bob's input $A^*$ is a size-$t$ subset of $[p]\times [p]$ drawn uniformly at random conditioning on $a^*\in A^*$, and we write $L(A^*)$ to denote the union of $L(a)$ over $a\in A^*$;
(3) With probability $1/2$, Charlie's input $r^*\in [p]\times [p]$ is drawn uniformly at random from $L(a^*)$; with probability $1/2$, $r^*$ is drawn uniformly at random from $([p]\times [p])\setminus L(A^*)$.
The function $f_p(a^*,A^*,r^*)$ that Charlie needs to compute is set to be $1$ if $r^*\in L(a)$ and $0$ if $r^*\in ([p]\times [p])\setminus L(A^*)$.


Our goal is to prove the following lower bound for computing $f_p$ over $\mu_p$:
\begin{lemma}
\label{lem:single}
Let $\Pi=(\Pi_1,\Pi_2,\Pi_3)$ be an $(m_1,m_2)$-bit one-way deterministic communication protocol.
If $m_1\le 0.05\log_2 p$ and $m_2\le \sqrt{p}$, then Charlie has error probability at least $9/200$.
\end{lemma}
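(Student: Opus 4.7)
My plan is to argue by contradiction: assume the protocol has error less than $9/200$. For each $\pi_2 \in \{0,1\}^{m_2}$, introduce Charlie's accept region $T_{\pi_2} := \{r \in [p]\times[p] : \Pi_3(r,\pi_2) = 1\}$. Writing $\pi_2 = \Pi_2(A^*, \Pi_1(a^*))$ and using $p^2 - |L(A^*)| \geq 0.8\,p^2$, Charlie's error decomposes as
\begin{align*}
\mathrm{Err} = \tfrac{1}{2}\,\mathbb{E}\!\left[\tfrac{p - |L(a^*)\cap T_{\pi_2}|}{p}\right] + \tfrac{1}{2}\,\mathbb{E}\!\left[\tfrac{|T_{\pi_2}\setminus L(A^*)|}{p^2 - |L(A^*)|}\right],
\end{align*}
so $\mathrm{Err} < 9/200$ forces $\mathbb{E}[|L(a^*)\cap T_{\pi_2}|] \geq 0.91\,p$ and $\mathbb{E}[|T_{\pi_2}\setminus L(A^*)|] \leq 0.072\,p^2$. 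By Markov's inequality, on a constant-probability event $E$ one simultaneously has $|L(a^*)\cap T_{\pi_2}|\geq 0.8\,p$, $|T_{\pi_2}\setminus L(A^*)| \leq c_0\,p^2$, and hence $|T_{\pi_2}| \leq |L(A^*)| + c_0 p^2 \leq tp + c_0 p^2 \leq 0.3\,p^2$.

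The next two ingredients are (i) the incidence geometry of $\F_p^2$ and (ii) the slack on Alice's message. For (i), any two distinct lines meet in at most one point, so for $a \notin A^*$, $|L(a) \cap L(A^*)| \leq t = \lfloor 0.2p \rfloor$; consequently, if $a$ lies in $C(T_{\pi_2}) := \{a : |L(a)\cap T_{\pi_2}|\geq 0.8\,p\}$ then at least $(0.8-t/p)\,p \geq 0.6\,p$ points of $L(a)\cap T_{\pi_2}$ must lie outside $L(A^*)$. For (ii), since $|\{0,1\}^{m_1}| \leq p^{0.05}$, a pigeonhole/averaging argument selects a good message $\pi_1^*$ with $|S_{\pi_1^*}| \geq \Omega(p^{1.95})$ on which the conditional error remains a small constant; under this conditioning $a^*$ is uniform on $S_{\pi_1^*}$ and Bob's set of candidates $A^*\cap S_{\pi_1^*}$ has size $\approx 0.2\,p^{0.95}$, these many lines being equally likely to be Alice's from Bob's perspective.

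Combining the ingredients, Bob's $m_2$-bit message $\pi_2$ must enable Charlie's test $r^* \in T_{\pi_2}$ to correctly single out $a^*$ among Bob's $\Theta(p^{0.95})$ candidates with probability $\geq 0.8$. Crucially, even if $\pi_2$ encodes $\pi_1^*$, Charlie still has no access to Bob's restricted candidate set $A^*\cap S_{\pi_1^*}$, so Bob cannot transmit a relative index and must describe $a^*$ in the absolute coordinates of $[p]\times[p]$. The additional constraint that $T_{\pi_2}\subseteq L(A^*)$ up to $c_0 p^2$ error couples $\pi_2$ tightly to Bob's specific $A^*$, and combined with the high entropy $H(A^*) = \Omega(p\log p)$ of Bob's input, a compression argument in the spirit of Lemma \ref{lem:compression} would force $m_2 = \Omega(\sqrt p)$. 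The main obstacle is this final quantitative step: a naive union bound over Bob's $2^{m_2}$ messages (each inducing some "success set" of lines) yields only $m_2 = \Omega(\log p)$, and upgrading to $\sqrt p$ requires a careful argument leveraging both that Bob's message must simultaneously encode $a^*$'s identity and certify the $A^*$-compatibility of $T_{\pi_2}$, and that Charlie cannot see Bob's candidate set.
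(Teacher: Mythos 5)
Your setup is on the right track and mirrors the paper's intuitions: you correctly identify Charlie's accept region, the pigeonhole over Alice's messages to isolate a large preimage $S_{\pi_1^*}$, and the fact that conditioned on $\Pi_1=\pi_1^*$ Bob has roughly $0.2\,p^{0.95}$ lines in $A^*\cap S_{\pi_1^*}$ that are (from Bob's viewpoint) symmetric candidates for $a^*$. But the proof has a genuine gap exactly where you flag it, and the reason your ``naive union bound'' intuition only gives $m_2=\Omega(\log p)$ is that you have framed the goal incorrectly: the bottleneck is not that Bob must \emph{encode} $a^*$ (that would cost only $2\log_2 p$ bits, and the lemma allows $m_2$ as large as $\sqrt p$), but that the \emph{information} Bob's message carries about $a^*$ is diluted by the size of Bob's candidate set.

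The paper closes this gap with an information-theoretic direct-sum argument internal to the single cell. Concretely, it first proves (Lemma~\ref{lem:step3}) a quantitative statement that any protocol's success probability is at most $\frac{19}{20}+\frac{1}{20}\cdot\frac{I(a^*;\Pi_2)+1}{\log_2 p}$, by examining, for each fixed $\hat\Pi_2$, the set $S$ of lines with $\geq 90\%$ of their points in Charlie's accept set and bounding $H(a^*\mid \Pi_2=\hat\Pi_2)$ in terms of $\Pr[a^*\in S]$. It then proves (Lemma~\ref{lem:mutual-information-bound}) that $I(a^*;\Pi_2)<0.08\log_2 p$. The key step there is exactly the missing one in your proposal: conditioned on $\Pi_1=\hat\Pi_1$ with $|\Img(\hat\Pi_1)|$ large and on $|S^*|=c$ where $S^*=A^*\cap\Img(\hat\Pi_1)$, the index $i^*$ of $a^*$ within $S^*$ is independent of $(S^*,T^*,\Pi_2)$; hence $I(a^*;\Pi_2\mid\cdots)=\frac{1}{c}\sum_i I(a_i;\Pi_2\mid\cdots)$, and by Fact~\ref{fact:mutual-general} this sum is at most $I(a_1\ldots a_c;\Pi_2)+(\text{small dependence correction}) \leq m_2 + O(p^\gamma)$, so dividing by $c\gtrsim p^{1-\gamma}/200$ yields $o(1)$. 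It is this averaging over the symmetric candidates --- not a compression of Bob's entire message, and not a union bound over $2^{m_2}$ outcomes --- that converts $m_2\le\sqrt p$ into $I(a^*;\Pi_2)=o(\log p)$. Your appeal to ``a compression argument in the spirit of Lemma~\ref{lem:compression}'' is aimed in the wrong direction: that lemma is used in the paper only to compress \emph{Alice's} message in the outer direct-sum reduction (Theorem~\ref{lem:lower-one-general}), not Bob's message inside the single-cell lemma.

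Two smaller remarks. First, your derived constraint $|T_{\pi_2}|\le 0.3\,p^2$ and the incidence-geometry observation that $\geq 0.6p$ points of each good line lie outside $L(A^*)$ are in the same spirit as the case analysis in Lemma~\ref{lem:step3} (Case $|S|\geq p$), but they are used there to bound the success probability, not $m_2$ directly. Second, your error decomposition and the thresholds $0.91p$ and $0.072p^2$ are fine but not load-bearing; what is load-bearing is an explicit inequality of the form ``success $\leq \frac{19}{20}+O\!\left(\frac{I(a^*;\Pi_2)}{\log p}\right)$,'' which your combinatorial accounting does not yet supply.
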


In the proof we use $\Pi_1$ (or $\Pi_2$) to denote the  random variable as the $m_1$-bit
  (or $m_2$-bit) message from Alice to Bob
  (or from Bob to Charlie, respectively).
Lemma \ref{lem:single} is a direct corollary of Lemma \ref{lem:step3} and Lemma \ref{lem:mutual-information-bound} below.
In Lemma \ref{lem:step3} we show that our construction makes sure that in~order for Charlie to succeed with high probability, he must receive at least $(1-o(1))\log_2 p$ bits of information regarding the hidden line $a$ from $\Pi_2$.
Formally, we have

\begin{lemma}
\label{lem:step3}
For any communication protocol $\Pi = (\Pi_1, \Pi_2,\Pi_3)$, we have
\begin{align}
    \Pr_{(a^*,A^*,r^*)\sim \mu_p}
    \Big[\text{$\Pi$\ succeeds on\ $(a^*,A^*,r^*)$}\Big]
    \leq \frac{19}{20} + \frac{1}{20}\cdot\frac{I(a^*; \Pi_{2})+1}{\log_2 p}.
    \label{eq:step3}
\end{align}
\end{lemma}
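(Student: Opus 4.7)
My plan is to decompose $\beta := \Pr_{\mu_p}[\Pi \text{ succeeds}] = \tfrac{1}{2}(1 + \alpha - \gamma)$, where $\alpha = \Pr[\Pi_3(r^*, \Pi_2) = 1 \mid s^* = 1]$ and $\gamma = \Pr[\Pi_3(r^*, \Pi_2) = 1 \mid s^* = 0]$, and then to upper bound $\alpha$ by a threshold argument whose slack is controlled through $I(a^*; \Pi_2)$ via the data-processing inequality for binary KL divergence.

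For each message value $\pi$ I define Charlie's ``yes-region'' $S_\pi = \{r \in [p]\times[p] : \Pi_3(r, \pi) = 1\}$, its density $\sigma_\pi = |S_\pi|/p^2$, and the per-line coverage $\eta_a(\pi) = |S_\pi \cap L(a)|/p \in [0,1]$. Writing $\alpha_\pi = \E_{a^* \mid \Pi_2 = \pi}[\eta_{a^*}(\pi)]$ and using the elementary bound $\eta \leq \tfrac{9}{10} + \tfrac{1}{10}\mathbf{1}[\eta \geq \tfrac{9}{10}]$ I obtain
\[
\alpha_\pi \leq \tfrac{9}{10} + \tfrac{1}{10}\Pr_{a^* \mid \Pi_2 = \pi}[a^* \in E_\pi], \qquad E_\pi := \{a : \eta_a(\pi) \geq \tfrac{9}{10}\}.
\]
Combined with $\gamma \geq 0$ and averaged over $\pi$ this gives $\beta \leq \tfrac{19}{20} + \tfrac{1}{20}\Pr[a^* \in E_{\Pi_2}]$. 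A double-counting argument (each point of $[p]\times[p]$ lies on exactly $p$ lines, so $\sum_a |S_\pi \cap L(a)| = p|S_\pi|$) bounds $|E_\pi| \leq \tfrac{10}{9}|S_\pi|$; hence, under the product-of-marginals distribution $Q$ on $(a^*, \Pi_2)$, the event $\mathcal{E} = \{(a, \pi) : a \in E_\pi\}$ satisfies $Q(\mathcal{E}) \leq \tfrac{10}{9}\E[\sigma_{\Pi_2}]$.

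Finally, I will invoke the binary-KL data-processing inequality $\KL_{\mathrm{Ber}}(P(\mathcal{E}) \| Q(\mathcal{E})) \leq I(a^*; \Pi_2)$ together with the elementary estimate $\KL_{\mathrm{Ber}}(a \| b) \geq a\log_2(1/b) - 1$ (which follows from $H_2(\cdot) \leq 1$) to conclude $P(\mathcal{E})\log_2(1/Q(\mathcal{E})) \leq I(a^*; \Pi_2) + 1$. The main obstacle is that $Q(\mathcal{E})$ can be $\Omega(1)$ when $\E[\sigma_{\Pi_2}]$ is constant, in which case $\log_2(1/Q(\mathcal{E}))$ is only $O(1)$ rather than the $\log_2 p$ that the claimed bound requires. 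I plan to resolve this by a case split on $\sigma_\pi$: when $\sigma_\pi$ is large (say $\sigma_\pi \geq \tfrac{3}{10}$), the lower bound $\gamma_\pi \geq \sigma_\pi - t/p = \sigma_\pi - 0.2$ obtained from $|L(A^*)| \leq tp$ already forces $\beta_\pi \leq \tfrac{19}{20}$ with no mutual-information input, so only messages with small $\sigma_\pi$ effectively contribute to the inequality above, and for these $Q(\mathcal{E}) \leq 1/p$, so the $\log_2 p$ factor emerges from the KL bound. Calibrating the case split so the intermediate range of $\sigma_\pi$ is handled without loss—and tracking the constants so that the final coefficient in front of $I/\log_2 p$ is exactly $\tfrac{1}{20}$—is the delicate part.
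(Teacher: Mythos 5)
Your overall reduction to bounding $\Pr[a^*\in E_{\Pi_2}]$ via $\alpha_\pi \leq \tfrac{9}{10} + \tfrac{1}{10}\Pr[a^*\in E_\pi]$ matches the paper in spirit (the paper works with the complementary set $S$ of lines that are $\geq 90\%$ covered and bounds error by $\tfrac{1}{2}(1-q)\cdot\tfrac{1}{10}$). The divergence is in how the probability of that rare set is controlled, and here your argument has a genuine gap.

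The double-counting $\sum_a |S_\pi \cap L(a)| = p|S_\pi|$ only gives $|E_\pi| \leq \tfrac{10}{9}|S_\pi|$, hence $\Pr_Q[a^*\in E_\pi] = |E_\pi|/p^2 \leq \tfrac{10}{9}\sigma_\pi$. When $\sigma_\pi$ is a small constant (say $0.1$), this is still a constant, so $\log_2(1/Q(\mathcal{E}))=O(1)$, and the binary-KL estimate yields no $\log_2 p$ in the denominator. Your claim ``for these $Q(\mathcal{E}) \leq 1/p$'' does not follow from the stated double-counting: you are off by a factor of $p$. The missing ingredient is the \emph{pairwise}-intersection property of lines in $\F_p^2$: any two distinct lines meet in at most one point, so if $m$ lines each place $\geq 0.9p$ of their points in $S_\pi$, then $|S_\pi| \geq \sum_{i=1}^{\min(m,0.9p)}(0.9p - i + 1) \geq 0.45\,p\,m$ for $m\le 0.9p$, and $|S_\pi|>0.4p^2$ if $m\ge p$. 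Only with this do you get $|E_\pi| = O(\sigma_\pi p)$ (not $O(\sigma_\pi p^2)$) in the regime $\sigma_\pi < 0.3$, and hence $\Pr_Q[a^*\in E_\pi] = O(\sigma_\pi/p) < 1/p$. This is exactly the counting the paper performs in its Case~1 ($|S| \geq p \Rightarrow |H| > 0.4p^2$); the paper then avoids the product-of-marginals/KL machinery altogether by directly bounding $H(a^* \mid \Pi_2 = \hat\Pi_2) \leq 1 + (2-q)\log_2 p$ in its Case~2 (using $|S|\le p$ to split the entropy into an in-$S$ part contributing $\leq q\log_2 p$ and an out-of-$S$ part contributing $\leq (1-q)\cdot 2\log_2 p$), then averaging over $\hat\Pi_2$ and using $H(a^*)=2\log_2 p$. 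That route also sidesteps the constant-factor slack you would incur from $\log_2(1/Q(\mathcal{E})) = \log_2 p - O(1)$.

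So: the framework is salvageable, but you must replace the $p$-incidences-per-point count with the one-point-per-pair-of-lines count, and you should check that the resulting $\log_2 p - O(1)$ denominator can still be tightened to $\log_2 p$ to match the stated coefficient of $\tfrac{1}{20}$; the paper's direct conditional-entropy decomposition handles both issues cleanly.
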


We next prove the mutual information $I(a^*; \Pi_2)$ is small for any deterministic communication protocol $\Pi$ with small amount of communication. Formally
\begin{lemma}
\label{lem:mutual-information-bound}
Let $\Pi=(\Pi_1,\Pi_2,\Pi_3)$ be an $(m_1,m_2)$-bit one-way deterministic communication protocol.
If $m_1\le 0.05\log_2 p$ and $m_2\le \sqrt{p}$,
then we have $I(a^*; \Pi_2) < 0.08\log_2 p$.
\end{lemma}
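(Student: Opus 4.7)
The plan is to split $I(a^*;\Pi_2)$ via the chain rule,
\begin{equation*}
I(a^*;\Pi_2)\;\le\;I(a^*;\Pi_1,\Pi_2)\;=\;I(a^*;\Pi_1)+I(a^*;\Pi_2\mid\Pi_1)\;\le\;m_1+I(a^*;\Pi_2\mid\Pi_1),
\end{equation*}
and bound each summand separately. Since $I(a^*;\Pi_1)\le H(\Pi_1)\le m_1\le 0.05\log_2 p$, it suffices to prove $I(a^*;\Pi_2\mid\Pi_1)<0.03\log_2 p$.

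For the conditional term, I would condition on $\Pi_1=\pi_1$ and set $S=S_{\pi_1}=\{a:\Pi_1(a)=\pi_1\}$; for a typical $\pi_1$ (after discarding a small-probability atypical set via Markov's inequality applied to $I(a^*;\Pi_1)$) one has $|S|\ge p^2/2^{m_1+O(1)}\ge p^{1.9}$. Under the conditional law, $a^*$ is uniform on $S$ and $A^*$ is uniform on size-$t$ subsets of $[p]\times[p]$ containing $a^*$; equivalently, $A^*$ has density proportional to $|A\cap S|$ and $a^*\mid A^*$ is uniform on $A^*\cap S$, which has size $\approx tN/p^2\approx 0.2\,p^{0.9}$ with high probability by hypergeometric concentration. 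The key structural fact is that $\Pi_2$ is a deterministic function of $(\pi_1,A^*)$, so $I(a^*;\Pi_2\mid\Pi_1,A^*)=0$, whence the chain rule gives
\begin{equation*}
I(a^*;\Pi_2\mid\Pi_1)\;=\;I(A^*;\Pi_2\mid\Pi_1)-I(A^*;\Pi_2\mid\Pi_1,a^*).
\end{equation*}
Morally, $\Pi_2$ sees $A^*$ but is blind to which element of $A^*\cap S$ was actually tagged as $a^*$, so it cannot separate the $\approx p^{0.9}$ equally plausible candidates; its $m_2=\sqrt{p}$-bit output is too coarse to ``align with'' any particular one of them, forcing the two right-hand quantities to be nearly equal.

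To make this quantitative, I plan to write the posterior $q_v(a)=\Pr[a^*=a\mid\Pi_1=\pi_1,\Pi_2=v]$ in closed form as $q_v(a)=n_v(a)/m_v$, where $n_v(a)=|\{A\in\Pi_2^{-1}(\pi_1,v):a\in A\}|$ and $m_v=\sum_{a\in S}n_v(a)$.  This yields the clean identity $I(a^*;\Pi_2\mid\Pi_1=\pi_1)=\E_v\KL(q_v\|U_S)$, with $U_S$ uniform on $S$ and $v$ weighted by $\Pr[v\mid\pi_1]=m_v/(|S|\binom{p^2-1}{t-1})$.  The hard part, which I expect to be the crux of the proof, is to bound this average KL by $o(\log p)$: the trivial per-$v$ estimate $\KL(q_v\|U_S)\le\log|S|=O(\log p)$ is too weak, and so is $I(A^*;\Pi_2\mid\Pi_1)\le m_2$.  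I would attack this by combining (i) hypergeometric concentration of $|A\cap S|$ to restrict attention to $A^*$'s with $|A\cap S|$ close to its mean, with (ii) a second-moment / $\chi^2$-type computation exploiting the exchangeability of elements inside $A^*\cap S$ under the swap $a\leftrightarrow a'$ which fixes $A^*$ and hence $\Pi_2$; this should show that the $2^{m_2}$ parts of the partition collectively cause only an $o(\log p)$ KL deviation, not the naive $O(m_2)$ one.  Assembling the two halves then gives $I(a^*;\Pi_2)<0.05\log_2 p+0.03\log_2 p=0.08\log_2 p$ as claimed.
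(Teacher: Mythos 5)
Your high-level architecture matches the paper's: split off $m_1$ via the chain rule, condition on $\Pi_1=\pi_1$, discard the small-preimage messages, and exploit the exchangeability of the elements of $A^*\cap S$ under the conditional law. The observation that $\Pi_2$ is a deterministic function of $(\pi_1,A^*)$ — hence $I(a^*;\Pi_2\mid\Pi_1,A^*)=0$ — is correct and is implicit in the paper's argument. However, the proposal has a genuine gap: the step you yourself flag as ``the hard part, which I expect to be the crux of the proof'' is never carried out. You state the clean identity $I(a^*;\Pi_2\mid\Pi_1)=I(A^*;\Pi_2\mid\Pi_1)-I(A^*;\Pi_2\mid\Pi_1,a^*)$, but both terms on the right are only bounded by $m_2\le\sqrt{p}\gg\log p$, so this decomposition alone gives nothing; and the $\chi^2$/second-moment KL computation you propose as a remedy is left at the level of a plan, with no indication of how the exchange $a\leftrightarrow a'$ would be converted into the needed $o(\log p)$ bound, nor why the hypergeometric concentration of $|A^*\cap S|$ suffices. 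A proof must actually close this.

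The paper's mechanism for the crux is different and worth internalizing. After conditioning on $\Pi_1=\hat\Pi_1\in\Pi_{\mathrm{lrg}}$ and on $|S^*|=c$ where $S^*=A^*\cap\mathrm{Img}(\hat\Pi_1)$, it introduces a uniformly random index $i^*\in[c]$ and writes $a^*=a_{i^*}$, with $a_1,\dots,a_c$ a uniformly random ordering of $S^*$. Then
\begin{equation*}
I(a^*;\Pi_2\mid\cdots)\le I(a^*;\Pi_2\mid\cdots,i^*)=\frac1c\sum_{i\in[c]} I(a_i;\Pi_2\mid\cdots),
\end{equation*}
and the sum is controlled by Fact \ref{fact:mutual-general} (approximate super-additivity of mutual information):
\begin{equation*}
\sum_i I(a_i;\Pi_2\mid\cdots)\le I(a_1\cdots a_c;\Pi_2\mid\cdots)+\sum_i H(a_i\mid\cdots)-H(a_1\cdots a_c\mid\cdots)\le m_2+O(p^\gamma).
\end{equation*}
Dividing by $c\ge p^{1-\gamma}/200$ with $\gamma\le 0.05$ and $m_2\le\sqrt p$ gives $o_p(1)$, which is the quantitative content you were missing. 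Note also that the paper handles the atypical messages by a direct mass count ($|\Pi_{\mathrm{lrg}}^{\mathsf c}|$ contributes at most $\frac{1}{p^2}\cdot\frac{p^{2-\gamma}}{100}\cdot p^\gamma\cdot 2\log_2 p=0.02\log_2 p$), which is tighter and more transparent than the Markov-on-$I(a^*;\Pi_1)$ argument you sketch. If you want to salvage your route, you would need to show that the expected $\chi^2$-divergence of $q_v$ from $U_S$ under a random $v$ is $o(1)$, which in effect reproduces the paper's super-additivity estimate by other means; simply asserting that the ``$2^{m_2}$ parts of the partition collectively cause only $o(\log p)$'' is exactly what requires proof.
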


Combining Lemma \ref{lem:step3} and Lemma \ref{lem:mutual-information-bound}, we conclude the proof of Lemma \ref{lem:single}.

\def\hPi{\hat{\Pi}}

\begin{proof}[Proof of Lemma \ref{lem:step3}]
Fixing a message $\Pi_2 = \hat{\Pi}_2$   from Bob,
  we upper bound the probability that 
  $\Pi$ succeeds on $(a^*,A^*,r^*)\sim \mu_p$ conditioning on $\Pi_2=\hPi_2$.
For convenience, we write $\mu_{p,\hPi_2}$ to denote this conditional distribution.  
  
With $\hPi_2$ fixed, we can assume that Charlie uses a subset of \emph{points} $H\subseteq [p]\times[p]$ to finish the protocol, i.e., after receiving  $\smash{\hat{\Pi}_2}$ from Bob,
  Charlie returns $1$ on input $r$
  if $r\in H$ and returns $0$ if $r\notin H$.
We further define a set of \emph{lines} $S$ from $H$: 
$a\in [p]\times [p]$ is in $S$ iff at least
  $90\%$ of points in $L(a)$ lie in $H$.
We divide into two cases:

\vspace{+2mm}
{\bf\noindent Case 1:} $|S|\geq p$.
Let $a_1,\ldots,a_p$ be any $p$ lines in $S$. We have 
$$
\left(\bigcup_{i\in [p]} L(a_i)\right)\bigcap H\ge
\sum_{i\in [p]} \left| \big(L(a_i)\cap H\big)
\setminus \bigcup_{j\in [i-1]} L(a_j)\right|\ge \sum_{i\in [p]} \max\big(0.9p-(i-1),0\big)>0.4p^2.
$$
On the other hand, fixing any 
  inputs $\hat{a}$ and $\hat{A}$ of Alice and Bob such that Bob sends $\hPi_2$, we have 
$$
\left|([p]\times [p])\setminus L(\hat{A})\right|\ge p^2-pt=0.8p^2 
$$
and the correct answer for all these points are $0$.
As a result, we have that when $(a^*,A^*,r^*)\sim \mu_p$ conditioning on $a^*=\hat{a}$ and 
  $A^*=\hat{A}$, 
  the probability that $\Pi$ errors is at least
$$
\frac{1}{2}\cdot \frac{0.8p^2+0.4p^2-p^2}{p^2}=0.1.
$$
This implies that 
$\Pi$ succeeds on $(a^*,A^*,r^*)\sim\mu_{p,\hPi_2}$ with probability at most $0.9$.

\vspace{+2mm}
{\bf\noindent Case 2:}   $|S| < p$. 
We show that the probability that $\Pi$ succeeds on
  $(a^*,A^*,r^*)\sim \mu_{p,\hPi_2}$ is at most
\begin{align}\label{eq:hehe}
\frac{19}{20} + \frac{1}{20} \cdot \frac{2\log_2 p - H(a^*\mid \Pi_{2} = \hat{\Pi}_{2})+1}{\log_2 p}.
\end{align}
To see this, 
  we examine the distribution of 
  $a^*$ conditioning on $\Pi_2=\hPi_2$ and write $q$ to denote the probability that $a^*\in S$.
Using $|S|\le p$, we have
\begin{align*}
H(a^*\mid \Pi_2=\hPi_2)&\le 
q \cdot \log_2\left(\frac{p}{q}\right)+(1-q)\cdot \log_2 \left(\frac{p^2-p}{1-q}\right)\\
&\le q\cdot\log_2\left(\frac{1}{q}\right)+(1-q)\cdot \log_2\left(\frac{1}{1-q}\right)
+q\log_2 p+(1-q)\log_2 p^2\\
&\le 1+(2-q)\cdot \log_2 p.
\end{align*}
Hence, we have
\begin{align}
\label{eq:correct-prob}
    q \leq \frac{2\log_2 p - H(a^*\mid \Pi_2 = \hat{\Pi}_{2}) + 1}{\log_2 p}.
\end{align}
Given that every line not in $S$ has at least $10\%$ of points not in $H$,
  $\Pi$ errors on $(a^*,A^*,r^*)\sim \mu_{p,\hPi_2}$ with probability at least
$$
\frac{1}{2}\cdot (1-q)\cdot \frac{1}{10}\ge 
\frac{1}{20}-\frac{1}{20}\cdot 
\frac{2\log_2 p-H(a^*\mid \Pi_2=\hPi_2)+1}{\log_2 p}
$$
from which (\ref{eq:hehe}) follows.\vspace{3mm} 


Combining the two cases
  (and noting that $0.9<19/20$), we have
$$
\Pr_{(a^*,A^*,r^*)\sim \mu_{p,\hPi_2}}
    \Big[\text{$\Pi$\ succeeds on\ $(a^*,A^*,r^*)$}\Big]
    \le 
     \frac{19}{20} + \frac{1}{20}\cdot 
     \frac{2\log_2 p-H(a^*\mid \Pi_2=\hPi_2)+1}{\log_2 p}.
$$
As a result, using $H(a^*)=2\log_2 p$ we have
\begin{align*}
&\hspace{-1cm}\Pr_{(a^*,A^*,r^*)\sim \mu_p}
    \Big[\text{$\Pi$\ succeeds on\ $(a^*,A^*,r^*)$}\Big]\\[0.5ex]
    &= \sum_{\hPi_2}
    \Pr\big[\Pi_2=\hPi_2]\cdot 
    \Pr_{(a^*,A^*,r^*)\sim \mu_{p,\hPi_2}}
    \Big[\text{$\Pi$\ succeeds on\ $(a^*,A^*,r^*)$}\Big]\\
    &\le \sum_{\hPi_2} \Pr\big[\Pi_2=\hPi_2]\cdot \left(\frac{19}{20} + \frac{1}{20}\cdot 
     \frac{2\log_2 p-H(a^*\mid \Pi_2=\hPi_2)+1}{\log_2 p}\right)\\[0.3ex]
     &= \frac{19}{20} + \frac{1}{20}\cdot 
     \frac{I(a^*;\Pi_2)+1}{\log_2 p}.
\end{align*}
We conclude the proof.
\end{proof}

\begin{proof}[Proof of Lemma \ref{lem:mutual-information-bound}]
For convenience, we denote $m_1 = \gamma \log_2 p$ with $\gamma \le 0.05$.
For any realization of Alice's message $\Pi_{1} = \hat{\Pi}_1$, define
\begin{align*}
\Img(\hat{\Pi}_{1}) = \big\{a\in [p]\times [p]: \Pi_1(a) = \hat{\Pi}_A\big\}.
\end{align*}
Next we write $\Pi_{\lrg}$ to denote the set of $\hPi_1$ such that 
  $\Img(\hPi_1)$ is large: 
\begin{align*}
\Pi_{\lrg} = \left\{\hat{\Pi}_{1} : \left|\Img(\hPi_1)\right| \geq \frac{p^{2-\gamma}}{100}\right\}.
\end{align*}
First we show that to bound $I(a^*;\Pi_{2})$, it suffices to bound 
  $I(a^*;\Pi_2 \mid \Pi_1=\hPi_1)$
  for $\hPi_1\in \Pi_{\lrg}$:
\begin{align}
I(a^*;\Pi_2) \leq&~ 
 I(a^*;\Pi_1\Pi_2)\notag\\
 \leq&~ I(a^*;\Pi_2 \mid  \Pi_1) +\gamma \log_2 p\notag\\
= &~ \sum_{\hat{\Pi}_{1} \in \Pi_{\lrg}}\Pr\big[\Pi_{1} = \hat{\Pi}_1\big]\cdot I(a^*;\Pi_{2} \mid \Pi_{1} = \hat{\Pi}_1)\notag \\
&~ \hspace{0.5cm}+ \sum_{\hat{\Pi}_{1} \notin \Pi_{\lrg}}\Pr\big[\Pi_{1} = \hat{\Pi}_1\big]\cdot I(a^*;\Pi_{2} \mid \Pi_{1} = \hat{\Pi}_1)  + \gamma \log_2 p\notag\\
\leq & \sum_{\hat{\Pi}_{1} \in \Pi_{\lrg}}\Pr\big[\Pi_{1} = \hat{\Pi}_1\big]\cdot I(a^*;\Pi_{2} \mid \Pi_{1} = \hat{\Pi}_1) + \frac{1}{p^2}\cdot\frac{p^{2-\gamma}}{100}\cdot p^{\gamma}\cdot 2\log_2 p + \gamma \log_2 p\notag\\
= &~\sum_{\hat{\Pi}_{1} \in \Pi_{\lrg}}\Pr\big[\Pi_{1} = \hat{\Pi}_1\big]\cdot I(a^*;\Pi_{2} \mid \Pi_{1} = \hat{\Pi}_1) + (\gamma + 0.02)\log_2 p.
\label{eq:mt-2}
\end{align}
It suffices to show that for any message $\hat{\Pi}_{1} \in \Pi_{\lrg}$, we have
\begin{align}
   I(a^*;\Pi_{2} \mid \Pi_{1} = \hat{\Pi}_1)= o_p(1).  \label{eq:mt-1}
\end{align}

\def\hc{c\hspace{0.03cm}}

Fixing a message $\hPi_1\in \Pi_{\lrg}$, let's take a pause and review the distribution of $(a^*,A^*,r^*)\sim \mu_{p}$ conditioning
  on $\Pi_1=\hPi_1$.
In particular, let's denote the distribution of $(a^*,A^*)$ as $\mu_{p,\hPi_1}$
  because~we don't care about $r^*$ in this lemma; 
  $(a^*,A^*)\sim \mu_{p,\hPi_1}$ is drawn as follows:
\begin{enumerate}
    \item Sample a line $a^*$ from $\Img(\hat{\Pi}_1)$ uniformly at random;
    \item Sample a set of $t-1$ lines from $[p]\times [p]\setminus \{a^*\}$ uniformly at random and add $a^*$ to form $A^*$.
\end{enumerate}
Let $S^*=A^*\cap \Img(\hPi_1)$ and $T^*=A^*\setminus S^*$. 
We prove that $|S^*|$ is large with high probability, when $(a^*,A^*)\sim \mu_{p,\hPi_1}$. 
Formally, after drawing $a^*$, we draw a sequence of $t-1$ lines 
  from $[p]\times [p]\setminus \{a^*\}$ without replacements.
The probability that the $i$th line is in $\smash{\Img(\hPi_1)}$ is at least
$$
\frac{0.01\cdot p^{2-\gamma}-i}{p^2}\ge \frac{p^{ -\gamma}}{101}
$$
using $t=0.2p$.
Using Chernoff bound, we have
\begin{align}
\label{eq:chern}
\Pr_{(a^*,A^*)\sim \mu_{p,\hPi_1}}\left[|S^*| < \frac{p^{1-\gamma}}{200}\right]  
\leq \exp (-\Omega(p^{1-\gamma}) ).
\end{align}
Denote $C = p^{1-\gamma}/200$.
We show that to prove (\ref{eq:mt-1}), it suffices to prove that 
\begin{equation}\label{eq:hehe2}
I\big(a^*;\Pi_2\mid \Pi_1=\hPi_1, |S^*|=\hc\big)=o_p(1)
\end{equation}
  for every $c\ge C$.
This is because
\begin{align}
    I(a^*; \Pi_{2} \mid \Pi_{1} = \hat{\Pi}_1) \leq &~ I(a^*; \Pi_{2} \mid \Pi_{1} = \hat{\Pi}_1, |S^*|)\notag\\
    = &~ \sum_{\hc \geq C} \Pr\left[|S^*| = \hc \mid \Pi_1 = \hat{\Pi}_1\right]\cdot I\big(a^*; \Pi_{2} \mid \Pi_{1} = \hat{\Pi}_1, |S^*| = \hc\big)\notag\\
    &~ \hspace{0.5cm}+ \sum_{\hc < C} \Pr\left[|S^*| = \hc \mid \Pi_1 = \hat{\Pi}_1\right]\cdot I\big(a^*; \Pi_{1} \mid \Pi_{1} = \hat{\Pi}_1, |S^*| = \hc\big) \notag \\
    \leq &~ \sum_{\hc \geq C} \Pr\left[|S^*| = \hc \mid \Pi_1 = \hat{\Pi}_1\right]\cdot I\big(a^*; \Pi_2\mid \Pi_{1} = \hat{\Pi}_1, |S^*| = \hc\big) + o(1).\label{eq:mt-3}
\end{align}
The first step holds as $|S^*|$ is independent of $a^*$ when conditioning on $\Pi_{1} = \hat{\Pi}_1$ (see Fact \ref{fact:mutual-condition}).
The last step follows from \eqref{eq:chern} and that $a^*$ takes values in $[p]\times [p]$.
 
\def\hi{i\hspace{0.03cm}}
 
For a fixed $\hat{\Pi}_1 \in \Pi_\lrg$ and a fixed $\hc \geq C$, 
  we prove (\ref{eq:hehe2}) in the rest of the proof.
Let's again take a pause and think about the distribution of 
  $a^*,S^*,T^*$ and $\Pi_2$ when conditioning on $\smash{\Pi_1=\hPi_1}$ and 
  $|S^*|=\hc$.
Equivalently, $a^*,S^*,T^*$ and $\Pi_2$ can be drawn as follows:
\begin{enumerate}
    \item Draw a sequence of $\hc$ lines $a_1,\ldots,a_{\hc}$ 
      uniformly from $\Img(\hPi_1)$ without replacements; 
    \item Draw a subset $T^*$ of size $t-\hc$ from $([p]\times [p])\setminus 
      \Img(\hPi_1)$ uniformly at random;
    \item Draw $i^*\in [\hc]$ uniformly at random; and
    \item Set $a^*=a_{i^*}$, $S^*=\{a_1,\ldots,a_{\hc}\}$ and $\Pi_2
      = \Pi_2(S^*\cup T^*, \hPi_1)$.
\end{enumerate}
From this description we have that
\begin{align}
    I\big(a^*; \Pi_{2} \mid \Pi_{1} = \hat{\Pi}_1, |S^*| = \hc\big) \leq &~ I\big(a^*; \Pi_{2} \mid \Pi_{1} = \hat{\Pi}_1, |S^*| = \hc, i^*\big)\notag\\
    = &~ \frac{1}{\hc}\sum_{\hi\in [\hc]}I\big(a_{ \hi }; \Pi_{2} \mid \Pi_{1} = \hat{\Pi}_1, |S^*| = \hc, i^*= \hi)\notag \\
    = &~ \frac{1}{\hc}\sum_{\hi\in [\hc]}I\big(a_{ \hi}; \Pi_{2} \mid \Pi_{1} = \hat{\Pi}_1, |S^*|=\hc\big)\label{eq:mt-4}
\end{align}
The first step holds since conditioning on $\Pi_1 = \hat{\Pi}_1$ and $|S^*| = \hc$, the index $i^*$ and the line $a^*$ are independent (no matter what $i^*$ is, $a^*$ is always uniform over $\Img(\hPi_1)$);
the last step holds since 
$i^*$ is independent of $(a_{1}, \ldots, a_{\hc}, \Pi_{2})$ so whether knowing $i^*=\hi$ or not does not 
  affect the distribution of $(a_{\hi},\Pi_2)$.

Let $N=|\Img(\hPi_1)|\ge p^{2-\gamma}/100$.
To bound the RHS of \eqref{eq:mt-4}, we have
\begin{align}
    &~\hspace{-1.5cm}\sum_{\hi\in [\hc]}I\big(a_{\hi}; \Pi_{2} \mid \Pi_{1} = \hat{\Pi}_1, |S^*|=c\big)\notag\\[-2ex]
    \leq &~ I\big(a_{1}\ldots a_{\hc}; \Pi_{2 } \mid \Pi_{1} = \hat{\Pi}_1, |S^*|=c\big) \notag \\[1ex]
    &~ \hspace{1cm}+ \sum_{\hi\in [\hc]}H\big(a_i\mid \Pi_{1} = \hat{\Pi}_1, |S^*|=c\big)- H\big(a_1 \ldots a_{\hc}\mid \Pi_{1} = \hat{\Pi}_1, |S^*|=c\big)\notag\\
    \leq &~ m_2 + \hc\cdot \log_2 N- \sum_{\hi\in [\hc]}\log_2 (N - \hi+1)\notag\\
    = &~ m_2 + \sum_{\hi\in [\hc]}  \log_2\left( \frac{N}{N - \hi+1}\right)\notag
\end{align}
where the first step holds due to Fact \ref{fact:mutual-general}.
Using $c\le t=0.2p$ and $N\ge p^{2-\gamma}/100$, we have 
\begin{align*}
    \sum_{\hi\in [\hc]}  \log_2 \left(\frac{N}{N - \hi+1} \right) \leq 2\sum_{\hi\in [t]}\frac{\hi}{N - \hi} \leq 2\sum_{\hi\in [t]}\frac{t}{N - t} \leq \frac{2t^2}{N - t} \leq 100 p^{\gamma},
\end{align*}
where we used $\log_2(1+x) \leq 2x$ when $x\ge 0$. 
Combining with \eqref{eq:mt-4} and $c\ge C$, we have 
\begin{align}
    I\big(a^*; \Pi_2\mid \Pi_{1} = \hat{\Pi}_1, |S^*| = \hc\big) \leq \frac{m_2 + 100p^{\gamma}}{\hc} 
    =o_p(1),  
\end{align}
where we used the assumption that $\gamma \leq 0.05$ and $m_2\leq \sqrt{p}$.
This finishes the proof of (\ref{eq:hehe2}).
\end{proof}

\subsubsection{Proof of Theorem \ref{lem:lower-one-general} via a direct sum argument}\label{sec:directsum}

Finally we apply the compression protocol from \cite{jain2012direct} (see Lemma \ref{lem:compression}) and 
  a direct sum argument similar to that of \cite{barak2013compress,braverman2011information,braverman2015interactive} 
  to reduce the lower bound of Theorem \ref{lem:lower-one-general} to that of
  Lemma \ref{lem:single} for one cell.
  
\begin{proof}[Proof of Theorem \ref{lem:lower-one-general}]
Recall in the one-cell problem, Alice receives $a\in [p]\times [p]$,
  Bob receives a size-$t$ subset $A$ of $[p]\times [p]$ with $a\in A$,
  and Charlie receives $r\in [p]\times [p]$, distributed according to $\mu_p$.

Assume for a contradiction that there is an $(m_1,m_2)$-bit 
  deterministic 
  communication protocol $(\Pi_1,\Pi_2,\Pi_3)$ for $f_{n,d,p}$ over $\mu_{n,d,p}$
  with $m_1=o(nd\log p)$, $m_2<\sqrt{p}$, and failure probability at most $1/40$.
We first use it to obtain an $(m_1,m_2)$-bit 
  public-coin communication protocol
  $\Pi'$ for the single cell communication problem $f_p$ over $\mu_p$
  that has the same failure probability as $\Pi$.
While the first message $\Pi_1'$ of $\Pi'$ has length $m_1$ (which is too large to
  reach a contradiction with Lemma \ref{lem:single}), we show 
  the amount of (internal) information it contains is small:
$$
I( \Pi_1';a\mid A,R)=o(\log p),
$$
where $R$ denotes the public random string of $\Pi'$. 
Then we apply a compression on $\Pi_1'$ to obtain a new 
  public-coin communication protocol $\Pi ''$ for the single-cell problem such that 
  the first message has $o(\log p)$ bits only (while the second message remains the same as $\Pi'$).
This leads to a contradiction with our lower bound
  for the single-cell problem in Lemma \ref{lem:single}.

Consider the single-cell communication problem $f_p$ over $\mu_p$, 
  with $(a,A,r)\sim \mu_p$ as inputs of the three players, respectively.  
Given $\Pi=(\Pi_1,\Pi_2,\Pi_3)$ for $f_{n,d,p}$ over $\mu_{n,d,p}$,
  we describe a public-coin protocol for the single-cell communication problem as follows:
\begin{flushleft}\begin{enumerate}
    \item Public-coin randomness $R$:
    The protocol starts by sampling a public-coin random string $R$. First $R$ contains $i^*\in [n]$ and $j^*\in [d]$
    sampled independently and uniformly at random. For each $(i,j)\in [n]\times [d] $ with $(i,j) \prec (i^*,j^*)$,
    $R$ contains $a_{i,j}$ drawn from $[p]\times [p]$ independently and uniformly at random.
    For each $(i,j)\in [n]\times [d]$ with $(i,j)\succ (i^*,j^*)$, $R$ contains a size-$t$ subset $A_{i,j}$ sampled from $[p]\times [p]$ independently and uniformly at random.
\item Alice, Bob and Charlie run $\Pi$ on $(X,Y,Z)$ constructed as follows and Charlie returns the same bit that $\Pi$ returns.
The input $X$ of Alice contains $(a_{i,j}:i\in [n],j\in [d])$, where 
  $a_{i,j}$ for each $(i,j)\prec (i^*,j^*)$ is from $R$,  
  $a_{i^*,j^*}$ is set to be $a$ (which is her original input in the single-cell problem), and $a_{i,j}$ for each  $(i,j)\succ (i^*,j^*)$ is sampled  from $A_{i,j}$ (from $R$)
independently and uniformly at random using her private randomness.
The input $Y$ of Bob contains $i^*$ and $(A_{i,j}:i\in [n],j\in[d])$, 
  where $A_{i,j}$ for each $(i,j)\prec (i^*,j^*)$ is sampled as a 
    size-$t$ subset of $[p]\times [p]$ that contains $a_{i,j}$ (from $R$)
    independently and uniformly at random using his private randomness,
    $A_{i^*,j^*}$ is set to be $A$ (which is his original input),
    and $A_{i,j}$ for each $(i,j)\succ (i^*,j^*)$ is from $R$.
Finally the input $Z$ of Charlie contains $i^*,j^*$ and $r $
  (which is his original input).
\end{enumerate}\end{flushleft}

It is easy to verify that when $(a,A,r )\sim\mu_p$,
  the triple $(X,Y,Z)$ above is distributed exactly the same as $\mu_{n,d,p}$ and thus,
  the error probability of the above single-cell protocol $\Pi'$ is 
  exactly the same as the error probability of $\Pi$ (at most $1/40$).

We next bound the amount of internal information the first message $\Pi_1'=\Pi_1(X)$ contains:
$$
I(\Pi_1';a \mid A,R)= I(\Pi_1;a\mid A,R).
$$
To this end, note that these four random variables we care about 
  can also be drawn equivalently as follows:
First we draw $(a_{i,j}:i\in [n],j\in [d])$ uniformly and then
  draw $(A_{i,j}:i\in [n],j\in [d])$ as size-$t$ subsets that contain $a_{i,j}$
  uniformly at random.
Next we draw $i^*\in [n]$ and $j^*\in [d]$ uniformly.
Finally we set $\Pi_1$ to be $\Pi_1(a_{i,j}:i\in [n],j\in [d])$, $a$ to be $a_{i^*,j^*}$, $A$ to be $A_{i^*,j^*}$,
  $R$ to be $(i^*,j^*,a^{\prec}_{i^*,j^*},A^{\succ}_{i^*,j^*})$, where we write
  $a^{\prec}_{i^*,j^*}$ to denote the ordered tuple of $a_{i,j}$ with
  $(i,j)\prec (i^*,j^*)$ and $A^{\succ}_{i^*,j^*}$ to denote the ordered tuple
  of $A_{i,j}$ with $(i,j)\succ (i^*,j^*)$.

In this view we have the following sequence of inequalities:
\begin{align*}
    I(\Pi_1; a \mid A, R) 
    = &~ \frac{1}{nd} \sum_{i\in [n],j\in [d]} I(\Pi_{1}; a_{i,j} \mid a^{\prec}_{i,j}, A_{i,j}, A^{\succ}_{i,j})\\
    \leq &~  \frac{1}{nd} \sum_{i\in [n],j\in [d]} I(\Pi_{1}; a_{i,j} \mid  a^{\prec}_{i,j} , A_{1,1}, \ldots, A_{n,d})\\
    =&~ \frac{1}{nd}\cdot  I(\Pi_{1}; a_{1,1}, \ldots, a_{n,d}\mid A_{1,1}, \ldots, A_{n,d}) 
    \le \frac{m_1}{nd} 
    = o(\log p),
\end{align*}
where the second step follows from Fact \ref{fact:mutual-condition} as $A^{\prec}_{i,j}$ is independent of $a_{i,j}$ 
  conditioning on $a^\prec_{i,j}, A_{i.j},$ $A^{\succ}_{i,j}$, the third step follows from the chain rule of mutual information. 

Now we apply Lemma \ref{lem:compression} on the first message $\Pi_1'$ of
  $\Pi$. 
Formally in Lemma \ref{lem:compression} we take $X$ to be the input $a$ of 
  Alice, $Y$ to be the input $A$ of Bob, $R$ to be the public randomness,
  and $M$ to be the first message $\Pi_1'$ from Alice to Bob.
Setting $\eps' =  {1}/{600}$ in Lemma \ref{lem:compression} and noting that 
  $\Pi_1'$ is independent of $Y$ conditioning on $X$ and $R$, 
  we can replace the first message $\Pi_1'$ by a 
  public-coin one-way protocol from Alice to Bob, where Bob uses the 
  random variable $M'$ in Lemma \ref{lem:compression} as the message from
  Alice to continue to run $\Pi'$.
Let $\Pi''$ be the new public-coin protocol.
Then the length of its first message by Lemma \ref{lem:compression} is at most
$$
\frac{o(\log p)+5}{\eps'}+O\left(\log \frac{1}{\eps'}\right)=o(\log p)
$$
and the length of its second message is still at most $\sqrt{p}$.
The error probability of $\Pi''$ is at most 
$$\frac{1}{40} +6\eps'=\frac{1}{40} +\frac{1}{100} < \frac{9}{200} $$
by  Lemma \ref{lem:compression}.
This contradicts with Lemma \ref{lem:single}. Hence we conclude the proof.
\end{proof}

\section{Memory efficient multi-pass algorithm}
\label{sec:algo}

When the continual learner is allowed to take multiple passes over the sequence of tasks, the memory requirement can be significantly reduced. Formally, we recall the following theorem:

\multiPassAlgo*

\paragraph{Notation} Let $\delta = 0.01$ be the confidence parameter and $\gamma = \frac{1}{10c^2}$.
We assume a lexical order for elements in $\X$ and slightly abuse of notation, for any $x_1, x_2 \in \X$ and $q_1, q_2 \in \R$, we write $(x_1, q_1) \preceq (x_2, q_2)$ if $(q_1 < q_2) \vee (q_1 = q_2 \wedge x_1 < x_2)$.
Let $\mu_{\D_i}(x)$ be the probability density of $x$ on the $i$-th distribution $\D_i$ ($i \in [k]$). 
It is WLOG to assume $\mu_{\D_i}(x) = o(\frac{\eps}{c^3})$ for every $i \in [k]$ and $x\in \X$,
as one can always append $\log_2(c/\eps)$ random bits to the data point $x$.

\begin{algorithm}[!t]
\caption{Multi-pass continual learner}
\label{algo:upper}
\begin{algorithmic}[1]
\State Initialize $\alpha = \frac{1}{4}(\frac{2k}{\eps})^{-2/c}$, $\eta = \log\frac{1 - \alpha}{\alpha}$, $\eps_{t} =  (1+\frac{1}{c})^{t} \cdot \frac{\eps}{20c}$, $\forall t\in [0:c]$  
\State Initialize $N = \Theta(\frac{d + \log(c/\delta)}{\alpha})$, $M_1 = \Theta(\frac{c^4\log(kc/\delta)}{\eps_0})$, $M_2 = \Theta(\frac{\log(kc/\delta)}{\eps_0\alpha^2})$
\For{$t = 1, 2, \ldots, c$}
\State $\hat{w}_t \leftarrow 0, S_{t} \leftarrow \D_{1}^{N}$
\For{$i = 1,2, \ldots, k$}
\State $\{(x_{i, t, \tau}, \hat{q}_{i, t, \tau})\}_{\tau \in [t-1]} \leftarrow \textsc{EstimateQuantile}(\D_i, h_1, \ldots, h_{t-1})$
\State $\hat{w}_{i, t} \leftarrow \textsc{EstimateWeight}(\D_i, h_1, \ldots, h_{t-1})$ 
\State $\hat{w}_{t} \leftarrow \hat{w}_t + \hat{w}_{i, t}$
\State $\hat{\D}_{i, t} \leftarrow \textsc{TruncatedRejectionSampling}(\D_i, h_1, \ldots, h_{t-1})$
\ForEach{training data in $S_{t}$}
\State With probability $\hat{w}_{i, t}/\hat{w}_{t}$, replace it with a sample from $\hat{\D}_{i, t}$ 
\EndFor
\EndFor
\State $h_t = \arg\min_{h \in \mH}\ell_{S_t}(h)$
\EndFor
\State Return $h = \mathsf{maj}(\sum_{t\in [c]} h_t)$
\end{algorithmic}
\end{algorithm}

\begin{algorithm}[!htbp]
\caption*{\textsc{EstimateQuantile}($\D_i, h_1, \ldots, h_{t-1}$)}
\begin{algorithmic}[1]
\For{$\tau = 1, 2, \ldots, t-1$}
\State $S^{q}_{i, t, \tau} \sim \D_{i}^{M_1}$
\State $\bar{S}_{i, t, \tau}^{q} =  \{(x, y)\in S_{i,t, \tau}^{q}: (x,\exp(\eta\sum_{\xi=1}^{\nu}\mathbf{1}[h_\xi(x) \neq y])) \preceq (x_{i, t, \nu}, \hat{q}_{i, t,\nu}), \forall \nu \in [\tau-1]\}$
\State Sort $\{(x, \exp(\eta\sum_{\nu=1}^{\tau}\mathbf{1}[h_{\nu}(x) \neq y]))\}_{x\in \bar{S}_{i, t, \tau}^{q}}$ and select the the top $\eps_{t}$-quantile $(x_{i, t, \tau}, \hat{q}_{i, t, \tau})$
\EndFor
\State Return $\{(x_{i, t, \tau}, \hat{q}_{i, t, \tau})\}_{\tau \in [t-1]}$
\end{algorithmic}
\end{algorithm}

\begin{algorithm}[!htbp]
\caption*{\textsc{EstimateWeight}($\D_i, h_1, \ldots, h_{t-1}$)}
\begin{algorithmic}[1]
\State $S^{w}_{i, t} \sim \D_{i}^{M_2}$
\State $\bar{S}_{i, t}^{w} =  \{(x, y)\in S_{i,t}^{w}: (x,\exp(\eta\sum_{\nu=1}^{\tau}\mathbf{1}[h_\nu(x) \neq y])) \preceq (x_{i, \tau}, \hat{q}_{i, t, \tau}), \,\forall \tau \in [t-2]  \}$
\State Return $\frac{1}{|M_2|}\sum_{(x, y)\in \bar{S}_{i, t}^{w}} \min\{\exp(\eta \sum_{\tau=1}^{t-1}\mathbf{1}[h_\tau(x) \neq y]), \hat{q}_{i, t, t-1}\}$
\end{algorithmic}
\end{algorithm}

\begin{algorithm}[!htbp]
\caption*{\textsc{TruncatedRejectionSampling}($\D_i, h_1, \ldots, h_{t-1}$)}
\begin{algorithmic}[1]
\State Sample $(x, y) \sim \D_i$
\State Reject and go back to Step 1 if $\exists \tau \in [t-2]$, s.t. $(x, \exp(\eta\sum_{\nu=1}^{\tau}\mathbf{1}[h_{\nu}(x)\neq y])) \preceq (x_{i, t, \tau}, \hat{q}_{i, t, \tau})$
\State Sample $\lambda$ from $[0, \hat{q}_{i, t, t-1}]$ uniformly 
\State Reject and return to Step 1 if $\lambda \geq \min\{\exp(\eta\sum_{\nu=1}^{t-1}\mathbf{1}[h_{\nu}(x)\neq y]), \hat{q}_{i, t, t-1}\}$
\State Return $(x, y)$
\end{algorithmic}
\end{algorithm}

\paragraph{Overview of the algorithm}
We provide a high level overview of our approach, the pseudocode can be found at Algorithm \ref{algo:upper}. 
Our algorithm is based on the idea of boosting (note that this means it will be improper). 
During each pass $t\in [c]$, the algorithm maintains a {\em hierarchical  truncated} set of the $i$-th data distribution $\D_i$ for each $i\in [k]$. Let 
\begin{align}
    \X_{i, t, \tau} = \left\{x \in \X: (x, \exp(\sum_{\xi=1}^{\nu} \mathbf{1}[h_{\xi}(x) \neq y])) \preceq (x_{i, t, \nu}, \hat{q}_{i, t, \nu}), \forall \nu \in [\tau-1]\right\}, \quad \forall \tau \in [t] \label{eq:xit}\end{align}
and we take $\X_{i, t, 1} = \cdots = \X_{k,t, 1} = \X$ in the above definition. These are hierarchical sets and with high probability satisfy (See Lemma \ref{lem:hier} for the proof)
\begin{enumerate}
    \item[(1)] $\X_{i,t, t} \subseteq \X_{i, t, t-1} \subseteq \cdots \subseteq \X_{i, t, 1}, \quad \forall i \in [k], t\in [c]$;
    \item[(2)] $\X_{i, c, \tau} \subseteq \cdots \subseteq \X_{i, \tau+1, \tau} \subseteq \X_{i,\tau, \tau}, \quad \forall i \in [k], \tau \in [c]$. 
\end{enumerate}
These sets are obtained via the {\sc EstimateQuantile} procedure, which iteratively computes the top $\eps_t$-quantile of $\D_i$.

Roughly speaking, $\mathcal{X}_{i, t, t-1}$ is the ``support'' of $\D_{i}$ in the $t$-th pass.
Our algorithm performs multiplicative weights update over every data point in $\X$ and the distribution $\D_i$ gets reweighted at every round. 
{\sc EstimateWeight} estimates the updated weight of $\X_{i, t,t-1}$ with the top $\eps_t$-quantile truncated.
{\sc TruncatedRejectionSampling} has access to the original distribution $\D_i$ and performs rejection sampling to get samples from the truncated distribution.

We sample the training set $S_{t}$ of the $t$-th round in a streaming manner. 
Initially we add $N$ dummy samples. In the $i$-th task, we discard old data and replace them with a new one with probability $\hat{w}_{i, t}/\hat{w}_t$.
We run ERM over the training set $S_{t}$ at the end, and derive a hypothesis $h_t$.
The final output is determined by the majority vote $h = \mathsf{maj}(\sum_{t\in [c]}h_t)$, defined as 
\[
h(x) = \mathsf{maj}\left(\sum_{t\in [c]} h_t(x)\right) = \left\{
\begin{matrix}
1 & \sum_{t\in [c]} h_t(x) \geq c/2\\
0 & \text{otherwise}
\end{matrix}
\right.
\]

We sketch the proof of Theorem \ref{thm:upper} and detailed proof of this section can be found in the Appendix \ref{sec:algo-app}.
We first prove that the \textsc{EstimateQuantile} procedure provides good estimations on the quantile each step. 
\begin{lemma}[Quantile estimation]
\label{lem:estimate-quantile}
With probability at least $1 - \frac{\delta}{20}$, the following event holds
\begin{enumerate}
\item For any $i \in [k], t\in [c], \tau \in [t-1]$,
\begin{align*}
    \Pr_{(x, y)\sim \D_{i}| x\in \X_{i, t, \tau}}\left[\left(x, \exp(\eta\sum_{\nu=1}^{\tau} \mathbf{1}\{h_{\tau}(x)\neq y\})\right) \succeq (x_{i, t, \tau}, \hat{q}_{i, t, \tau})\right] \in \left[(1- \gamma)\eps_t, (1+\gamma)\eps_t\right].
\end{align*}
Here $\D_{i}| x\in \X_{i, t, \tau}$ denotes the conditional distribution of $\D_i$ on the event of $x\in \X_{i,t, \tau}$.
\item Denote $P_{i, t, \tau} :=  \Pr_{(x, y)\sim \D_i}[x\in \mathcal{X}_{i, t, \tau}]$. For any $i \in [k], t\in [c], \tau \in [t-1]$,
\begin{align*}
   P_{i, t, \tau+1} \in \left[(1-(1+\gamma)\eps_t)P_{i, t, \tau}, (1-(1-\gamma)\eps_t)P_{i, t, \tau}\right].
\end{align*}
\end{enumerate}
\end{lemma}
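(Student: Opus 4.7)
The plan is to prove the two items by a joint induction on the round index $\tau$, with an outer union bound over $i \in [k]$ and $t \in [c]$. Fix $i$ and $t$. I will maintain the inductive invariant that items 1 and 2 hold for all $\tau' \le \tau$. The base case $\tau = 1$ is trivial since $\X_{i,t,1} = \X$ by definition, and in the inductive step item 2 will be derived deterministically from item 1 at the same round, so the real work is the probabilistic step that produces item 1 at round $\tau$ from the invariant at round $\tau-1$.

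The deterministic half (item 2 from item 1) is immediate: unrolling the definition \eqref{eq:xit} gives
\[
\X_{i,t,\tau+1} = \Big\{x \in \X_{i,t,\tau}\,:\, \big(x, \exp(\eta\textstyle\sum_{\nu=1}^{\tau}\mathbf{1}[h_\nu(x)\ne y])\big) \preceq (x_{i,t,\tau}, \hat q_{i,t,\tau})\Big\},
\]
so $P_{i,t,\tau+1} = P_{i,t,\tau} \cdot \big(1 - \Pr_{\D_i|\X_{i,t,\tau}}[\text{strictly above the pivot}]\big)$, and plugging item 1 in yields the two-sided bound exactly. For the probabilistic half, conditional on the previous invariant, the set $\X_{i,t,\tau}$ and the previous pivots $(x_{i,t,\nu}, \hat q_{i,t,\nu})_{\nu<\tau}$ are fixed, and the samples $S_{i,t,\tau}^q \sim \D_i^{M_1}$ drawn inside \textsc{EstimateQuantile} are fresh and independent of them. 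Hence $\bar S_{i,t,\tau}^q$ is i.i.d.\ from the distribution of $(x,y)\sim \D_i$ restricted to $\X_{i,t,\tau}$, with size concentrated around $M_1 \cdot P_{i,t,\tau}$. Using the inductive form of item 2, we can lower bound $P_{i,t,\tau} \ge \prod_{\nu=1}^{\tau-1}(1-(1+\gamma)\eps_\nu) \ge 1/2$ (since $\sum_{\nu\le c}\eps_\nu = O(\eps)$ is small), so by a Chernoff bound $|\bar S_{i,t,\tau}^q| \ge M_1/4$ with overwhelming probability. On this conditional distribution the selected empirical top-$\eps_t$ quantile $(x_{i,t,\tau}, \hat q_{i,t,\tau})$ is, by a standard multiplicative Chernoff bound on the empirical CDF (valid because each atom of $\D_i$ has mass $o(\eps/c^3) \ll \gamma\eps_t$, so no pointwise collisions cause quantile degeneracy), a $(1\pm\gamma)\eps_t$-quantile of $\D_i \mid \X_{i,t,\tau}$ as long as $M_1/4 \gtrsim \log(1/\delta')/(\eps_t \gamma^2)$. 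Plugging $\gamma = 1/(10c^2)$, $\eps_t \ge \eps_0 = \eps/(20c)$, and $\delta' = \delta/(20 k c^2)$, the choice $M_1 = \Theta(c^4 \log(kc/\delta)/\eps_0)$ in Algorithm \ref{algo:upper} suffices.

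The step I expect to be the most delicate is controlling the adaptive dependencies: $\X_{i,t,\tau}$ is defined through the previously estimated $(x_{i,t,\nu},\hat q_{i,t,\nu})$, which are themselves random. The clean way to handle this is to condition on the filtration generated by rounds $1,\dots,\tau-1$ (which makes $\X_{i,t,\tau}$ measurable and deterministic) before invoking concentration on the fresh sample $S_{i,t,\tau}^q$; the earlier hypotheses $h_1,\dots,h_{t-1}$ are likewise fixed before the pass $t$ begins. After the single-triple analysis succeeds with failure probability at most $\delta/(20 k c^2)$, a union bound over the $\le k c^2$ triples $(i,t,\tau)$ bounds the overall failure probability by $\delta/20$, completing the proof.
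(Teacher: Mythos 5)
Your proof is correct and follows essentially the same route as the paper: induction on $\tau$ in which item~2 is derived deterministically from item~1, a Chernoff bound first on the size of $\bar S^q_{i,t,\tau}$ (conditioned on the filtration up to round $\tau-1$, which makes $\X_{i,t,\tau}$ measurable and the fresh samples conditionally i.i.d.\ from $\D_i\mid\X_{i,t,\tau}$) and then on the empirical quantile, followed by a union bound over the $O(kc^2)$ triples $(i,t,\tau)$. The only blemish is the formula $\prod_{\nu=1}^{\tau-1}(1-(1+\gamma)\eps_\nu)$: item~2 uses the fixed round index $t$, so the product should read $(1-(1+\gamma)\eps_t)^{\tau-1}$; this does not affect the conclusion since the resulting lower bound $P_{i,t,\tau}\ge 1/2$ still holds.
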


We next prove the sets $\{X_{i, t, \tau}\}_{i\in [k], t\in [c], \tau \in [t]}$ have a hierarchical structure.
\begin{lemma}[Hierarchical set]
\label{lem:hier}
Condition on the event of Lemma \ref{lem:estimate-quantile}, we have
\begin{enumerate}
    \item $\X_{i,t, t} \subseteq \X_{i, t, t-1} \subseteq \cdots \subseteq \X_{i, t, 1}, \quad \forall i \in [k], t\in [c]$;
    \item $\X_{i, c, \tau} \subseteq \cdots \subseteq \X_{i, \tau+1, \tau} \subseteq \X_{i,\tau, \tau}, \quad \forall i \in [k], \tau \in [c]$. 
\end{enumerate}
\end{lemma}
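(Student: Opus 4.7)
The plan is to dispose of Part~1 directly from the definition: $\X_{i,t,\tau+1}$ carries all the threshold constraints defining $\X_{i,t,\tau}$ together with one additional constraint at level $\tau$, so the chain $\X_{i,t,t}\subseteq\cdots\subseteq\X_{i,t,1}$ is immediate. For Part~2, I will induct on $\tau$ to prove $\X_{i,t+1,\tau}\subseteq \X_{i,t,\tau}$ for every $t\in[\tau,c-1]$; the base case $\tau=1$ is trivial since $\X_{i,t,1}=\X$ for all $t$.

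For the inductive step, write $T_{i,t,\tau-1}\coloneqq \X_{i,t,\tau-1}\setminus \X_{i,t,\tau}$ for the slice truncated off at level $\tau-1$ in pass $t$. Part~1 together with the induction hypothesis gives $\X_{i,t+1,\tau}\subseteq \X_{i,t+1,\tau-1}\subseteq \X_{i,t,\tau-1}$, so the target containment reduces to the key claim
\[
T_{i,t,\tau-1}\cap \X_{i,t+1,\tau-1}\subseteq T_{i,t+1,\tau-1}.
\]
Crucially, the weight function $f_{\tau-1}(x,y)=\exp(\eta\sum_{\nu=1}^{\tau-1}\mathbf{1}[h_\nu(x)\neq y])$ used to define both sides depends only on the previously committed hypotheses $h_1,\ldots,h_{\tau-1}$, and not on the pass index. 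Each side is therefore a ``top prefix'' of $\X_{i,t+1,\tau-1}$ under the lexicographic order $\preceq$ on $(x,f_{\tau-1}(x,y))$, and between two top prefixes of a common sorted set the one of smaller $\D_i$-mass is automatically contained in the larger. The claim thereby reduces to the measure inequality
\[
\D_i\bigl(T_{i,t,\tau-1}\cap \X_{i,t+1,\tau-1}\bigr)\leq \D_i\bigl(T_{i,t+1,\tau-1}\bigr).
\]

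The main obstacle is verifying this measure inequality, and it is exactly here that the algorithm's schedule $\eps_{t+1}=(1+1/c)\eps_t$ and slack $\gamma=1/(10c^2)$ earn their keep. Part~1 of Lemma~\ref{lem:estimate-quantile} bounds the left-hand side by $(1+\gamma)\eps_t\cdot P_{i,t,\tau-1}$ and lower-bounds the right-hand side by $(1-\gamma)\eps_{t+1}\cdot P_{i,t+1,\tau-1}$, while iterating Part~2 of that lemma yields
\[
\frac{P_{i,t,\tau-1}}{P_{i,t+1,\tau-1}}\leq \left(\frac{1-(1-\gamma)\eps_t}{1-(1+\gamma)\eps_{t+1}}\right)^{\tau-2}.
\]
A short estimate using $\eps_t\leq e\eps/(20c)$ and $\tau-2\leq c-1$ controls the per-step ratio by $1+O(\eps_t/c)$ and hence the accumulated ratio by $1+O(1/c)$, which is strictly smaller than the gain factor $(1-\gamma)(1+1/c)/(1+\gamma)=1+\Omega(1/c)$ we need to beat. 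So the measure inequality holds, the top-prefix observation supplies the key claim, and the induction closes.
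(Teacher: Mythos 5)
Your proof is correct and follows essentially the same route as the paper: induct on $\tau$, use Part~1 together with the inductive hypothesis to get $\X_{i,t+1,\tau}\subseteq\X_{i,t+1,\tau-1}\subseteq\X_{i,t,\tau-1}$, and reduce the remaining containment to the mass comparison $(1+\gamma)\eps_t P_{i,t,\tau-1}\le(1-\gamma)\eps_{t+1}P_{i,t+1,\tau-1}$, which is exactly the paper's Eq.~\eqref{eq:q10} verified via Lemma~\ref{lem:tech-algo} and Lemma~\ref{lem:estimate-quantile}. Your ``two top prefixes of a common sorted set'' phrasing makes the reduction to a measure inequality a bit more explicit than the paper's quantile language, and your exponent $\tau-2$ is the tight count, but neither change alters the substance; the only thing I'd tighten is the last step, where ``$1+O(1/c)$ is strictly smaller than $1+\Omega(1/c)$'' needs the explicit constant check of Lemma~\ref{lem:tech-algo} (your accumulated ratio is really $1+O(\eps/c)$ with $\eps\le 1/10$, which is what makes it beat the gain factor).
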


We formally define the updated distribution at each pass.
Ideally, we sample from the {\em truncated} distribution.
\begin{definition}[Truncated distribution]
\label{def:trun}
For any $i \in [k], t\in [c]$, define the truncated distribution $\D_{i, t, \trun}$ as
\begin{align*}
    \mu_{\D_{i, t,\trun}}(x) = \frac{\mathbf{1}[x \in \X_{i, t, t-1}]\cdot \mu_{\D_i}(x) \cdot \min\{\exp(\eta \sum_{\tau=1}^{t-1}\mathbf{1}[h_\tau(x) \neq y]), \hat{q}_{i, t, t-1}\}}{\sum_{x'\in \X}\mathbf{1}[x' \in \X_{i, t, t-1}]\cdot \mu_{\D_i}(x') \cdot \min\{\exp(\eta \sum_{\tau=1}^{t-1}\mathbf{1}[h_\tau(x') \neq y]), \hat{q}_{i, t, t-1}\} }, \quad \forall x\in \X.
\end{align*}
The mixed truncated distribution $\D_{t, \trun}$ is defined as
\begin{align*}
    \mu_{\D_{t, \trun}}(x) = \frac{\sum_{i=1}^{k}\mathbf{1}[x \in \X_{i, t, t-1}]\cdot \mu_{\D_i}(x) \cdot \min\{\exp(\eta \sum_{\tau=1}^{t-1}\mathbf{1}[h_\tau(x) \neq y]), \hat{q}_{i, t, t-1}\}}{\sum_{i=1}^{k}\sum_{x'\in \X}\mathbf{1}[x' \in \X_{i, t, t-1}]\cdot \mu_{\D_i}(x') \cdot \min\{\exp(\eta \sum_{\tau=1}^{t-1}\mathbf{1}[h_\tau(x') \neq y]), \hat{q}_{i, t, t-1}\} }, \quad \forall x\in \X.
\end{align*}
To connect $\D_{t, \trun}$ and $\D_{i,t, \trun}$, we define the weights $\{w_{i, t}\}_{i\in [k]}$
\begin{align*}
w_{i, t} =\sum_{x\in \X}\mathbf{1}[x \in \X_{i, t, t-1}]\cdot \mu_{\D_i}(x) \cdot \min\left\{\exp(\eta \sum_{\tau=1}^{t-1}\mathbf{1}[h_\tau(x) \neq y]), \hat{q}_{i, t, t-1}\right\}
\end{align*}
and $p_{i, t} = \frac{w_{i, t}}{\sum_{i'\in [k]}w_{i', t}}$. 
The mixed distribution $\D_{i, \trun}$ can be derived by first drawing an index $i$ from $[k]$ according to $\{p_{i, t}\}_{i\in [k]}$, and then drawing from $\D_{i, t, \trun}$. That is to say, $\D_{t, \trun} = \sum_{i=1}^{k}p_{i, t}\D_{i, t, \trun}$.
\end{definition}

We next prove that the {\sc EstimateWeight} procedure approximately estimates the weight of each distribution $\D_{i, t}$.
\begin{lemma}[Weight estimation]
\label{lem:estimate-weight}
Condition on the event of Lemma \ref{lem:estimate-quantile}, with probability at least $1 - \frac{\delta}{20}$, the {\sc EstimateWeight} procedure outputs $\hat{w}_{i, t} = (1\pm \alpha/8)w_{i, t}$. 
Furthermore, define $\hat{p}_{i, t} = \frac{\hat{w}_{i, t}}{\sum_{i'\in [k]}\hat{w}_{i', t}}$, then $\hat{p}_{i, t} = (1\pm \alpha/3)p_{i, t}$ for any $i \in [k], t\in [c]$.
\end{lemma}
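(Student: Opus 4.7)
The plan is to view \textsc{EstimateWeight} as an empirical-mean estimator of a bounded nonnegative random variable and apply a multiplicative Chernoff (or Bernstein) bound, where the key quantitative ingredient is a lower bound on $w_{i,t}/\hat q_{i,t,t-1}$ extracted from the quantile guarantee of Lemma \ref{lem:estimate-quantile}.

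Concretely, I would first rewrite the estimator. For each sample $(x_\ell,y_\ell)\sim\D_i$ drawn in \textsc{EstimateWeight}, set
\[
V_\ell \;=\; \mathbf{1}\!\left[x_\ell\in\X_{i,t,t-1}\right]\cdot \min\!\left\{\exp\!\Big(\eta\sum_{\tau=1}^{t-1}\mathbf{1}[h_\tau(x_\ell)\neq y_\ell]\Big),\,\hat q_{i,t,t-1}\right\},
\]
so that $V_\ell\in[0,\hat q_{i,t,t-1}]$, $\E[V_\ell]=w_{i,t}$ (by the definitions of $\X_{i,t,t-1}$ in \eqref{eq:xit} and of $w_{i,t}$ in Definition \ref{def:trun}), and $\hat w_{i,t}=\frac{1}{M_2}\sum_\ell V_\ell$. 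The bounded multiplicative Chernoff bound then gives, for any $\beta\in(0,1)$,
\[
\Pr\!\left[\,|\hat w_{i,t}-w_{i,t}|>\beta w_{i,t}\,\right]\;\le\; 2\exp\!\left(-\Omega\!\left(\beta^{2}\,M_2\,\frac{w_{i,t}}{\hat q_{i,t,t-1}}\right)\right).
\]

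The crucial step is to lower-bound $w_{i,t}/\hat q_{i,t,t-1}$ on the event of Lemma \ref{lem:estimate-quantile}. Part (1) of that lemma says that, conditioned on $x\in\X_{i,t,t-1}$, at least a $(1-\gamma)\eps_t$ fraction of the $\D_i$-mass has weight $\succeq (x_{i,t,t-1},\hat q_{i,t,t-1})$, and part (2) gives $P_{i,t,t-1}\geq \prod_{\tau=1}^{t-2}(1-(1+\gamma)\eps_t)\ge 1/2$ since $t\le c$ and $\eps_t = O(\eps/c)$. On the subset where $x\in\X_{i,t,t-1}$ and the weight is $\ge\hat q_{i,t,t-1}$, each term in $w_{i,t}$ contributes exactly $\hat q_{i,t,t-1}$, so
\[
w_{i,t}\;\ge\;(1-\gamma)\eps_t\,P_{i,t,t-1}\,\hat q_{i,t,t-1}\;=\;\Omega(\eps_0)\,\hat q_{i,t,t-1}.
\]
Plugging this bound and $\beta=\alpha/8$ into the Chernoff inequality together with the algorithm's choice $M_2=\Theta(\log(kc/\delta)/(\eps_0\alpha^2))$ makes the failure probability at most $\delta/(20kc)$ for each fixed pair $(i,t)$. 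A union bound over the $kc$ pairs yields the first claim with overall failure probability at most $\delta/20$.

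For the second claim, once $\hat w_{i,t}=(1\pm\alpha/8)w_{i,t}$ holds for all $i$, summing gives $\sum_{i'}\hat w_{i',t}=(1\pm\alpha/8)\sum_{i'}w_{i',t}$, hence
\[
\hat p_{i,t}\;=\;\frac{\hat w_{i,t}}{\sum_{i'}\hat w_{i',t}}\;\in\;\frac{1\pm\alpha/8}{1\pm\alpha/8}\,p_{i,t}\;\subseteq\;(1\pm\alpha/3)\,p_{i,t},
\]
using $(1+\alpha/8)/(1-\alpha/8)\le 1+\alpha/3$ for small $\alpha$, which is valid since $\alpha=\tfrac14(2k/\eps)^{-2/c}\le 1/4$.

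The main obstacle is purely the $w_{i,t}/\hat q_{i,t,t-1}\gtrsim \eps_0$ bound; everything else is a standard multiplicative concentration plus a union bound, and the ratio manipulation for $\hat p_{i,t}$ is routine. The reason that bound is nontrivial is that $w_{i,t}$ could a priori be arbitrarily small relative to the range $\hat q_{i,t,t-1}$ of each summand, which would blow up the sample complexity; the $\eps_t$-quantile truncation is exactly what prevents this, and the proof above is the cleanest way to cash in that structural feature.
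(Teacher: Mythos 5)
Your proof takes essentially the same route as the paper: view $\hat w_{i,t}$ as an empirical mean of i.i.d.\ variables bounded in $[0,\hat q_{i,t,t-1}]$, lower-bound $w_{i,t}\ge \frac12\eps_t\,\hat q_{i,t,t-1}$ by isolating the $(1-\gamma)\eps_t P_{i,t,t-1}$ mass in $\X_{i,t,t-1}\setminus\X_{i,t,t}$ where the $\min$ saturates at $\hat q_{i,t,t-1}$ (using $P_{i,t,t-1}\ge 1/2$ from Lemma~\ref{lem:estimate-quantile}(2)), apply a multiplicative Chernoff bound with $M_2=\Theta(\log(kc/\delta)/(\eps_0\alpha^2))$, and finish with a union bound over $(i,t)$ plus a routine ratio computation for $\hat p_{i,t}$. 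This matches the paper's argument step for step, up to cosmetic reorganization (you phrase the key quantity as the ratio $w_{i,t}/\hat q_{i,t,t-1}\gtrsim\eps_0$ rather than the product lower bound, and you union over $(i,t)$ jointly rather than over $i$ then $t$).
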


We then analyse the {\sc TruncatedRejectionSampling} process and prove that it returns a sample from $\D_{i, t, \trun}$ without much overhead on sample complexity.
\begin{lemma}[Truncated rejection sampling]
\label{lem:truncated-rejection-sampling}
For any $i\in [k], t\in [c]$, condition on the event of Lemma \ref{lem:estimate-quantile}, the {\sc TruncatedRejectionSampling}($\D_i, h_1, \ldots, h_{t-1}$) returns a sample from the distribution of $\D_{i,t, \trun}$. 
With probability at least $1 - \frac{\delta}{20}$, the total number of samples that have been drawn for the training set $\{S_{t}\}_{t\in [c]}$ is at most
$
O\left(\frac{dc^2\log^3(kdc/\eps\delta)}{\eps\alpha}\right)
$.
\end{lemma}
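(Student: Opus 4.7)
The plan is to separate correctness of the sampler from the sample-complexity bound. Correctness follows from a standard rejection-sampling analysis. A single call of \textsc{TruncatedRejectionSampling} accepts the pair $(x,y)$ if and only if $x\in\X_{i,t,t-1}$ (the check in Step 2) and the uniform $\lambda\in[0,\hat q_{i,t,t-1}]$ falls below $\min\{\exp(\eta\sum_{\nu=1}^{t-1}\mathbf{1}[h_\nu(x)\neq y]),\hat q_{i,t,t-1}\}$ (the check in Steps 3--4). Marginalizing out $\lambda$, the joint density of drawing and accepting $(x,y)$ is proportional to $\mu_{\D_i}(x)\cdot\mathbf{1}[x\in\X_{i,t,t-1}]\cdot\min\{\exp(\cdot),\hat q_{i,t,t-1}\}$, which is exactly the unnormalized density of $\D_{i,t,\trun}$ from Definition \ref{def:trun}; hence the returned sample is distributed as $\D_{i,t,\trun}$.

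For the sample-complexity claim, the per-attempt acceptance probability computed above equals $w_{i,t}/\hat q_{i,t,t-1}$, and I will show that under the event of Lemma \ref{lem:estimate-quantile} one has $\hat q_{i,t,t-1}/w_{i,t}=O(1/\eps_t)=O(c/\eps)$. Item 1 of that lemma with $\tau=t-1$ gives $\Pr_{\D_i\mid\X_{i,t,t-1}}\!\bigl[\exp(\eta\sum_{\nu=1}^{t-1}\mathbf{1}[h_\nu(x)\neq y])\geq\hat q_{i,t,t-1}\bigr]\geq(1-\gamma)\eps_t$, so $\E_{\D_i\mid\X_{i,t,t-1}}[\min(\exp(\cdot),\hat q_{i,t,t-1})]\geq(1-\gamma)\eps_t\cdot\hat q_{i,t,t-1}$. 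Multiplying by $P_{i,t,t-1}$, which is $\Omega(1)$ because item 2 of Lemma \ref{lem:estimate-quantile} compounds the shrinkage factor $1-(1\pm\gamma)\eps_t$ at most $c$ times and $c\eps_t=O(\eps)\ll 1$, yields $w_{i,t}\geq\Omega(\eps_t)\cdot\hat q_{i,t,t-1}$. Thus each TRS attempt succeeds with probability $\Omega(\eps/c)$, so each returned sample uses $O(c/\eps)$ fresh draws from $\D_i$ in expectation.

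Next I bound the expected number of TRS calls per pass by a reservoir-sampling style argument. For any fixed slot of $S_t$, the probability it is updated at round $i$ is $\hat w_{i,t}/\sum_{i'\leq i}\hat w_{i',t}$, so the expected number of TRS calls filling that slot throughout pass $t$ is $\sum_{i=1}^k\hat w_{i,t}/\sum_{i'\leq i}\hat w_{i',t}\leq 1+\ln\!\bigl(\sum_{i\leq k}\hat w_{i,t}/\hat w_{1,t}\bigr)=O(\log(k/\eps))$, where I use $w_{i,t}\in[\Omega(1),\poly(k/\eps)]$ (the lower bound since $P_{i,t,t-1}=\Omega(1)$ and $\exp(\cdot)\geq 1$, the upper bound since $e^\eta=(1-\alpha)/\alpha$ is polynomial in $k/\eps$ to a constant power and $t\leq c$) together with Lemma \ref{lem:estimate-weight}. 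Multiplying by the number of slots $N=\tilde O(d/\alpha)$, by the $O(c/\eps)$ expected draws per TRS call, and by the $c$ passes, gives an expected total of $\tilde O(dc^2/(\eps\alpha))$ samples across $\{S_t\}_{t\in[c]}$.

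Finally I would promote this expectation bound to a $(1-\delta/20)$ high-probability bound by applying a Chernoff/Bernstein tail inequality to the sum of independent geometric random variables counting attempts inside each TRS call (conditioned on the realizations of the outer reservoir process), which absorbs the extra $\log^3(kdc/\eps\delta)$ factor in the claimed bound. The main obstacle I anticipate is the clean control of $\hat q_{i,t,t-1}/w_{i,t}$: one must simultaneously track the hierarchical nesting of the $\X_{i,t,\tau}$'s from Lemma \ref{lem:hier}, the multiplicative $(1\pm\gamma)$ slack in the quantile estimates from Lemma \ref{lem:estimate-quantile}, and the fact that the cumulative mass loss across all $c$ passes stays $O(\eps)$ so that $P_{i,t,t-1}$ remains bounded away from zero uniformly in $i,t$.
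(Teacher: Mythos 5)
Your proposal matches the paper's proof in all essential respects: correctness via the standard rejection-sampling identity showing the accepted sample has unnormalized density $\mathbf{1}[x\in\X_{i,t,t-1}]\cdot\mu_{\D_i}(x)\cdot\min\{\exp(\cdot),\hat q_{i,t,t-1}\}$; the $\Omega(\eps_0)$ per-attempt acceptance probability from the quantile-estimation guarantee (you phrase it as the exact value $w_{i,t}/\hat q_{i,t,t-1}$, the paper lower-bounds it via the mass of the never-rejected region $\X_{i,t,t-1}\setminus\X_{i,t,t}$, but these are the same computation); the telescoping $\sum_i \hat w_{i,t}/\sum_{i'\leq i}\hat w_{i',t}=O(\log(k/\eps))$ reservoir argument; and a Chernoff-type step to promote the expectation to a $(1-\delta/20)$-probability bound at the cost of the $\log^3$ factor. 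No genuine gap.
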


For any $t\in [c]$, define $\hat{\D}_{t, \trun} = \sum_{i=1}^{t}\hat{p}_{i, t}\D_{i, t, \trun}$. Recall that our algorithm samples and updates the training data $S_t$ in a streaming manner. It is easy to verify that each data of $S_t$ is drawn from $\hat{\D}_{t, \trun}$.

\begin{lemma}
\label{lem:training-set}
For any $t\in [c]$, the training set $S_t$ is drawn from $\hat{\D}_{t, \trun}^{N}$.
\end{lemma}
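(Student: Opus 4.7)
The plan is a straightforward reservoir-sampling calculation. Fix a slot of $S_t$ and track, across the inner loop $i=1,\ldots,k$, the probability that the slot currently holds a sample from each $\D_{i,t,\trun}$ (or from the initial $\D_1$ dummy). By construction, the slots are processed independently, so it suffices to analyze one slot and then conclude that the $N$ entries of $S_t$ are i.i.d.

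First I would set notation: let $W_i \coloneqq \sum_{j=1}^{i} \hat{w}_{j,t}$, so that the replacement probability used at iteration $i$ of the inner loop is exactly $\hat{w}_{i,t}/W_i$, and by Lemma~\ref{lem:truncated-rejection-sampling} the proposed replacement is a sample from $\D_{i,t,\trun}$. Let $E_i$ be the event ``the slot's current content was installed at iteration $i$ and has survived all later iterations.'' Then $E_i$ occurs precisely when the slot is replaced at step $i$ and then not replaced at any step $j>i$, so by independence of the coin flips across the inner loop,
\begin{align*}
\Pr[E_i] \;=\; \frac{\hat{w}_{i,t}}{W_i}\,\prod_{j=i+1}^{k}\!\left(1-\frac{\hat{w}_{j,t}}{W_j}\right)
\;=\; \frac{\hat{w}_{i,t}}{W_i}\,\prod_{j=i+1}^{k}\frac{W_{j-1}}{W_j}
\;=\; \frac{\hat{w}_{i,t}}{W_k} \;=\; \hat{p}_{i,t}.
\end{align*}
Also, since $W_1=\hat{w}_{1,t}$, the first iteration replaces with probability $1$, so the event ``the original $\D_1$ dummy survives'' has probability $0$ and $\sum_{i=1}^{k}\Pr[E_i]=1$, as required.

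Conditional on $E_i$, the sample sitting in the slot is exactly the one drawn at step $i$ from $\hat{\D}_{i,t}=\D_{i,t,\trun}$ (Lemma~\ref{lem:truncated-rejection-sampling}), and its value is independent of which $E_i$ occurred (the coin flips used to decide replacement are independent of the samples themselves). Thus the unconditional law of the slot equals $\sum_{i=1}^{k} \hat{p}_{i,t}\,\D_{i,t,\trun} = \hat{\D}_{t,\trun}$. Finally, because the algorithm executes the replacement steps independently across the $N$ slots, and independently of the sampling coins and draws used for other slots, the $N$ entries of $S_t$ are mutually independent, giving $S_t \sim \hat{\D}_{t,\trun}^{\,N}$.

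There is no real obstacle here; the only things to be careful about are (i) invoking Lemma~\ref{lem:truncated-rejection-sampling} to identify the proposal distribution with $\D_{i,t,\trun}$ (which requires conditioning on the high-probability event of Lemma~\ref{lem:estimate-quantile}, so the statement is understood in that event), and (ii) writing the telescoping product $\prod_{j=i+1}^{k} W_{j-1}/W_j = W_i/W_k$ cleanly.
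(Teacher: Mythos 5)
Your proof is correct and takes essentially the same route as the paper's: a reservoir-sampling calculation showing the telescoping product $\frac{\hat{w}_{i,t}}{W_i}\prod_{j>i}(1-\hat{w}_{j,t}/W_j)=\hat{p}_{i,t}$, followed by an appeal to Lemma~\ref{lem:truncated-rejection-sampling} to identify the conditional law with $\D_{i,t,\trun}$. You are somewhat more careful than the paper in spelling out the disappearance of the initial $\D_1$ dummy samples (since the first replacement probability is $1$), the independence across the $N$ slots, and the implicit conditioning on the event of Lemma~\ref{lem:estimate-quantile}; these are reasonable details to make explicit but do not change the argument.
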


Additionally, combining Lemma \ref{lem:estimate-weight} and Lemma \ref{lem:truncated-rejection-sampling}, one can prove $\hat{\D}_{t, \trun}$ is close to $\D_{t, \trun}$ 
\begin{lemma}[Bounded TV distance]
\label{lem:distribution-tv}
For any $t\in [c]$, conditioned on the events of Lemma \ref{lem:estimate-quantile} and Lemma \ref{lem:estimate-weight}, the empirical truncated distribution $\hat{\D}_{t, \trun}$ and truncated distribution $\D_{t, \trun}$ satisfies $\|\hat{\D}_{t, \trun} - \D_{t, \trun}\|_{\TV} \leq \alpha/3$.
\end{lemma}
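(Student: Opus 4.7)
The plan is to exploit the fact that $\D_{t,\trun}$ and $\hat{\D}_{t,\trun}$ are mixtures of the \emph{same} component distributions $\{\D_{i,t,\trun}\}_{i\in[k]}$, differing only in the mixing weights $p_{i,t}$ versus $\hat{p}_{i,t}$. Thus the whole problem reduces to controlling the $\ell_1$-distance between the weight vectors, which is exactly what Lemma \ref{lem:estimate-weight} gives us.

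First I would write out, for every $x \in \X$,
\begin{align*}
\mu_{\hat{\D}_{t,\trun}}(x) - \mu_{\D_{t,\trun}}(x) = \sum_{i=1}^{k} (\hat{p}_{i,t} - p_{i,t}) \cdot \mu_{\D_{i,t,\trun}}(x),
\end{align*}
and then apply the triangle inequality to obtain the standard mixture bound
\begin{align*}
\|\hat{\D}_{t,\trun} - \D_{t,\trun}\|_{\TV} = \tfrac{1}{2} \sum_{x\in\X} \Big| \sum_{i=1}^{k} (\hat{p}_{i,t} - p_{i,t}) \mu_{\D_{i,t,\trun}}(x) \Big| \le \tfrac{1}{2} \sum_{i=1}^{k} |\hat{p}_{i,t} - p_{i,t}|,
\end{align*}
where in the last step I use $\sum_{x} \mu_{\D_{i,t,\trun}}(x) = 1$ for each $i$.

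Next I would invoke Lemma \ref{lem:estimate-weight} which, conditioned on the events of Lemmas \ref{lem:estimate-quantile} and \ref{lem:estimate-weight}, guarantees $\hat{p}_{i,t} = (1\pm\alpha/3) p_{i,t}$, i.e.\ $|\hat{p}_{i,t} - p_{i,t}| \le (\alpha/3)\,p_{i,t}$ for every $i\in[k]$. Summing and using $\sum_i p_{i,t} = 1$ yields $\sum_{i=1}^k |\hat{p}_{i,t} - p_{i,t}| \le \alpha/3$, so that $\|\hat{\D}_{t,\trun} - \D_{t,\trun}\|_{\TV} \le \alpha/6 \le \alpha/3$, as claimed.

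There is essentially no obstacle here: the lemma is a direct consequence of the multiplicative weight-estimation guarantee combined with the standard TV bound for common-component mixtures. The only point one needs to be slightly careful about is that $\hat{p}$ and $p$ are both valid probability vectors (which follows from the definitions $\hat{p}_{i,t} = \hat{w}_{i,t}/\sum_{i'} \hat{w}_{i',t}$ and $p_{i,t} = w_{i,t}/\sum_{i'} w_{i',t}$), so the differences $\hat{p}_{i,t}-p_{i,t}$ sum to $0$ over $i$ and the one-sided multiplicative guarantee translates cleanly into a two-sided $\ell_1$ bound via $\sum_i p_{i,t} = 1$.
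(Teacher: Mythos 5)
Your argument is correct, and it is the natural (indeed essentially the only) proof: decompose both mixtures over the common components $\{\D_{i,t,\trun}\}_{i\in[k]}$, bound the TV distance by half the $\ell_1$-distance between the weight vectors, and plug in the multiplicative guarantee $|\hat{p}_{i,t}-p_{i,t}|\le(\alpha/3)p_{i,t}$ from Lemma~\ref{lem:estimate-weight} together with $\sum_i p_{i,t}=1$, giving the even tighter bound $\alpha/6\le\alpha/3$. The paper itself omits an explicit proof of this lemma (it is absent from the appendix), presumably because it is exactly this direct calculation; your write-up fills that gap faithfully. One minor notational wrinkle worth flagging: the paper's display just before Lemma~\ref{lem:training-set} writes $\hat{\D}_{t,\trun}=\sum_{i=1}^{t}\hat{p}_{i,t}\D_{i,t,\trun}$, where the upper summation limit $t$ is clearly a typo for $k$ (otherwise the mixture would not even be a probability distribution nor match $\D_{t,\trun}=\sum_{i=1}^k p_{i,t}\D_{i,t,\trun}$); your proof correctly treats it as a sum over all $i\in[k]$.
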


By a standard application of VC theory, we can show the hypothesis $h_t$ returned by Algorithm \ref{algo:upper} in the $t$-th pass achieves good accuracy on $\D_{t, \trun}$.

\begin{lemma}
\label{lem:acc}
For any $t\in [c]$, condition on the events of Lemma \ref{lem:distribution-tv}, with probability at least $1 - \frac{\delta}{20c}$, we have $\ell_{\D_{t, \trun}}(h_t) \leq \frac{\alpha}{2}$.
\end{lemma}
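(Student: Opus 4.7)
The plan is to combine the realizability of the original tasks, the ERM guarantee, and the TV bound of Lemma \ref{lem:distribution-tv} in the natural way. First I would observe that the realizability assumption guarantees the existence of $h^* \in \mH$ with $\ell_{\D_i}(h^*) = 0$ for every $i\in[k]$. Since each truncated distribution $\D_{i,t,\trun}$ is obtained from $\D_i$ only by restricting its support (to $\X_{i,t,t-1}$) and reweighting, it is still realized by $h^*$; the same holds for the mixture $\D_{t,\trun} = \sum_i p_{i,t}\D_{i,t,\trun}$ and for the empirical mixture $\hat{\D}_{t,\trun} = \sum_i \hat{p}_{i,t}\D_{i,t,\trun}$. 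Hence $h^*$ is a zero-loss hypothesis on $\hat{\D}_{t,\trun}$.

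Next I would invoke Lemma \ref{lem:training-set} to conclude $S_t \sim \hat{\D}_{t,\trun}^N$ with $N = \Theta((d+\log(c/\delta))/\alpha)$. By realizability, $\ell_{S_t}(h^*) = 0$, so the ERM step $h_t = \arg\min_{h\in\mH} \ell_{S_t}(h)$ returns a hypothesis with zero empirical loss. Applying the ERM generalization bound of Lemma \ref{lem:learning-erm} with target accuracy $\alpha/6$ and failure probability $\delta/(20c)$, the sample size $N$ (which was chosen precisely so that $N = \Omega((d+\log(c/\delta))/\alpha)$) suffices to conclude that, with probability at least $1-\delta/(20c)$,
\[
\ell_{\hat{\D}_{t,\trun}}(h_t) \le \frac{\alpha}{6}.
\]

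Finally I would transfer this bound from $\hat{\D}_{t,\trun}$ to $\D_{t,\trun}$. Since the loss indicator $\mathbf{1}[h_t(x)\neq y]$ is a $\{0,1\}$-valued function, the standard inequality $|\ell_P(h)-\ell_Q(h)|\le \|P-Q\|_{\TV}$ combined with Lemma \ref{lem:distribution-tv} gives
\[
\ell_{\D_{t,\trun}}(h_t) \le \ell_{\hat{\D}_{t,\trun}}(h_t) + \|\hat{\D}_{t,\trun}-\D_{t,\trun}\|_{\TV} \le \frac{\alpha}{6} + \frac{\alpha}{3} = \frac{\alpha}{2},
\]
which is the claim. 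The only subtle point is confirming that realizability carries through both the truncation and the reweighting, but this is immediate because truncation preserves supports restricted to correctly-labeled points and the weights never depend on whether $h^*$ agrees with the label; there is no genuine obstacle here, so the proof is essentially a two-line combination of the preceding lemmas with the ERM bound.
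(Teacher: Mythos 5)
Your proof is correct and follows essentially the same route as the paper's: realizability of $\hat{\D}_{t,\trun}$ lets ERM on $S_t \sim \hat{\D}_{t,\trun}^N$ (Lemma~\ref{lem:training-set}) give $\ell_{\hat{\D}_{t,\trun}}(h_t)\le \alpha/6$ via Lemma~\ref{lem:learning-erm}, and the TV bound of Lemma~\ref{lem:distribution-tv} transfers this to $\ell_{\D_{t,\trun}}(h_t)\le \alpha/6 + \alpha/3 = \alpha/2$. The paper states the same two-step chain in a single display; your version simply makes the realizability-preservation observation explicit, which the paper leaves implicit.
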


Now we are able to derive the accuracy guarantee.
\begin{lemma}
\label{lem:boosting-correct}
Let $h = \mathsf{maj}(\sum_{t\in [c]} h_t)$ be the function returned by Algorithm \ref{algo:upper}, with probability at least $1 - \frac{\delta}{4}$, $\ell_{\D_i}(h) \leq \eps$ holds for any $i \in [k]$.
\end{lemma}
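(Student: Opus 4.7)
The plan is to first condition on the good events of Lemmas~\ref{lem:estimate-quantile}, \ref{lem:estimate-weight}, \ref{lem:truncated-rejection-sampling}, \ref{lem:distribution-tv}, and \ref{lem:acc} (the latter two for every $t\in[c]$); by a union bound these collectively fail with probability at most $\delta/4$, and under this conditioning the hierarchy of Lemma~\ref{lem:hier} holds, giving $\X_{i,c,c-1}\subseteq \X_{i,t,t-1}$ for every $i\in[k]$ and $t\le c$. Fix any $i\in[k]$ and split
\begin{align*}
\ell_{\D_i}(h) \;\le\; \Pr_{\D_i}\!\left[x\notin \X_{i,c,c-1}\right] + \Pr_{\D_i}\!\left[x\in \X_{i,c,c-1},\, h(x)\ne y\right].
\end{align*}
The first (truncation) term equals $1-P_{i,c,c-1}$, and by iterating Lemma~\ref{lem:estimate-quantile}(2) $c-2$ times within pass $c$, $P_{i,c,c-1}\ge (1-(1+\gamma)\eps_c)^{c-2}\ge 1-(c-2)(1+\gamma)\eps_c$. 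Since $\eps_c=(1+1/c)^c\eps/(20c)\le e\eps/(20c)$, this term is bounded by $\eps/6$.

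For the second (boosting) term, I would run a standard AdaBoost-style potential argument on the surviving subsets. Define
\begin{align*}
\Phi_t \;\coloneqq\; \sum_{i'\in[k]}\sum_{x,y}\mu_{\D_{i'}}(x,y)\,\mathbf{1}\!\left[x\in \X_{i',c,c-1}\right]\,\exp\!\left(\eta\sum_{\tau<t}\mathbf{1}[h_\tau(x)\ne y]\right).
\end{align*}
Then $\Phi_1\le k$. Whenever $h(x)\ne y$ we have $\sum_{\tau=1}^c\mathbf{1}[h_\tau(x)\ne y]\ge c/2$, so the surviving points contributing to the second term inject at least $e^{\eta c/2}$ each, giving $\Pr_{\D_i}[x\in \X_{i,c,c-1},h(x)\ne y]\cdot e^{\eta c/2}\le \Phi_{c+1}$. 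The crucial claim is that $\Phi_{t+1}/\Phi_t\le 2(1-\alpha)(1+O(\eps/c))$. To establish it I would write the ratio as $(1-\ell^{\dagger}_t)+e^{\eta}\ell^{\dagger}_t$ where $\ell^{\dagger}_t$ is the $\Phi_t$-weighted (un-truncated) error of $h_t$ on the surviving points; by the hierarchy every surviving $x$ lies in $\X_{i',t,t-1}$ and thus in the support of $\D_{t,\trun}$, and the gap between $\ell^{\dagger}_t$ and $\ell_{\D_{t,\trun}}(h_t)$ comes solely from the $\min(\cdot,\hat{q}_{i',t,t-1})$ cap, whose total contribution is an $O(\eps_t)=O(\eps/c)$ fraction of $\Phi_t$ by Lemma~\ref{lem:estimate-quantile}. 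Using Lemma~\ref{lem:acc} this yields $\ell^{\dagger}_t\le \alpha/2+O(\eps/c)\le \alpha$, and combined with the identity $\alpha e^{\eta}=1-\alpha$ gives the claimed ratio bound.

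Iterating across $c$ rounds yields $\Phi_{c+1}\le k(2(1-\alpha))^c(1+O(\eps))$, so for every $i\in[k]$,
\begin{align*}
\Pr_{\D_i}\!\left[x\in \X_{i,c,c-1},h(x)\ne y\right] \;\le\; k\cdot\bigl(2\sqrt{\alpha(1-\alpha)}\bigr)^c(1+O(\eps)) \;\le\; k(2\sqrt{\alpha})^c(1+O(\eps)).
\end{align*}
Plugging in $\alpha=\tfrac14(2k/\eps)^{-2/c}$ gives $(2\sqrt{\alpha})^c=(2k/\eps)^{-1}=\eps/(2k)$, so this is at most $\eps/2\cdot(1+O(\eps))<2\eps/3-\eps/6$. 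Adding the two parts gives $\ell_{\D_i}(h)<\eps/6+\eps/2<\eps$, uniformly over $i$, as required.

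The main obstacle will be the precise analysis of $\Phi_{t+1}/\Phi_t$: although morally just the AdaBoost update, the weak learner's guarantee is on the normalized \emph{truncated} mixture $\D_{t,\trun}$, while the potential uses un-truncated weights restricted to the per-task surviving subsets $\X_{i,c,c-1}$. Carefully accounting for both the min-truncation (which caps individual weights by $\hat{q}_{i,t,t-1}$) and the support-truncation (which bridges $\X_{i,t,t-1}$ and $\X_{i,c,c-1}$), and ensuring that the accumulated error $\sum_t O(\eps/c)=O(\eps)$ is absorbed into the final multiplicative $(1+O(\eps))$ factor without degrading the clean parameter choice of $\alpha$, is the most delicate part of the argument.
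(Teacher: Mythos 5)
Your overall scaffold (condition on the good events, potential function, bound the bad set, add the truncation loss) matches the paper, but the specific potential you chose differs in a way that opens a genuine gap. You fix the support of $\Phi_t$ to $\X_{i',c,c-1}$ across all $t$, while the paper lets the support vary with the pass, $\Phi_t = \frac{1}{k}\sum_{i}\sum_{x\in\X_{i,t,t-1}}\mu_{\D_i}(x)\exp(\eta\sum_{\tau<t}\mathbf{1}[h_\tau(x)\ne y])$. That choice matters, and your justification for the step $\Phi_{t+1}/\Phi_t\le 2(1-\alpha)(1+O(\eps/c))$ is not correct as stated.

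Concretely, your argument needs $\ell^{\dagger}_t\le\alpha$, where $\ell^{\dagger}_t$ is the $\Phi_t$-weighted error on the fixed set $\cup_i\X_{i,c,c-1}$, and you want to import this from Lemma~\ref{lem:acc}, i.e. $\ell_{\D_{t,\trun}}(h_t)\le\alpha/2$. You assert that the only discrepancy is the $\min\{\cdot,\hat{q}_{i,t,t-1}\}$ cap, and that it costs an $O(\eps_t)$ fraction. But on $\X_{i,c,c-1}$ the cap does not bind at all: by the hierarchy $\X_{i,c,c-1}\subseteq\X_{i,t,t}$, and on $\X_{i,t,t}$ one has $\exp(\eta\sum_{\tau<t}\mathbf{1}[h_\tau\ne y])\le\hat{q}_{i,t,t-1}$ by construction. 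So the capped and uncapped weights \emph{agree} on the set you sum over. The actual discrepancy is the normalization: $\D_{t,\trun}$ normalizes over the larger set $\X_{i,t,t-1}$, and the points in $\X_{i,t,t-1}\setminus\X_{i,c,c-1}$ carry weights as large as $\hat{q}_{i,t,t-1}$. Since the paper's own weight-estimation analysis only guarantees $w_{i,t}\ge\frac12\eps_t\hat{q}_{i,t,t-1}$ while the $\D_i$-mass of $\X_{i,t,t-1}\setminus\X_{i,c,c-1}$ can be of order $c\eps_t$, the ratio $\ell^{\dagger}_t/\ell_{\D_{t,\trun}}(h_t)$ could a priori be $\Theta(c)$, not $1+O(\eps/c)$. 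With $\alpha$ already chosen tightly as $\frac14(2k/\eps)^{-2/c}$, such a loss would destroy the final exponent in $(2\sqrt{\alpha})^c$.

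The paper sidesteps this entirely by using a time-varying potential: to bound $\log\Phi_{t+1}-\log\Phi_t$ it first shrinks the numerator support from $\X_{i,t+1,t}$ to $\X_{i,t,t}$ (hierarchy), observes that on $\X_{i,t,t}$ the uncapped weight equals the capped weight, then \emph{enlarges} the numerator support to $\X_{i,t,t-1}$ (only adds nonnegative terms), while the denominator is $\X_{i,t,t-1}$ throughout; this produces exactly $\log(1-\ell_{\D_{t,\trun}}(h_t)+e^\eta\ell_{\D_{t,\trun}}(h_t))$ with no slack to account for. If you want to keep your fixed-support potential, you would need a separate argument bounding the mass ratio between $\X_{i,t,t-1}$ and $\X_{i,c,c-1}$ under the reweighted measure, which the current proposal does not supply.
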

\begin{proof}
We proceed the proof by conditioning on the event of Lemma \ref{lem:estimate-quantile}, Lemma \ref{lem:estimate-weight} and Lemma \ref{lem:acc}.
The key is to track the following potential
\begin{align*}
    \Phi_{t} = \frac{1}{k}\sum_{i=1}^{k} \sum_{x \in \X_{i, t,t-1}}\mu_{\D_i}(x)\exp(\eta\sum_{\tau=1}^{t-1}\mathbf{1}[h_{\tau}(x) \neq y]), \quad t \in [c+1].
\end{align*}
Note that, when $t = 1$, we take $\X_{i, t, 0} = \X$ and therefore $\Phi_1 = \frac{1}{k}\sum_{i=1}^{k} \sum_{x \in \X}\mu_{\D_i}(x) = 1$;
When $t=c+1$, for the purpose of purpose, we extend the definition to $\X_{i, c+1, c}$ by imagining an extra pass and assuming that the hierarchical property holds (i.e., Lemma \ref{lem:hier}) as well.

For any $t\in [2:c]$, we have that 
\begin{align}
    &~\log \Phi_{t+1} - \log \Phi_{t}\notag\\
    = &~  \log \sum_{i=1}^{k}\sum_{x\in \X_{i, t+1,t}} \frac{\mu_{\D_i}(x)\exp(\eta\sum_{\tau=1}^{t}\mathbf{1}[h_{\tau}(x) \neq y])}{\sum_{i=1}^{k}\sum_{x'\in \X_{i, t,t-1}}\mu_{\D_i}(x')\exp(\eta\sum_{\tau=1}^{t-1}\mathbf{1}[h_{\tau}(x') \neq y])}\notag\\
    \leq &~   \log \sum_{i=1}^{k}\sum_{x\in \X_{i, t,t}} \frac{\mu_{\D_i}(x)\exp(\eta\sum_{\tau=1}^{t}\mathbf{1}[h_{\tau}(x) \neq y])}{\sum_{i=1}^{k}\sum_{x'\in \X_{i, t,t-1}}\mu_{\D_i}(x')\exp(\eta\sum_{\tau=1}^{t-1}\mathbf{1}[h_{\tau}(x') \neq y])}\notag\\
    = &~ \log \sum_{i=1}^{k}\sum_{x\in \X_{i, t, t}} \frac{\mu_{\D_i}(x)\exp(\eta\sum_{\tau=1}^{t-1}\mathbf{1}[h_{\tau}(x) \neq y])}{\sum_{i=1}^{k}\sum_{x'\in \X_{i, t,t-1}}\mu_{\D_i}(x')\exp(\eta\sum_{\tau=1}^{t-1}\mathbf{1}[h_{\tau}(x') \neq y])} \cdot \exp(\eta \mathbf{1}[h_{t}(x) \neq y]))\notag\\
    \leq &~ \log \sum_{i=1}^{k}\sum_{x\in \X_{i, t,t}} \frac{\mu_{\D_i}(x) \cdot \min \{\exp(\eta\sum_{\tau=1}^{t-1}\mathbf{1}[h_{\tau}(x) \neq y]) , \hat{q}_{i, t, t-1}\}}{\sum_{i=1}^{k}\sum_{x'\in \X_{i, t,t-1}}\mu_{\D_i}(x')\cdot \min\{\exp(\eta\sum_{\tau=1}^{t-1}\mathbf{1}[h_{\tau}(x') \neq y]), \hat{q}_{i, t, t-1} \}} \cdot \exp(\eta \mathbf{1}[h_{t}(x) \neq y]))\notag\\
    \leq &~ \log \sum_{i=1}^{k}\sum_{x\in \X_{i, t,t-1}} \frac{\mu_{\D_i}(x) \cdot \min \{\exp(\eta\sum_{\tau=1}^{t-1}\mathbf{1}[h_{\tau}(x) \neq y]) , \hat{q}_{i, t,t-1}\}}{\sum_{i=1}^{k}\sum_{x'\in \X_{i, t,t-1}}\mu_{\D_i}(x')\cdot \min\{\exp(\eta\sum_{\tau=1}^{t-1}\mathbf{1}[h_{\tau}(x') \neq y]), \hat{q}_{i, t,t-1} \}} \cdot \exp(\eta \mathbf{1}[h_{t}(x) \neq y]))\notag\\
    = &~ \log ( 1 - \ell_{\D_{t, \trun}}(h_{t}) + \ell_{\D_{t, \trun}}(h_t) \cdot \exp(\eta))\notag \\
    \leq &~  \log (1 - \alpha +  \alpha\cdot  \exp(\eta) )\notag\\
    \leq &~  \log 2(1-\alpha) \label{eq:upper1}
\end{align}
The second step follows from $\X_{i, t+1, t} \subseteq \X_{i,t,t}$ (see Lemma \ref{lem:hier}), the fourth step follows from
\[
\exp(\eta\sum_{\tau=1}^{t-1}\mathbf{1}[h_{\tau}(x) \neq y]) \leq \hat{q}_{i, t,t-1}, \quad \forall  x\in \X_{i, t,t}.
\]
The fifth step follows from $\X_{i, t,t} \subseteq \X_{i, t, t-1}$, the sixth step follows from the definition of $\ell_{\D_{t, \trun}}(h_t)$ and the seventh step follows from $\ell_{\D_{t, \trun}}(h_t) \leq \alpha$ (Lemma \ref{lem:acc}) and we plug in $\eta = \log(\frac{1-\alpha}{\alpha})$ in the last step.

For any $i \in [k]$, define 
\[
\X_{i, \bad} = \left\{x \in \X_{i, c+1,c}: \sum_{t=1}^{c}\mathbf{1}[h_t(x) \neq y] \geq c/2\right\}.
\]
That is to say, $\X_{i, \bad}$ contains all data points that are wrongly labeled by function $h$ in $\X_{i, c+1,c}$. 
Note that
\begin{align}
     \Phi_{c+1} = &~ \frac{1}{k}\sum_{i'=1}^{k} \sum_{x \in \X_{i', c+1, c}}\mu_{\D_{i'}}(x)\exp(\eta\sum_{t=1}^{c}\mathbf{1}[h_{t}(x) \neq y]) \notag \\
     \geq &~ \frac{1}{k}\sum_{x \in \X_{i, c+1,c}}\mu_{\D_{i}}(x)\exp(\eta\sum_{t=1}^{c}\mathbf{1}[h_{t}(x) \neq y]) \notag \\
     \geq &~ \frac{1}{k}\sum_{x \in \X_{i,\bad}}\mu_{D_i}(x) \exp(\eta c/2). \label{eq:acc2}
\end{align}
Now taking a telescopic summation on Eq.~\eqref{eq:upper1}, we have 
\begin{align*}
   c\log2(1-\alpha) \geq &~ \log \Phi_{c+1} - \log \Phi_{1} = \log \Phi_{c+1} \geq \eta c /2 + \log \sum_{x \in \X_{i,\bad}}\mu_{D_i}(x) - \log k\\
   = &~ \frac{c}{2}\log\frac{1-\alpha}{\alpha} + \log \sum_{x \in \X_{i,\bad}}\mu_{D_i}(x) - \log k ,
\end{align*}
where the second step holds as $\Phi_1 = 1$, the third step holds due to Eq.~\eqref{eq:acc2}, we plug in $\eta = \log \frac{1-\alpha}{\alpha}$ in the last step.
Rearranging these terms and plug in $\alpha = \frac{1}{4} (\frac{2k}{\eps})^{-2/c}$,
\begin{align*}
    \log \sum_{x \in \X_{i,\bad}}\mu_{\D_i}(x) - \log k \leq c\log 2\sqrt{\alpha(1-\alpha)} \leq \log(\eps/2k),
\end{align*}
Therefore, we have
\begin{align}
\label{eq:acc4}
\sum_{x \in \X_{i,\bad}}\mu_{\D_i}(x) \leq \frac{\eps}{2}.
\end{align}
For those outside of $\X_{i, \bad}$, we have
\begin{align}
\label{eq:acc5}
\Pr_{x\sim \D_i}[x \notin \X_{i, c+1,c}] \leq  1 - (1-(1-\gamma)\eps_{c+1})^{c} \leq 3c\eps_0  \leq \frac{\eps}{2}.
\end{align}
The first step follows from Lemma \ref{lem:estimate-quantile}, the third step holds as $\eps_0 = \frac{\eps}{10c}$.
Combining Eq.~\eqref{eq:acc4} and Eq.~\eqref{eq:acc5}, we conclude $\ell_{\D_i}(h) \leq \eps$ and finish the proof.
\end{proof}

We now wrap up the proof of Theorem \ref{thm:upper}.
\begin{proof}[Proof of Theorem \ref{thm:upper}]
The accuracy guarantee of Algorithm \ref{algo:upper} has already been established by Lemma \ref{lem:boosting-correct}. We next calculate the memory requirement and sample complexity.

\paragraph{Memory requirement}
For the memory requirement, we need (1) storing the training data $S_{t}$, which takes $O(N) \leq  O((\frac{k}{\eps})^{2/c}db \log(kc/\delta))  )$ bits;
(2) storing samples $S_{i, t}^{q}$ for {\sc EstimateQuantile}, which takes $O(bc + M_1 b \eps_0) = O(bc^4\log(kc/\delta))$ bits using streaming implementation;
(3) storing samples $S_{i, t}^{w}$ for {\sc EstimateWeight}, which takes $O(\log(k/\eps))$ bits using streaming implementation;
(4) storing the hypothesis $h_1, \ldots, h_{c}$, which takes $O(dbc)$ bits; 
Hence the total space requirement is
\[
O\left( \left(\frac{k}{\eps}\right)^{2/c}db\log(kc/\delta) + bc^4\log(kc/\delta) + dbc + \log(k/\eps) \right).
\]

\paragraph{Sample complexity} Algorithm \ref{algo:upper} draws $(M_1c + M_2)kc = O(kc^7\log(kc/\delta)\eps^{-1} + (\frac{k}{\eps})^{4/c}kc^2\log(kc/\delta)\eps^{-1})$ samples for estimating weights and quantile. By Lemma \ref{lem:truncated-rejection-sampling}, with probability at least $1-\frac{\delta}{20}$, the training set $\{S_{t}\}_{t\in [c]}$ takes $O\left(\frac{dc^2\log^3(kdc/\eps\delta)}{\eps\alpha}\right)$
 samples. Hence the sample complexity is 
\[
O\left(\left(\frac{k}{\eps}\right)^{2/c}dc^2\log^3(kdc/\eps\delta)\eps^{-1} +   kc^7\log(kc/\delta)\eps^{-1} + \left(\frac{k}{\eps}\right)^{4/c}kc^2\log(kc/\delta)\eps^{-1}\right).
\]
This concludes the proof.
\end{proof}


\section{An exponential separation result for multi-pass learning}
\label{sec:lower-multi}

We prove a lower bound for the memory 
  requirement of $c$-pass {\em proper} continual learners, showing that making $c$ passes can only help save memory by  a factor of no more than $c^3$.
This is in sharp contrast to the $1/c$ saving in the exponent by the improper learner presented in the last section.

\multiPassLower*

\subsection{Pointer chasing}
We reduce from the classic communication problem of pointer chasing.

\begin{definition}[Pointer chasing \cite{papadimitriou1982communication}]\label{def:pointer} 
Let $n$ and $c$ be two positive integers.
In a pointer chasing problem, Alice gets an input map $f_A: [n]\rightarrow [n]$ and Bob receives another input map $f_B: [n]\rightarrow [n]$. The pointers $w^{(1)}, w^{(2)}, \ldots $ are recursive defined as
\[
w^{(1)} = 1,  ~~ w^{(2)} = f_{A}(w^{(1)}), ~~ w^{(3)} = f_{B}(w^{(2)}), ~~ w^{(4)} = f_A(w^{(3)}), ~~ w^{(5)} = f_B(w^{(4)}),~~ \ldots
\]
In the communication problem, Alice speaks first and the communication proceeds in at most $2c - 1$ rounds. The goal is for Bob to output  the binary value of $w^{(2c+2)} \pmod 2$.
\end{definition}

The problem requires $\Omega(n/c - c\log n)$ bits of (total) communication: 
\begin{theorem}[\cite{nisan1991rounds, klauck2000quantum, yehudayoff2020pointer}]
\label{thm:point-chasing}
The randomized communication complexity of the pointer chasing problem (with  error probability at most $1/3$) 
is at least $n/(2000c) - 2c\log n$.
\end{theorem}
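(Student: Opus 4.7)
The plan is to prove the lower bound by a classical round-elimination argument, in the spirit of Nisan and Wigderson, combined with Yao's minimax principle. First I would fix a hard input distribution: take $f_A, f_B$ drawn independently and uniformly from $[n]^{[n]}$. By Yao's minimax, it suffices to show that any deterministic $(2c-1)$-round protocol (Alice speaking first) computing $w^{(2c+2)} \bmod 2$ under this distribution with error at most $1/3$ must exchange at least $n/(2000c) - 2c\log n$ bits in total. Let $m_1, \ldots, m_{2c-1}$ denote the message lengths and set $C = \sum_i m_i$.

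The heart of the argument is a round-elimination lemma. Define the family of distributional problems $\mathcal{P}_k$: both players share a starting index $w \in [n]$, and using at most $k$ rounds starting from the party whose function is needed next (Alice if the remaining chain length is even, Bob otherwise), they must output the parity of the pointer after chasing $k$ more steps forward, where $f_A, f_B$ are independent uniform. I would prove: if there is a protocol for $\mathcal{P}_k$ with first-message length $m$ and error $\varepsilon$, then there is a protocol for $\mathcal{P}_{k-1}$ (with the other party speaking first, starting from the advanced pointer) of error at most $\varepsilon + O(\sqrt{m/n})$. The reduction is the natural one: the simulator for $\mathcal{P}_{k-1}$, given a new starting index $w'$, samples a fake first message $M$ from the correct posterior distribution induced by the old protocol, then runs the remaining $k-1$ rounds.

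The key technical input is the average-encoding principle: because the $n$ values $f_A(1), \ldots, f_A(n)$ are independent and uniform, and any message $M$ has at most $m$ bits, we have $I(M; f_A(w)) \le m/n$ for a typical $w$. By Pinsker's inequality, $\E_M \bigl\| \mathrm{Law}(f_A(w)\mid M) - \mathrm{Unif}([n]) \bigr\|_{\TV} = O(\sqrt{m/n})$. This exactly quantifies how close the reduction's simulated distribution is to the true conditional distribution after one round of real communication, and thus bounds the additional error introduced by one elimination step.

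Iterating this lemma $2c-1$ times collapses $\mathcal{P}_{2c-1}$ to a zero-round problem $\mathcal{P}_0$ in which the answer depends on $f(w^{(2c+1)})$ for an index whose posterior given only the public starting information is almost uniform in $[n]$; such a problem has error at least $1/2 - o(1)$. Summing the error contributions and applying Cauchy--Schwarz gives a total error of at most
\begin{equation*}
\tfrac{1}{3} + \sum_{i=1}^{2c-1} O\!\left(\sqrt{m_i/n}\right) \;\le\; \tfrac{1}{3} + O\!\left(\sqrt{(2c-1)\,C/n}\right).
\end{equation*}
For this to remain strictly below $1/2$, we need $C = \Omega(n/c)$, which after absorbing the lower-order $2c\log n$ slack from the usual rounding/entropy bookkeeping yields the claimed $n/(2000c) - 2c\log n$. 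The main obstacle is the careful proof of the round-elimination lemma: I must verify that after conditioning on the first message, the residual problem is still a genuine uniform-function pointer-chasing instance (so the induction hypothesis applies), and that the TV cost incurred at each elimination step composes cleanly into the final error. Making this rigorous requires preserving the product structure of the posterior over the $2n$ function values across rounds, which is exactly where the independence of $f_A$ and $f_B$ and the average-encoding bound come together.
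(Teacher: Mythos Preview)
The paper does not give its own proof of this theorem; it is quoted from \cite{nisan1991rounds, klauck2000quantum, yehudayoff2020pointer} and used as a black box in the reduction of Section~\ref{sec:lower-multi}. Your plan is exactly the round-elimination argument of those references (Yao plus average encoding plus Pinsker), so in that sense there is nothing to compare against and the approach is the right one.

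Two technical points if you write it out. First, the per-step error is not governed by $m_i$ alone: in the reduction from $\mathcal{P}_k$ to $\mathcal{P}_{k-1}$ both players must also be handed the advanced pointer $w'$, and this injects an extra $\log n$ bits of information about the eliminated party's function at each step. The per-step error is therefore $O\bigl(\sqrt{(m_i+\log n)/n}\bigr)$, and after Cauchy--Schwarz over $\Theta(c)$ steps this is precisely the source of the $-2c\log n$ term in the statement; it is not mere ``rounding/entropy bookkeeping.'' Second, you correctly identify the main obstacle: after conditioning on the simulated first message, the posterior over the speaker's function is no longer product, so the residual instance is not literally a fresh uniform pointer-chasing input. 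The fix in the cited proofs is not to restore the product structure but to observe that only the \emph{other} player's function---which remains independent and uniform conditioned on everything so far---is applied at the next pointer step; the non-product posterior on the eliminated player's function is carried forward and only becomes relevant again two eliminations later, at which point averaging over the (now random) current pointer controls it. Getting this two-step alternation to close the induction is where the argument requires the most care.
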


\subsection{Hypothesis class}

Let $c,k,d$ and $b$ be four positive integers. 
We describe the hypothesis class $\calH=\calH_{c,k,d,b}$ that will be used to prove our lower bound in Theorem \ref{thm:multi-pass-lower}.
We start with the data universe $\X$ of $\calH$:

\paragraph{Data universe.}
Let $p$ be the smallest prime with $p\ge (kdb)^b$. 
The data universe $\X$ consists of $\X = Y\cup Z$, where $Y$ and $Z$ can be further partitioned 
  into
$$Y = Y_{ 1}\cup \cdots \cup Y_{ k}\quad\text{and}\quad
Z = Z_{ 1}\cup \cdots, Z_{ k},$$
and each of $Y_{ i}$ and $Z_{ i}$ is a disjoint copy of $[2d]\times \mathbb{F}_p$.

For notation convenience, we use a $4$-tuple $x = (x_1, x_2, x_3, x_4 )$ to represent a data point $x \in \X$, where $x_1 \in \{1, 2\}$ determines $x \in Y$ (when $x_1=1$) or $x \in Z$ (when $x_1=2$), 
$x_2 \in [k]$ indicates that $x$ is from $Y_{x_2}$ or $Z_{x_2}$ (depending on the value of $x_1$), $x_3 \in [2d]$  and $x_4 \in \mathbb{F}_p$.

We will refer to a triple $q\in [k]\times [d]\times [b]$ as a \emph{pointer} 
  (looking ahead we will set $n=kdb$ in our reduction from the pointer chasing problem over $[n]$),
  and write $Q=[k]\times [d]\times [b]$ to denote the set of pointers.
Let $B=\{1,\ldots, (kdb)^b \}$ be $(kdb)^b$ elements in 
  $\mathbb{F}_p$; the choices do not matter. 
Given $|B|=(kdb)^b$, we view each element
  $a\in B$ as a tuple $(a_1,\ldots,a_b)$ of $b$ pointers with $a_i\in Q$;
  this can be done by picking an arbitrary bijection between
  $B$ and $Q^b$.

\paragraph{Hypothesis class.}
Our main challenge lies in the construction of the hypothesis class $\mH$.
First we review the Reed-Solomon code, which is a key building block of our construction.

\def\RS{\mathsf{RS}}

\begin{definition}[Reed-Solomon code]
\label{def:rs-code}
Let $p$ be a prime number and let $m, n$ be two positive integers that  satisfy $n \leq m \leq p$. The 
\emph{Reed-Solomon code} $\mathsf{RS}_{n, m}: \F_p^{n}\rightarrow \F_p^{m}$ maps $\F_p^{n}$ to $\F_p^{m}$ by setting
\begin{align*}
(z_1,\ldots,z_m)=\RS_{n,m}(y_1,\ldots,y_n),\quad\text{where
    $z_j = \sum_{i\in [n]}y_i f_j^{i-1}, ~~\forall j \in [m]$,}
\end{align*}
where $f_j$ is the $j$-th element of $\F_p$ under an arbitrary ordering.
\end{definition}

We need the following error correction guarantee of Reed-Solomon code.
\begin{lemma}
\label{lem:err-rs}
For any distinct $y, y' \in \F_p^{n}$,  $z = \mathsf{RS}_{n, m}(y) \in \F_p^{m}$ and $z' = \mathsf{RS}_{n, m}(y') \in \F_p^{m}$ satisfy
\[
\big|j\in [m]: z_j \neq z_j'\big| \geq \left(1 - \frac{n}{m}\right)m.
\]
\end{lemma}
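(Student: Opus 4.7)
The plan is to prove this via the standard Singleton-type argument for Reed-Solomon codes, exploiting the linearity of $\RS_{n,m}$ and the fact that a nonzero univariate polynomial of degree less than $n$ has at most $n-1$ roots over a field.

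First I would reduce to a statement about a single codeword by linearity. Since $\RS_{n,m}$ is $\F_p$-linear in its input, we have $z - z' = \RS_{n,m}(y - y')$. Let $u \coloneqq y - y' \in \F_p^n$, which is nonzero by assumption. The positions where $z_j \ne z_j'$ are exactly the positions where the $j$-th coordinate of $\RS_{n,m}(u)$ is nonzero, so it suffices to show that $\RS_{n,m}(u)$ has at least $(1 - n/m)m = m - n$ nonzero coordinates whenever $u \ne 0$.

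Next, I would express the encoding polynomially. Define $P(X) = \sum_{i=1}^{n} u_i X^{i-1} \in \F_p[X]$, a polynomial of degree at most $n-1$. By Definition \ref{def:rs-code}, the $j$-th coordinate of $\RS_{n,m}(u)$ equals $P(f_j)$. Since $u \ne 0$, $P$ is a nonzero polynomial of degree at most $n-1$, hence has at most $n-1$ roots in $\F_p$. Because the $f_j$'s are distinct elements of $\F_p$ (they enumerate $m$ distinct elements under the fixed ordering, and $m \le p$), at most $n-1$ of the values $P(f_j)$ can equal zero. Therefore at least $m - (n-1) \ge m - n$ of the coordinates of $\RS_{n,m}(u)$ are nonzero, which gives $|\{j \in [m] : z_j \ne z_j'\}| \ge m - n = (1 - n/m)m$ as desired.

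There is no substantive obstacle here; the argument is the classical distance bound for Reed-Solomon codes. The only minor points to be careful about are (i) invoking linearity explicitly so the argument reduces to counting nonzero positions, and (ii) verifying that the $f_j$ are distinct elements of $\F_p$, which is immediate from the setup $n \le m \le p$ in Definition \ref{def:rs-code}.
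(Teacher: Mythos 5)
Your proof is correct: it is the standard distance argument for Reed--Solomon codes, reducing via $\F_p$-linearity of $\RS_{n,m}$ to the fact that a nonzero polynomial of degree at most $n-1$ has at most $n-1$ roots among the $m$ distinct evaluation points $f_1,\ldots,f_m$, yielding at least $m - (n-1) \ge m - n = (1 - n/m)m$ nonzero coordinates. The paper states Lemma~\ref{lem:err-rs} without proof, treating it as a standard fact, so there is no alternative argument in the paper to compare against; your write-up supplies exactly the omitted justification, and in fact establishes the slightly stronger bound $m - n + 1$.
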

\def\nil{\textsf{nil}}

We can describe the hypothesis class $\mH$ now.
Each function $h_A:\X\rightarrow \{0,1\}$ is specified by
  a tuple $A=(a_{i,j}\in B:i\in [2c+1],j\in [d])$.
As discussed earlier, we view each $a_{i,j}$ in $A$ as a tuple 
  $(a_{i,j,\ell}:\ell\in [b])$ from $Q^b$.
We use the following procedure to determine $h_A$ over $\X=Y \cup Z$:
\begin{flushleft}\begin{enumerate}
\item Initialization: Let $I :[k]\rightarrow [2c+1]\cup \{\nil\}$ and 
  $J:[k]\rightarrow [2c+2]\cup \{\nil\}$ be two arrays such that all
  entries are set to $\nil$.
Let $q^{(1)}\in Q$ be a point set to be $(1,1,1)$.
\item For each $\tau=2,\ldots,2c+2$ (letting $q$ denote the previous pointer $q^{(\tau-1)}$):
 \begin{enumerate}
    \item If $\tau$ is even and $I (q_1)=\nil$, 
      we set $h_A(x)$ for all data points $x$ in $Y_{q_1}$ as follows
      (the fact that $I (q_1)=\nil$ makes sure that we have not set 
      $h_A(x)$ for points in $Y_{1,q_1}$ before).
      Let
      \begin{equation}\label{eq:hehe3}
(z_1,\ldots,z_{2d})=\RS_{d,2d} (a_{\tau-1,1},\ldots,a_{\tau-1,d} )\in \F_p^{2d}.
\end{equation}
      
      For each data point $x\in \X$ with
      $x_1=1$ and $x_2=q_1$,
            \begin{align*}
                h_A(x) =  
                \begin{cases}
                1 & \text{if $x_4 = z_{ x_3}$}\\
                0 & \text{if $x_4 \neq z_{ x_3}$} 
                \end{cases}
            \end{align*}
        Recall that $x_3\in [2d]$ and $x_4\in \F_p$.
        We set $I (q_1)=\tau-1$ (since $h_A$ has been set for points in
        $Y_{ q_1}$ using $(a_{\tau-1,j}:j\in [d])$ from $A$).
        Before moving to the next loop, we set the new pointer
          $q^{(\tau)}$ to be $a_{\tau-1,q_2,q_3}\in Q$ (recall that $q_2\in [d]$,
          $q_3\in [b]$, and
          we view each $a_{\tau-1,q_2}\in B$ as a tuple from $Q^b$).
      \item If $\tau$ is even and $t=I (q_1)\ne \nil$, we don't need to set 
        $h_A$ for points in $Y_{q_1}$ (since they have already been set using
        $(a_{t,j}:j\in [d])$).
     We just set the new pointer $q^{(\tau)}=a_{t,q_2,q_3}$.
            \item 
            The cases with $\tau$ being odd is symmetric, with $Y $ being replaced by $Z$ and $I $ being replaced by $J$.
            If $\tau$ is odd and $J(q_1)=\nil$,
            we set $h_A(x)$ for points in $Z_{ q_1}$ as follows: Let
      \begin{equation}\label{eq:hehe4}
(z_1,\ldots,z_{2d})=\RS_{d,2d} (a_{\tau-1,1},\ldots,a_{\tau-1,d} )\in \F_p^{2d}.
\end{equation}
    For each data point $x\in \X$ with
      $x_1=2$ and $x_2=q_1$, we set
            \begin{align*}
                h_A(x) =  
                \begin{cases}
                1 & \text{if $x_4 = z_{ x_3}$}\\
                0 & \text{if $x_4 \neq z_{ x_3}$} 
                \end{cases}
            \end{align*}
        We set $J(q_1)=\tau-1$ and set the new pointer
          $q^{(\tau)}$ to be $a_{\tau-1,q_2,q_3}$.
 \item If $\tau$ is odd and $t=J(q_1)\ne \nil$, 
     we just set the new pointer $q^{(\tau)}=a_{t,q_2,q_3}$.
     
        
    \end{enumerate}
\item For any $Y_{ i}$ with $I (i)=\nil$ (so these data points have not been set yet), we set $h_A(x)=1$ for all points in $Y_{ i}$; similarly for any $Z_{ i}$ with $J(i)=\nil$, we set $h_A(x)=1$ for all points in $Z_{ i}$.
    \end{enumerate}\end{flushleft}
This finishes the description $h_A$ and the hypothesis class $\mH$.

We first observe that the VC dimension of $\mH$ is bounded above by $O(cd)$.
\begin{lemma}
\label{lem:bounded-VC-multi}
The VC dimension of $\mH$ is at most $4cd + 2c + 2d + 1$.
\end{lemma}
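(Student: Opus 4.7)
The plan is to derive the slightly stronger bound $|S| \le (2c+1)\cdot 2d = 4cd + 2d$ for any shattered $S$, from which the claim follows at once. The starting point is three structural observations about any $h_A \in \mH$: (i) the construction loops over $\tau \in \{2, \ldots, 2c+2\}$, so the arrays $I$ and $J$ are set at most $c+1$ and $c$ times respectively, giving at most $2c+1$ ``special'' (visited) blocks among the $Y_i$'s and $Z_i$'s; (ii) every non-special block is labeled identically $1$ by step~$3$ of the construction; and (iii) inside a special block, the label-$1$ points form the graph of a Reed-Solomon codeword $z \in \F_p^{2d}$, so at most one $S$-point per cell can receive label $1$.

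Assume now that $S \subseteq \X$ is shattered by $\mH$, and let $\mathcal{B}$ be the set of blocks (of the form $Y_i$ or $Z_i$) that meet $S$, with $M = |\mathcal{B}|$. The first step of the plan is to consider realizing the all-zero labeling on $S$: this forces every block in $\mathcal{B}$ to be special, for otherwise (ii) would force some $S$-point in a non-special block to have label $1$. Combined with (i), this gives $M \le 2c+1$.

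The second step is, for each $B \in \mathcal{B}$, to bound $n_B := |S \cap B|$ by $2d$. Let $s_{B,\ell}$ denote the number of $S$-points in cell $\ell\in [2d]$ of $B$, so $\sum_\ell s_{B,\ell} = n_B$. By (ii) and (iii), the labelings of $S \cap B$ realizable by functions in $\mH$ consist of the all-$1$ labeling (from $B$ non-special) together with the patterns arising from a codeword choice (from $B$ special); the latter correspond to picking, in each cell $\ell$, either ``no point'' or one of the $s_{B,\ell}$ points to become label $1$, and therefore number at most $\prod_\ell(s_{B,\ell}+1)$. If $n_B > 2d$, then pigeonhole forces $s_{B,\ell}\ge 2$ for some cell, and since $(s+1)/2^s < 1$ for $s \ge 2$ a short computation yields $1 + \prod_\ell (s_{B,\ell}+1) < 2^{n_B}$, contradicting shattering of $S \cap B$ obtained by projection. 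Hence $n_B \le 2d$, and summing across blocks gives $|S| \le M \cdot 2d \le (2c+1) \cdot 2d \le 4cd + 2c + 2d + 1$.

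The only step that needs genuine care is the per-block counting. No joint analysis across blocks is required: the shattering condition localizes to each block via projection, and the coupling imposed by the pointer-chasing (only certain subsets of blocks can be simultaneously special) can only make shattering harder, so ignoring it still yields a valid upper bound on the VC dimension. The Reed-Solomon restriction further shrinks the set of codeword-induced patterns, but we only use the obvious overestimate $\prod_\ell(s_{B,\ell}+1)$, which is sufficient.
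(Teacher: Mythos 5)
Your proof is correct and it rests on the same two structural facts that the paper's proof uses — any $h\in\mH$ has at most $c+1$ special $Y$-blocks and at most $c$ special $Z$-blocks, and inside any single block the label-$1$ set is either the entire block or at most one point per cell — but you organize the argument more globally and squeeze a slightly sharper bound out of it. The paper splits into the two cases $|S\cap Y|\ge(c+1)(2d+1)+1$ and $|S\cap Z|\ge c(2d+1)+1$, and in each case exhibits a single non-realizable pattern (all-zeros, or $(1,\dots,1,0)$ on $2d+2$ points in one block), yielding the stated $4cd+2c+2d+1$. You instead pass to the all-zero labeling once to get $|\mathcal B|\le 2c+1$ directly, then count realizable labelings per block: since $1+\prod_\ell(s_{B,\ell}+1)<2^{n_B}$ whenever some cell holds two or more $S$-points (and $n_B\ge 3$), pigeonhole over the $2d$ cells forces $n_B\le 2d$, giving the marginally tighter $|S|\le(2c+1)\cdot 2d=4cd+2d$, which of course implies the lemma. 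Your remark that the cross-block coupling and the Reed--Solomon constraint can only shrink the realizable patterns, so discarding them is safe for an upper bound, is exactly the right justification for treating each block independently via projection.
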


\begin{proof}

For any subset $S \subseteq \X$ with size $|S| =4cd + 2c + 2d + 2$, we prove that the hypothesis class $\mH$ can not shatter $S$.

\vspace{+2mm}
{\noindent \bf Case 1 \ \ } $|S \cap Y| \geq (c+1)(2d+1)+1$.
Let $S_{Y} = \{x^{(1)}, \ldots, x^{((c+1)(2d+1)+1)}\} \subseteq S \cap Y$. First, if $S_Y$ contains data points from at least $c+2$ blocks, that is, there exists $c+2$ indices $i_1, \ldots, i_{c+2}$ such that $Y_{i_1} \cap S_Y \neq \emptyset, \ldots, Y_{i_{c+2}} \cap S_Y \neq \emptyset$, then $\mH$ can not shatter $(0, \ldots, 0)$. The reason is that for any function $h \in \mH$, there are at most $(c+1)$ blocks contain label $0$ data points, i.e., $|\{i\in [k]: I(i) \neq \nil\}| \leq c+1$.
On the other hand, if $S_Y$ contains data points from at most $(c+1)$ blocks, then by pigeonhole principle, there exists an index $i \in [k]$ such that $|S_Y \cap Y_{i}| \geq 2d + 2$. Note that any function $h$ in $\mH$ either labels all data points in $Y_{i}$ with 1, or labels at most $2d$ data points with $1$, hence $\mH$ can not shatter $(1, \ldots, 1, 0)$.

\vspace{+2mm}
{\noindent \bf Case 2 \ \ } $|S \cap Z| \geq c(2d+1)+1$. This follows from a similar argument of the first case.

In summary, whenever $|S| \geq (2c+1)(2d+1) + 1 = 4cd + 2c + 2d + 2$, $\mH$ can not shatter the set $S$. 
Hence the VC dimension of $\mH$ is at most $4cd + 2c + 2d + 1$.
\end{proof}

\subsection{Reduction from pointer chasing to proper learning}

We prove Theorem \ref{thm:multi-pass-lower} by reducing from the communication 
  problem of pointer chasing.
\begin{proof}[Proof of Theorem \ref{thm:multi-pass-lower}]
For notation convenience, we use $\eps,c,k',d'$ and $b'$ to denote parameters in the statement of Theorem \ref{thm:multi-pass-lower} (because we will use 
  $k,d$ and $b$ to denote integers used to specify the hypothesis class $\mH$).
So $\eps\in (0,1/4]$ and $c,k',d',b'$ are positive integers such that 
  $d'\ge C_0 c$ and $b'\ge C_0 \log(k'd'/(c\eps))$ for some sufficiently large constant $C_0$.
  
Let $\alpha=\lfloor 1/(4\eps)\rfloor$ and 
  $k=k'\alpha=\Theta(k'/\eps)$.
Let $d$ be the largest integer such that 
$$
4cd + 2c + 2d + 2 \le d'
$$
so we have $d=\Theta(d'/c)$ given that $d'\ge C_0c$.
Finally, we choose  $b$ to be the largest integer such that the description size
  $\log_2 |\X|$ of hypothesis class $\calH_{c,k,d,b}$ is at most $b'$.
Given that
$$
\log_2 |\X|=O\big(b\log (kdb)\big)
$$
and $b'\ge C_0\log(k'd'/(c\eps))$,
  we have that $b$ is a positive integer that satisfies
$$
b=\Omega\left(\frac{b'}{\log(k'd'b'/( c\eps))}\right).
$$
With these choices of $k,d$ and $b$, we use 
  $\calH=\calH_{c,k,d,b}$ as the hypothesis class in the rest of the proof,
  which has VC dimension at most $d'$ and description size at most $b'$.
We prove below that any $c$-pass $(\eps,0.01)$-continual learner
  for $\mH$ over $2k'$ tasks must require $\widetilde{\Omega}(kdb/c^2)$ memory;
  the theorem follows since by our choices of $k,d$ and $b$, we have
$$
\frac{kdb}{c^2}=\Omega\left(\frac{k'}{\eps}\cdot \frac{d'}{c}\cdot 
  \frac{b'}{\log (k'd'b'/(c\eps))}\cdot \frac{1}{c^2}\right)
=\widetilde{\Omega}\left(\frac{k'd'b'}{c^3\eps}\right)
=\frac{1}{c^2}\cdot \widetilde{\Omega}\left(\frac{k'd'b'}{c\eps}\right).
$$

For this purpose, we show that any $c$-pass $(\eps,0.01)$-continual learner
  for $\mH$ over $2k'$ tasks with an $m$-bit memory can be used to 
  obtain a protocol for the pointer chasing communication problem
  with parameters $n\coloneqq kdb$ and $c$ (as in Definition \ref{def:pointer})
  that has total communication $2cm$ and success probability $0.99$.
Our goal then follows from Theorem \ref{thm:point-chasing}. 


\paragraph{Reduction.} 
With $n\coloneqq kdb$, we let $\rho$ be the following bijection between
  $Q=[k]\times [d]\times [b]$ and $[n]$: 
$$
\rho(i,j,\ell)=(i-1)db+jb+\ell,\quad\text{for all $(i,j,\ell)\in Q.$}  
$$
with $\rho(1,1,1)=1$. 

On input $f_A:[n]\rightarrow [n]$, 
Alice creates $k'$ data distributions $\D_1, \ldots, \D_{k'}$ over
  $\X$ as follows.
First she views $f_A$ as a tuple of pointers $(y_{i,j,\ell}\in Q: i\in [k],
  j\in [d],\ell\in [b])$, with $$y_{i,j,\ell}=\rho^{-1}\Big(f_A\big(\rho(i,j,\ell)\big)\Big)\in Q.$$
She also views $y_{i,j}=(y_{i,j,1},\ldots,y_{i,j,b})\in Q^b$ as an 
  element in $B$ (and recall that $B$ is a subset of $\mathbb{F}_p$).
For each $i\in [k']$, the data distribution $\D_i$ is constructed as follows:
\begin{flushleft}\begin{itemize}
\item The distribution $\D_i$ is supported on $Y_{ (i-1)\alpha+1},\ldots,
  Y_{ i\alpha}$. 
 For each $j:(i-1)\alpha+1\le j\le i\alpha$,
     \\ $\D_i$ has probability $1/(2\alpha d)$ on each of the following $2d$ points in $Y_{j}$:
    $$ (1, j,1, z_{j,1}),\ldots,(1, j,2d,z_{j,2d}),$$ where $(z_{j,1},\ldots,z_{j,2d})$ come from
    $$(z_{j,1},\ldots, z_{j, 2d}) = \mathsf{RS}_{d, 2d}(y_{j,1}, \ldots, y_{j, d})\in \mathbb{F}_p^{2d}.$$
    Note that there are $2\alpha d$ points so the probabilities sum to $1$. All these points are labelled $1$.
\end{itemize}\end{flushleft}
Similarly Bob uses $f_B:[n]\rightarrow [n]$ to create $k'$ data distributions
  $\D_{k'+1},\ldots,\D_{2k'}$; the only difference is that these distributions 
  are supported on $Z$.
We prove in the following claim that these $2k'$ data distributions are
  consistent with a function $h:\X\rightarrow \{0,1\}$ in the hypothesis class $\mH$.

\begin{claim}
For any $f_A:[n]\rightarrow [n]$ and $f_B:[n]\rightarrow [n]$,
  there is a function $h$ in $\calH$ that is consistent with all $2k'$ 
  data distributions $\D_1,\ldots,\D_{2k'}$.
\end{claim}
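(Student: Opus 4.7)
The plan is to explicitly construct a tuple $A = (a_{i,j} : i \in [2c+1], j \in [d])$ so that the internal pointer-chasing process built into $h_A$ mirrors, under the bijection $\rho$, the true pointer chasing of $(f_A, f_B)$, and then to verify label consistency block by block. First, I would run the pointer chasing on $(f_A, f_B)$ to obtain $w^{(1)}, \ldots, w^{(2c+2)}$ and set $q^{(\tau)} := \rho^{-1}(w^{(\tau)}) \in Q$. For each even $\tau$ at which $Y_{q_1^{(\tau-1)}}$ is visited for the first time, define $a_{\tau-1, j'} := y_{q_1^{(\tau-1)}, j'} \in B$ for all $j' \in [d]$, where $y_{i, j'} \in B$ is the $i$-th block's $j'$-th packing of $b$ consecutive pointers read out of $f_A$ (as constructed in the reduction). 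Do the symmetric thing for first-visit $Z$-blocks at odd $\tau$, using Bob's analogous packings of $f_B$. Set the remaining entries of $A$ to an arbitrary default element of $B$.

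The next step is to verify by induction on $\tau$ that running the hypothesis's own pointer chasing on this $A$ reproduces $q^{(1)}, \ldots, q^{(2c+2)}$. The base case $q^{(1)} = (1,1,1)$ is built into the definition of $h_A$. For the inductive step at a first-visit even $\tau$, the definition gives $q^{(\tau)} = a_{\tau-1, q_2^{(\tau-1)}, q_3^{(\tau-1)}}$, which by construction equals $y_{q_1^{(\tau-1)}, q_2^{(\tau-1)}, q_3^{(\tau-1)}} = \rho^{-1}(f_A(\rho(q^{(\tau-1)}))) = \rho^{-1}(w^{(\tau)})$. The revisit case is the main subtlety: there, $q^{(\tau)}$ is computed from $a_{t, \cdot, \cdot}$ with $t < \tau-1$ the first-visit index of the same $Y$-block. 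Crucially, when I define $a_{t, \cdot}$ I store the \emph{entire} $d$-tuple $y_{q_1^{(\tau-1)}, \cdot}$, not just the single coordinate needed at the immediate step $t$, so the identity goes through unchanged and $q^{(\tau)} = \rho^{-1}(w^{(\tau)})$ again. The odd $\tau$ cases are handled symmetrically using $f_B$.

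Finally, I would verify label consistency. For a $Y$-block $Y_j$ that the hypothesis visits at first-visit step $\tau$, $h_A$ labels $(1, j, r, v)$ as $1$ iff $v = z_r$ where $(z_1, \ldots, z_{2d}) = \RS_{d,2d}(a_{\tau-1, 1}, \ldots, a_{\tau-1, d}) = \RS_{d,2d}(y_{j,1}, \ldots, y_{j, d})$; this is exactly the $2d$-point support of the unique $\D_i$ touching block $j$, each point labelled $1$ there. For a block $Y_j$ that is never visited, $h_A$ outputs $1$ everywhere on $Y_j$, which is trivially consistent with a distribution supported only on $1$-labelled points. The $Z$ side and the distributions $\D_{k'+1}, \ldots, \D_{2k'}$ are handled symmetrically. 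The only nontrivial part of the argument is the revisit bookkeeping sketched above, and the reason it goes through painlessly is that the construction packs a full $d$-tuple of $f_A$-values (equivalently, a full block of $y_{j, \cdot}$) at each first visit, which automatically supplies the correct coordinate required at every later revisit of the same block.
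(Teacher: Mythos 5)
Your proof is correct and follows essentially the same approach as the paper: it constructs the tuple $A$ by packing the $y$-tuples of the blocks traversed by the true pointer chase (i.e., $a_{\tau-1,j}=y_{q_1^{(\tau-1)},j}$, which is exactly the paper's $a_{i,j}=y_{\rho^{-1}(w^{(i)})_1,j}$), then verifies by induction that $h_A$'s internal pointer chase reproduces $\rho^{-1}(w^{(\tau)})$ and that labels match block by block. You simply fill in the details that the paper leaves as ``easy to verify,'' with the revisit case handled correctly via the observation that the full $d$-tuple stored at a block's first visit supplies the needed coordinate on every later revisit.
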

\begin{proof}
Let $w^{(1)},\ldots,w^{(2c+2)}$ be defined in Definition \ref{def:pointer} using $f_A$ and $f_B$. Consider the function $h \in \mH$ specified by the tuple $(a_{i, j} \in [B]: i \in [2c + 1], j\in [d])$, where $a_{i, j} = y_{\rho^{-1}(w^{(i)})_1, j}$ for $i \in [2c+1], j \in [d]$. It is easy to verify that $h$ is consistent with $\D_1, \ldots, \D_{2k'}$.
\end{proof}

After creating these data distributions (with no communication so far),
  Alice and Bob simulate a $c$-pass $(\eps,0.01)$-continual learner 
  for $\mH$ over $\D_1,\ldots,\D_{2k'}$. 
If the $c$-pass learner uses $m$-bit memory,  the simulation can be
  done by a $(2c-1)$-round protocol with total communication $O(cm)$.
At the end of the simulation, Bob recovers from the learner
  a function $h_A\in \mH$ for some tuple $A=(a_{i,j}\in B:i\in [2c+1],j\in [d])$.
With probability at least $0.99$, we have 
\begin{equation}\label{eq:hehe5}
\ell_{\D_i}(h_A)\le \eps,\quad\text{for all $i\in [2k']$.}
\end{equation}
Let $q^{(1)},\ldots,q^{(2c+2)}\in Q$ be the sequence of $2c+2$ pointers
  obtained by following the procedure on $A$ described in the definition of the hypothesis class $\mH$.
The following claim shows that whenever $h_A$ satisfies (\ref{eq:hehe5}),   we  must have $\rho(q^{(\tau)})=w^{(\tau)}$ for all $\tau\in [2c+2]$, where 
  $w^{(1)},\ldots,w^{(2c+2)}$ are defined in Definition \ref{def:pointer} using $f_A$ and $f_B$.
As a result, to end the protocol, Bob just follows the procedure to compute
  $q^{(1)},\ldots,q^{(2c+2)}$ and output $\rho(q^{(2c+2)}) \pmod 2$.
This gives a $(2c-1)$-round communication protocol for the pointer chasing problem with success probability $0.99$.
This finishes the proof.\end{proof}

\begin{claim}
When $h_A$ satisfies (\ref{eq:hehe5}), we have $\rho(q^{(\tau)})=w^{(\tau)}$ and $$
y_{q^{(\tau-1)}_1,j}=\begin{cases}
  a_{I(q^{(\tau-1)}_1),j} & \text{for even $\tau$}\\[0.5ex]
  a_{J(q^{(\tau-1)}_1),j} & \text{for odd $\tau$}
  \end{cases}$$
for all $\tau=2,\ldots,2c+2$ and $j\in [d]$.
\end{claim}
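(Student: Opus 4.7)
The plan is to prove both assertions simultaneously by induction on $\tau \in \{1,2,\ldots,2c+2\}$. The base case $\tau=1$ is immediate: the procedure initializes $q^{(1)}=(1,1,1)$ while Definition \ref{def:pointer} sets $w^{(1)}=1=\rho(1,1,1)$, and the second assertion is vacuous at $\tau=1$. For the inductive step, I assume $\rho(q^{(\tau-1)})=w^{(\tau-1)}$ and treat the case of even $\tau$; odd $\tau$ is symmetric with $Z$, $J$, and $f_B$ replacing $Y$, $I$, and $f_A$.

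Let $j^* \coloneqq q^{(\tau-1)}_1 \in [k]$ and let $i^* \in [k']$ be the unique index with $(i^*-1)\alpha < j^* \le i^*\alpha$, so that the block $Y_{j^*}$ lies in the support of $\D_{i^*}$. After running the procedure on $A$, the entry $I(j^*)$ equals some value $t^*$ (set either at the current step or at an earlier even step), and $h_A$ restricted to $Y_{j^*}$ labels $1$ exactly on the $2d$ points given by the codeword $\RS_{d,2d}(a_{t^*,1},\ldots,a_{t^*,d})$. By Alice's construction, $\D_{i^*}$ requires label $1$ exactly on the $2d$ points of the codeword $\RS_{d,2d}(y_{j^*,1},\ldots,y_{j^*,d})$. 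Every position where these two codewords differ contributes one misclassified support point of mass $1/(2\alpha d)$ to $\ell_{\D_{i^*}}(h_A)$.

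The crucial calculation is the choice $\alpha=\lfloor 1/(4\eps)\rfloor$: the loss bound (\ref{eq:hehe5}) forces the total number of misclassified support points of $\D_{i^*}$, summed across all $\alpha$ blocks $Y_{(i^*-1)\alpha+1},\ldots,Y_{i^*\alpha}$, to be at most $\eps\cdot 2\alpha d \le d/2 < d$. Lemma \ref{lem:err-rs} applied with parameters $(d,2d)$ says that any two distinct information vectors in $\mathbb{F}_p^d$ produce codewords differing in at least $d$ positions. Hence if $(a_{t^*,j})_{j\in[d]} \ne (y_{j^*,j})_{j\in[d]}$, the single block $Y_{j^*}$ would account for at least $d$ misclassifications, contradicting the bound. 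Therefore $a_{t^*,j}=y_{j^*,j}$ for every $j\in[d]$, which is the second assertion since $t^*=I(j^*)$.

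Given the matching just established, the procedure sets $q^{(\tau)}=a_{t^*,q^{(\tau-1)}_2,q^{(\tau-1)}_3}=y_{j^*,q^{(\tau-1)}_2,q^{(\tau-1)}_3}$, and by the definition of $y$ and $\rho$ together with the inductive hypothesis, $\rho(q^{(\tau)})=f_A(\rho(q^{(\tau-1)}))=f_A(w^{(\tau-1)})=w^{(\tau)}$, completing the induction. The main obstacle I expect is the bookkeeping verifying that whichever value $t^*$ the procedure assigns to $I(j^*)$ is really the index used by $h_A$ to encode $Y_{j^*}$, regardless of whether $j^*$ is visited now or was visited earlier: this is guaranteed by the construction's invariant that $I(j^*)$, once set, is never changed and that the labeling of $Y_{j^*}$ is fixed exactly when $I(j^*)$ is set.
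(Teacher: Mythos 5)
Your proof is correct and reaches the same conclusion via the same core ingredients (induction on $\tau$, the Reed--Solomon distance bound of Lemma~\ref{lem:err-rs}, and the per-task loss bound), but you streamline one structural point. The paper's inductive step splits into two cases according to whether $I(q_1^{(\tau-1)})$ was already set at a previous even step ($t = I(q_1^{(\tau-1)}) \ne \nil$, in which case it appeals to the \emph{second} part of the inductive hypothesis at step $t+1$ and the fact that $q_1^{(\tau-1)} = q_1^{(t)}$) or is being set now ($= \nil$, in which case it reruns the Reed--Solomon loss argument). You instead observe that the labeling of $h_A$ on the block $Y_{j^*}$ is determined solely by the \emph{final} value $t^* = I(j^*)$, which is the same quantity appearing in the claim regardless of which loop iteration set it, so the Reed--Solomon/loss argument applies uniformly in both cases. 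This means your induction hypothesis only needs $\rho(q^{(\tau-1)}) = w^{(\tau-1)}$, with the identity $y_{q_1^{(\tau-1)},j} = a_{I(q_1^{(\tau-1)}),j}$ derived afresh at each step rather than carried inductively. That is a genuine, if modest, simplification: it eliminates the bookkeeping about when $I(j^*)$ was set and the need to invoke the second assertion inductively. Both arguments are valid; yours buys a cleaner case analysis at the cost of re-deriving the $a$--$y$ match at every even step rather than reusing it.
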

\begin{proof}
We prove by induction on $\tau=2,\ldots,2c+2$.
We start with the base case when $\tau=2$.
Note that $\rho(q^{(1)})=1=w^{(1)}$ since $\rho(1,1,1)=1$. 
We also have $\smash{I(q^{(1)}_1)=1}$.
Assume for a contradiction  
$$
(a_{1,1},\ldots,a_{1,d})\ne (y_{1,1},\ldots,y_{1,d}).
$$
Then it follows from the error correction guarantee of Reed-Solomon code that
  at least $d$ entries of
$$
\RS_{d,2d}(a_{1,1},\ldots,a_{1,d})\quad\text{and}\quad
\RS_{d,2d}(y_{1,1},\ldots,y_{1,d})
$$
are different.
This implies that $\D_1$ is inconsistent with $h_A$ on at least 
  $d$ points and thus,
$$
\ell_{\D_1}(h_A)\ge \frac{d}{3\alpha d}\ge \frac{1}{2\alpha}>\eps,
$$
a contradiction with (\ref{eq:hehe5}).
So $(a_{1,1},\ldots,a_{1,d})= (y_{1,1},\ldots,y_{1,d})$.
To prove $\rho(q^{(2)})=w^{(2)}$, note that 
$$q^{(2)}=a_{1,1,1}=y_{1,1,1}=\rho^{-1}(f_A(\rho(1,1,1)))
=\rho^{-1}(f_A(1))=\rho^{-1}\big(w^{(2)}\big).$$
This finishes the proof of the base case.

The induction step is similar. Assume that the statement holds
  for all $2,\ldots,\tau-1$ and we now prove it holds for $\tau\le 2c+2$.
Assume $\tau$ is even; the proof for odd $\tau$ is symmetric.
Consider two cases. If in the $\tau$-th loop of the procedure, $\smash{t=I(q_1^{(\tau-1)})}\ne \nil$,
  then $t<\tau$ and $\smash{q^{(\tau-1)}_1=q^{(t)}_1.}$
Hence 
\begin{equation}\label{eq:hehe6}
y_{q_1^{(\tau-1)},j}=a_{I(q_1^{(\tau- 1)}),j}
\end{equation}
for all $j\in [d]$ follows from the inductive hypothesis.
If in the $\tau$-th loop, $\smash{I(q_1^{(\tau-1)})=\nil}$, then 
    we set $\smash{I(q_1^{(\tau-1)})=\tau-1}$ at the end of this loop and 
    (\ref{eq:hehe6}) follows from 
  an argument similar to the base case using the  
  error correction guarantee of Reed-Solomon code.
So (\ref{eq:hehe5}) holds in both cases.

In both cases, let $t\le \tau-1$ be the final value of $\smash{I(q_1^{(\tau-1)})}$.  Then
$$
q^{(\tau)}=a_{t,q^{(\tau-1)}_2,q^{(\tau-1)}_3}
=y_{q^{(\tau-1)}_1,q^{(\tau-1)}_2,q^{(\tau-1)}_3}
=\rho^{-1}\Big(f_A\big(\rho(q^{(\tau-1)})\big)\Big)=\rho^{-1}\big(w^{(\tau)}\big),
$$
where the last step used the inductive hypothesis  $\rho(q^{(\tau-1)})=w^{(\tau-1)}$.
This finishes the proof.
\end{proof}

\section{Discussion}
\label{sec:dis}
The problem of continual, or lifelong, learning is a major and crucial open challenge for Machine Learning, a key roadblock in the field's quest to transcend the stereotype of individual specialized tasks and make small steps towards brain-like learning: robust, unsupervised, self-motivated, embodied in a sensory-motor apparatus and embedded in the world, where data collection is deliberate and the sum total of the animal's experience is applied to each new task.  There seems to be a consensus in the field that the chief problem in extending the practical successes of Machine Learning in this direction is memory.

We formulated the problem of continual learning as a sequence of $k$ PAC learning tasks followed by a test, and showed a devastating lower bound for memory: the memory requirements for solving the problem are essentially increased from that of individual tasks by a factor of $k$. That is, unless a new approach is discovered --- or the PAC formalism is for some fundamental reason inadequate to model this variant of learning --- continual learning is impossible, with respect to memory requirements, in the worst case.

It has been argued that worst-case lower bounds do not always predict the difficulty of making practical progress in computational problems, and that this is especially true in the context of Machine Learning.  However, such lower bounds have a way of identifying the aspects of the problem that must be attended to in order to make progress, and pointing to alternative formulations that are more promising.  We feel that a comprehensive understanding of the theoretical difficulties of continual learning is a prerequisite for making progress in this important front.

Our multi-pass approach and the ensuing upper and lower bounds were admittedly inspired by streaming --- an algorithmic domain of a very different nature than continual learning.  However, it does provide a nice demonstration of the power of improper learning.  
We believe that the multi-pass MWU-based algorithm developed here may point the way to new empirical approaches to continual learning, perhaps involving the periodic replay of past data, as well as competition between several evolving variants of the learning device mediated by boosting.  Naturally, experiments will be needed to explore this direction.

\clearpage
\newpage
\bibliographystyle{alpha}
\bibliography{ref}

\newpage
\appendix
\section{Missing proof from Section \ref{sec:pre}}
\label{sec:app-pre}
\begin{proof}[Proof of Fact \ref{fact:mutual-general}]
One has
\begin{align*}
    \sum_{i=1}^{n}I(A_i; B) =&~ \sum_{i=1}^{n}H(A_i) - H(A_i | B) \\
    \leq &~ \sum_{i=1}^{n}H(A_i) - H(A_i | A_1, \ldots, A_{i-1}, B)\\
    = &~ \sum_{i=1}^{n}H(A_i) - H(A_i | A_1, \ldots, A_{i-1})\\
    &~+ \sum_{i=1}^{n} H(A_i | A_1, \ldots, A_{i-1}) - H(A_i | A_1, \ldots, A_{i-1}, B)\\
    = &~\sum_{i=1}^{n}H(A_i) - H(A_1, \ldots, A_{n}) + \sum_{i=1}^{n}I(A_i; B | A_1, \ldots, A_{i-1})\\
    = &~\sum_{i=1}^{n}H(A_i) - H(A_1, \ldots, A_{n}) + I(A_1, \ldots, A_n; B).
    \end{align*}
This concludes the proof.
\end{proof}

\section{Missing proof from Section \ref{sec:algo}}
\label{sec:algo-app}

First, we provide the proof of Lemma \ref{lem:estimate-quantile}.
\begin{proof}[Proof of Lemma \ref{lem:estimate-quantile}]
Fix an index $i \in [k]$ and a round $t \in [c]$, we prove the first and second claim by induction.
The base case $\tau = 0$ holds trivially and suppose it continues to hold up to time $\tau-1$, then
\begin{align*}
    \E[|\bar{S}_{i, t, \tau}^{q}|] = \E[|S_{i,t, \tau}^{q} \cap \X_{i, t, \tau}|] = M_1\cdot \Pr_{(x, y)\sim \D_i}[x\in \X_{i, t, \tau}] \geq (1 - (1 +\gamma)\eps_t)^{t-1}M_1 \geq \frac{3}{4}M_1,
\end{align*}
where the first step follows from the definition of $\bar{S}_{i, t, \tau}^{q}$, the second step follows from the linearity of expectation, the third step holds due to the inductive hypothesis, and the last step holds as $(1-(1+\gamma)\eps_t)^{t-1} \geq 1 - (1 +\gamma)\eps_t\cdot (t-1) \geq \frac{3}{4}$.

Therefore, by Chernoff bound, one has
\begin{align}
    \Pr_{S^{q}_{i, t, \tau}\sim \D_{i}^{M_1}}[|S^{q}_{i, t, \tau} \cap \X_{i, t, \tau}| < M_1/2] \leq \exp(-M_1/36) \leq \frac{\delta}{60kc^2} \label{eq:q3}
\end{align}
Now we condition on $|\bar{S}_{i, t, \tau}^{q}| = |S^{q}_{i, t, \tau} \cap \X_{i, t, \tau}| \geq M_1/2$. For any $\lambda \in [0,1]$, define the exact $\lambda$-quantile of $\{(x, \exp(\eta\sum_{\nu=1}^{\tau}\mathbf{1}[h_{\nu}(x)\neq y]))\}_{(x, y)\sim \D_{i} | x\in \X_{i, t, \tau}}$ to be $(x_{i, t, \tau, \lambda}, q_{i, t, \tau, \lambda})$.
The {\sc EstimateQuantile} returns the $\eps_t$-quantile of the empirical sample, then we have that
\begin{align}
    \Pr\left[(x_{i, t, \tau}, \hat{q}_{i,t, \tau}) \succeq (x_{i, t, \tau, (1+\gamma)\eps_t}, q_{i, t, \tau, (1+\gamma)\eps_t})\right] \leq \exp(-\eps_t\gamma^2 M_1/24) \leq   \frac{\delta}{60kc^2}.\label{eq:q4}
\end{align}
where the second step holds by Chernoff bound the fact that tie breaking only affects $o(\eps_0/c^3)$ quantile.
Similarly, 
\begin{align}
    \Pr\left[(x_{i, t, \tau}, \hat{q}_{i,t, \tau}) \preceq (x_{i, t, \tau, (1-\gamma)\eps_t}, q_{i, t, \tau, (1-\gamma)\eps_t})\right] \leq \exp(-\eps_0\gamma^2 M_1/6) \leq   \frac{\delta}{60kc^2} \label{eq:q5}.
\end{align}
Combining Eq.~\eqref{eq:q4}, Eq.~\eqref{eq:q5} and an union bound, we can conclude the proof the first claim. Note that 
\[
\X_{i, t, \tau} \backslash \X_{i, t, \tau+1} = \left\{x \in \X_{i, t, \tau}: (x, \exp(\eta\sum_{\tau=1}^{\tau} \mathbf{1}\{h_{\tau}(x)\neq y\})) \succeq (x_{i, t, \tau}, \hat{q}_{i, t, \tau}) \right\},
\]
and therefore, 
\[
P_{i, t, \tau+1} \in \left[(1-(1+\gamma)\eps_t)\cdot P_{i, t, \tau}, (1-(1-\gamma)\eps_t) \cdot P_{i, t,\tau}\right].
\]
Hence completing the proof of second claim.
\end{proof}

We then prove
\begin{proof}[Proof of Lemma \ref{lem:hier}]
The first claim holds due to definition and therefore we focus on the second claim.
We prove by induction on $\tau$. 
Fix an index $i \in [k]$, the case of $\tau = 1$ holds trivially, as $\X_{i, 1, 1} = \cdots = \X_{i,t, 1} = \X$ by definition. 
Suppose the induction holds up to time $\tau-1$, then we prove $\X_{i, t + 1, \tau} \subseteq \X_{i, t, \tau}$ holds for any $t \in [\tau: c-1]$.
By definition, we have (i) $\X_{i, t+1, \tau} \subseteq \X_{i, t+1, \tau-1}$ and by induction, (ii) $\X_{i, t+1, \tau-1} \subseteq \X_{i, t, \tau-1}$.
Condition on the event of Lemma \ref{lem:estimate-quantile}, it suffices to prove for any data point $x \in \X_{i, t+1, \tau}$, it does not belong to the top $(1+\gamma)\eps_{t}$-quantile of $\X_{i, t, \tau-1}$.

First of all, we know data points in $\X_{i, t+1, \tau-1} \backslash\X_{i,t+1, \tau}$ has decent amount of probability, i.e.
\begin{align*}
    \Pr_{(x, y) \sim \D_i}[x \in \X_{i, t+1, \tau-1} \backslash\X_{i,t+1, \tau}] \geq (1-\gamma)\eps_{t+1} \cdot P_{i, t+1, \tau-1}.
\end{align*}
Combining (i) and (ii), these data points belong to $\X_{i, t,\tau-1}$, i.e.,
\[
\X_{i, t+1, \tau-1} \backslash\X_{i,t+1, \tau} \subseteq \X_{i, t,\tau-1}.
\]
Hence it suffices to prove data points in $\X_{i, t+1, \tau-1} \backslash\X_{i,t+1, \tau}$ take up at least $(1 + \gamma)\eps_{t}$ portion of $\X_{i, t, \tau-1}$ (again, assuming the event of Lemma \ref{lem:estimate-quantile} holds), i.e.,
\begin{align}
    (1-\gamma)\eps_{t+1} \cdot P_{i, t+1, \tau-1} \geq (1+\gamma)\eps_{t} P_{i, t, \tau-1}. \label{eq:q10}
\end{align}
This is true as
\begin{align*}
    \frac{(1 - \gamma)\eps_{t+1}}{(1+ \gamma) \eps_{t}} \geq \left(\frac{1-(1-\gamma)\eps_{t}}{1-(1+\gamma)\eps_{t+1}}\right)^{c} \geq \left(\frac{1-(1-\gamma)\eps_{t}}{1-(1+\gamma)\eps_{t+1}}\right)^{\tau-1} \geq \frac{P_{i, t, \tau-1}}{P_{i, t+1, \tau-1}},
\end{align*}
where the first step holds due to the choice of parameter (see Lemma \ref{lem:tech-algo}), the third step holds due to Lemma \ref{lem:estimate-quantile}.
Hence, we have proved Eq.~\eqref{eq:q10} and concludes the proof.
\end{proof}

Next, we prove
\begin{proof}[Proof of Lemma \ref{lem:estimate-weight}]

For any $i\in [k], t\in [c]$, according to the Definition \ref{def:trun}:
\begin{align*}
    w_{i, t} = &~ \sum_{x\in \X}\mathbf{1}[x \in \X_{i, t, t-1}]\cdot \mu_{\D_i}(x) \cdot \min\left\{\exp(\eta \sum_{\tau=1}^{t-1}\mathbf{1}[h_\tau(x) \neq y]), \hat{q}_{i, t,t-1}\right\}\\
    = &~ \E_{x\sim \D_{i}}\left[\mathbf{1}[x \in \X_{i, t, t-1}]\cdot\min\left\{\exp(\eta \sum_{\tau=1}^{t-1}\mathbf{1}[h_\tau(x) \neq y]), \hat{q}_{i, t, t-1}\right\}\right].
\end{align*}
Note each term 
\begin{align}
\mathbf{1}[x \in \X_{i, t, t-1}]\cdot \min\left\{\exp(\eta \sum_{\tau=1}^{t-1}\mathbf{1}[h_\tau(x) \neq y]), \hat{q}_{i, t,t-1}\right\} \in [0, \hat{q}_{i, t, t-1}], \label{eq:w2}
\end{align} 
condition on the event of Lemma \ref{lem:estimate-quantile}, the expectation satisfies
\begin{align}
&~ \E_{x\sim \D_{i}}\left[\mathbf{1}[x \in \X_{i, t, t-1}]\cdot\min\left\{\exp(\eta \sum_{\tau=1}^{t-1}\mathbf{1}[h_\tau(x) \neq y]), \hat{q}_{i, t, t-1}\right\}\right]\notag \\
\geq &~ 
\E_{x\sim \D_{i}}\left[\mathbf{1}[x \in \X_{i, t, t-1} \backslash \X_{i, t,t}]\cdot\min\left\{\exp(\eta \sum_{\tau=1}^{t-1}\mathbf{1}[h_\tau(x) \neq y]), \hat{q}_{i, t, t-1}\right\}\right] \notag \\
= &~ \E_{x\sim \D_{i}}\mathbf{1}[x \in \X_{i, t,t-1} \backslash \X_{i,t,t}]\cdot\hat{q}_{i, t,t-1}\notag  \\
= &~ (P_{i, t, t-1} - P_{i, t, t}) \cdot \hat{q}_{i, t, t-1}\notag \\
\geq &~ (1-\gamma)\eps_t P_{i, t, t-1} \hat{q}_{i, t, t-1} \geq \frac{1}{2}\eps_t \hat{q}_{i, t,t-1}.\label{eq:w3}
\end{align}
The second step holds as every $x\in \X_{i, t,t-1} \backslash \X_{i, t,t}$ satisfies $\exp(\eta \sum_{\tau=1}^{t-1}\mathbf{1}[h_\tau(x) \neq y]) \geq \hat{q}_{i, t,t-1}$ (see the definition at Eq.~\eqref{eq:xit}), the third step holds due to the definition of $P_{i, t,t-1}$ and $P_{i, t,t}$, the fourth step holds due to Lemma \ref{lem:estimate-quantile} and the last holds as $(1-\gamma)P_{i, t, t-1} \geq (1-\gamma)(1-(1+\gamma)\eps_t)^{t-1} \geq \frac{1}{2}$.

Recall in the procedure of {\sc EstimateWeight}, we draw $M_2 = \Omega(\log(kc/\delta)/\eps_0\alpha^2)$ samples from $\D_i$ to estimate $w_{i, t}$, hence we have
\begin{align*}
&~ \Pr\left[|\hat{w}_{i, t} - w_{i, t}| \geq \frac{\alpha}{8}w_{i, t}\right]\\
= &~ \Pr\left[\left|\frac{1}{|M_2|}\sum_{(x, y)\in S_{i, t}^{w}\cap \X_{i, t,t-1}} \min\{\exp(\eta \sum_{\tau=1}^{t-1}\mathbf{1}[h_\tau(x) \neq y]), \hat{q}_{i, t, t-1}\} - w_{i, t}\right| \geq \frac{\alpha}{8}w_{i, t}\right]\\
\leq &~ 2\exp(-M_2 \alpha^2 \eps_t / 384) \leq \frac{\delta}{20kc}.
\end{align*}
where the first step holds due to the definition of $\bar{S}_{i, t}^{w}$, the second step holds due to Chernoff bound, Eq.~\eqref{eq:w2} and Eq.~\eqref{eq:w3}, the last step follows from the choice $M_2$.

Using an union bound over all $i \in [k]$, we further have
\begin{align*}
    \hat{p}_{i, t} = \frac{\hat{w}_{i, t}}{\sum_{i'=1}^{k}\hat{w}_{i', t}} \in (1 \pm \frac{\alpha}{3}) \frac{w_{i, t}}{\sum_{i'=1}^{k}w_{i', t}} = (1 \pm \frac{\alpha}{3}) p_{i, t}
\end{align*}
holds with probability at least $1 - \frac{\delta}{20c}$.
We finish the proof by applying an union bound over $t\in [c]$.
\end{proof}

We next analyse the {\sc TruncatedRejectionSampling} procedure.

\begin{proof}[Proof of Lemma \ref{lem:truncated-rejection-sampling}]
The first claim follows directly from the {\sc TruncatedRejectionSampling} procedure. In particular, it rejects element from $\X\backslash\X_{i, t,t-1}$, and for element $ x\in \X_{i, t,t-1}$, it accepts with probability $\min\{\exp(\eta\sum_{\nu=1}^{t-1}\mathbf{1}[h_{\nu}(x)\neq y]), \hat{q}_{i, t,t-1}\}/\hat{q}_{i, t,t-1}$.

For the second claim, condition on the event of Lemma \ref{lem:estimate-quantile}, we first bound the expected overhead of {\sc TruncatedRejectionSampling}.
Note {\sc TruncatedRejectionSampling} would not reject element in $X_{i, t,t-1}\backslash X_{i, t,t}$ and
\[
\Pr_{x\sim \D_{i}}[x \in \X_{i, t,t}\backslash \X_{i, t,t-1}] \geq (1-\gamma)\eps_t P_{i, t, t-1} \geq (1-\gamma)(1-(1+\gamma)\eps_t)^{t-1}\eps_t \geq \frac{1}{2}\eps_0 ,
\]
where the first two steps hold due to Lemma \ref{lem:estimate-quantile} and the last step holds due to the choice of $\eps_t$. Hence, the overhead from rejection sampling is at most $2/\eps_0$ (in expectation).

Fix a round $t\in [c]$, for any $i \in [k]$, the expected number of samples drawn from $\hat{D}_{i, t}$ equals $n_{i, t} := \frac{\hat{w}_{i, t}}{\sum_{i'\leq i}\hat{w}_{i', t}}\cdot N$ and it satisfies
\begin{align*}
    \sum_{i=1}^{k}n_{i, t} = &~  N \sum_{i=1}^{k} \frac{\hat{w}_{i, t}}{\sum_{i'\leq i}\hat{w}_{i', t}} \leq  N + N\sum_{i=2}^{k} \log \frac{\sum_{i'\leq i}\hat{w}_{i', t}}{\sum_{i'\leq i -1}\hat{w}_{i', t}}\\
    \leq &~ \log \frac{\sum_{i\in [k]}\hat{w}_{i, t}}{\hat{w}_{1, t}} \leq  O(N \log(k/\eps))
\end{align*}
where the first step follows from the definition, the second step follows from $\frac{a-b}{a} \leq \log \frac{a}{b}$, the last step holds as $\hat{w}_{1, t} \geq P_{1, t, t-1} \geq \frac{1}{2}$ (Lemma \ref{lem:estimate-quantile}) and 
\[
\hat{w}_{i, t} \leq \exp(\eta \sum_{\tau=1}^{t-1}\mathbf{1}[h_\tau(x) \neq y]) \leq \exp((c-1)\cdot \eta) = O\left(\frac{k^2}{\eps^2}\right)
\]
holds for any $i \in [k]$.

Via a simple application of Chernoff bound, one can prove with probability at least $1 - \frac{\delta}{20c}$, (1) the total number of sample drawn from the truncated distribution $\{\D_{i, t, \trun}\}_{i \in [k]}$ is at most $O(N \log(kdc/\eps\delta))$, (2) the overhead of {\sc TruncatedRejectionSampling} procedure never exceeds $O(c\log(kdc/\eps\delta)/\eps)$.
Using an union bound, the total number of sample being drawn for the training set $\{S_{t}\}_{t\in [c]}$ are bounded by
\[
O(N \log(kdc/\eps\delta)) \cdot O(c\log(kdc/\eps\delta)/\eps) \cdot c \leq O\left(\frac{dc^2\log^3(kdc/\eps\delta)}{\eps\alpha}\right).
\]
We conclude the proof here.
\end{proof}

Next, we prove Lemma \ref{lem:training-set} and Lemma \ref{lem:acc}.

\begin{proof}[Proof of Lemma \ref{lem:training-set}]
For any $t\in [c]$, and for any training sample preserved at the end, it comes from the $i$-th task with probability $\frac{\hat{w}_{i, t}}{\sum_{i'\leq i}\hat{w}_{i', t}}\cdot \prod_{i'\geq i+1}\frac{\sum_{i''< i'}\hat{w}_{i'', t}}{\sum_{i''\leq i}\hat{w}_{i'', t}} = \frac{\hat{w}_{i, t}}{\sum_{i'\in [k]}\hat{w}_{i',t}} = \hat{p}_{i, t}$. For any training sample comes from the $i$-th task, by Lemma \ref{lem:truncated-rejection-sampling}, it follows from the distribution $\D_{i, t, \trun}$.
\end{proof}

\begin{proof}[Proof of \ref{lem:acc}]
The algorithm draws $N = O(\frac{d + \log(kc/\delta)}{\alpha})$ samples from $\hat{\D}_{t, \trun}$ and runs ERM on it. Due to the realizable assumption, with probability at least $1 - \frac{\delta}{20c}$, the output hypothesis satisfies
\begin{align*}
\ell_{\D_{t, \trun}}(h_t) \leq \ell_{\hat{\D}_{t, \trun}}(h_t) + \frac{\alpha}{3} \leq \frac{\alpha}{2},
\end{align*}
where the first step follows from Lemma \ref{lem:distribution-tv}, the seconds step holds due to the VC theory (see Lemma \ref{lem:learning-erm})
\end{proof}

\begin{lemma}[Technical lemma]
\label{lem:tech-algo}
Let $\eps \in (0, 1/10), c\geq 1$, $\gamma = \frac{1}{10c^2}$, $\eps_t = (1 + \frac{1}{c})^{t}\cdot \frac{\eps}{20c}$, then
\begin{align*}
    \frac{(1 - \gamma)\eps_{t+1}}{(1+ \gamma) \eps_{t}} \geq \left(\frac{1-(1-\gamma)\eps_{t}}{1-(1+\gamma)\eps_{t+1}}\right)^{c}.
\end{align*}
\end{lemma}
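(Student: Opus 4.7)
} This is a purely algebraic inequality, so the plan is to take logarithms of both sides and compare via first-order Taylor estimates. After taking logs, the inequality becomes
\[
\log(1+1/c) + \log\frac{1-\gamma}{1+\gamma} \;\geq\; c\,\log\frac{1-(1-\gamma)\eps_t}{1-(1+\gamma)\eps_{t+1}}.
\]
The strategy will be to show that the left-hand side has a positive leading term of order $1/c$, while the right-hand side is only of order $\eps/c$, so the inequality holds comfortably under the hypothesis $\eps \leq 1/10$ (and $\gamma = 1/(10c^2)$, which is much smaller than $1/c$).

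For the left-hand side, I would lower-bound using $\log(1+1/c) \geq 1/c - 1/(2c^2)$ together with the elementary estimate $\log((1-\gamma)/(1+\gamma)) \geq -3\gamma$ (from $\log(1-\gamma)\geq -\gamma-\gamma^2$ and $\log(1+\gamma)\leq \gamma$). This yields $\mathrm{LHS} \geq 1/c - 1/(2c^2) - 3\gamma = 1/c - O(1/c^2)$.

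For the right-hand side, I would write the log ratio as $\log(1+u)$ with
\[
u \;=\; \frac{(1+\gamma)\eps_{t+1} - (1-\gamma)\eps_t}{1-(1+\gamma)\eps_{t+1}},
\]
and use $\log(1+u)\le u$. Plugging in $\eps_{t+1} = (1+1/c)\eps_t$, the numerator collapses to $\eps_t\bigl(1/c + 2\gamma + \gamma/c\bigr)$, and the denominator is at least, say, $1/2$ because $\eps_{t+1} \leq e\eps/(20c)$ is small in the regime of interest (when $t+1\leq c+1$, which is the range in which the lemma is invoked inside Lemma~\ref{lem:hier}). Multiplying through by $c$ then gives $\mathrm{RHS} \leq 2\eps_t(1 + 2c\gamma + \gamma) = O(\eps/c)$.

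Combining, the gap between the two bounds is $1/c - O(1/c^2) - O(\eps/c)$, which is positive for every $c\geq 1$ as long as $\eps$ is a sufficiently small constant; the hypothesis $\eps \leq 1/10$ supplies this with room to spare. The only delicate point will be tracking constants at the worst case $c=1$, where the $O(1/c^2)$ and $O(\eps/c)$ slack terms are largest; but a short numerical check shows the LHS lower bound is at least $\approx 0.2$ there while the RHS upper bound is below $0.05$, so there is no actual obstacle. I expect the entire proof to amount to two short displays plus this case check.
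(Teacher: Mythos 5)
Your proposal is correct and follows essentially the same strategy as the paper's proof: both identify that the left side has multiplicative slack of order $1/c$ (from $\log(1+1/c) \approx 1/c$, with the $\gamma$-correction only of order $1/c^2$), while the right side has slack only of order $\eps_t = O(\eps/c)$, so the inequality holds with room to spare under $\eps \le 1/10$. The paper makes this comparison directly through a common intermediary $1 + 2\eps/c$ (bounding $(1+\eps/c^2)^c \le \exp(\eps/c) \le 1+2\eps/c \le$ LHS), whereas you pass to logarithms first; this is a cosmetic rather than substantive difference, and you correctly flag the implicit restriction that $t \le c$ (so $\eps_t$ stays small), which the paper also uses but leaves unstated.
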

\begin{proof}
The RHS satisfies
\begin{align*}
    \left(\frac{1-(1-\gamma)\eps_{t}}{1-(1+\gamma)\eps_{t+1}}\right)^{c} =&~ \left(1 + \frac{\gamma \eps_{t+1} + \gamma \eps_{t} + \eps_{t+1} - \eps_t}{1-(1+\gamma)\eps_{t}}\right)^{c}\\
    \leq &~ \left(1 + \frac{\eps}{c^2}\right)^{c}\\
    \leq &~ \exp(\frac{\eps}{c}) \leq 1 + \frac{2\eps}{c}
\end{align*}
The second step holds as $\gamma \eps_{t+1} \leq \frac{\eps}{30c^3}, \gamma \eps_{t} \leq \frac{\eps}{30c^3}$, $\eps_{t+1} - \eps_{t} \leq \frac{\eps}{3c^2}$ and $1 - (1+\gamma)\eps_{t+1}\geq 1/2$, the second step follows from $1+x\leq \exp(x)$, the third step follows from $\exp(x) \leq 1 + 2x$ when $x \in (0, 1/10)$.

Meanwhile, the LHS satisfies
\begin{align*}
    \frac{(1 - \gamma)\eps_{t+1}}{(1+ \gamma) \eps_{t}} = (1 + \frac{1}{c})\cdot\frac{10c^2 - 1}{10c^2 + 1} \geq 1 + \frac{2\eps}{c}.
\end{align*}
We complete the proof.
\end{proof}

\end{document}